\documentclass[11pt]{article}

\usepackage{arxiv}

\usepackage{natbib}

\usepackage[utf8]{inputenc} 
\usepackage[T1]{fontenc}    
\usepackage{hyperref}       
\usepackage{url}            
\usepackage{booktabs}       
\usepackage{amsfonts}       
\usepackage{nicefrac}       
\usepackage{microtype}      
\usepackage{xcolor}         
\usepackage{graphicx}
\usepackage{amsmath,amssymb,amsthm}
\usepackage{tabularx}
\usepackage{multirow}
\usepackage[dvipsnames]{xcolor}
\usepackage{colortbl}
\usepackage{array,colortbl}
\usepackage{rotating}
\usepackage{url}

\DeclareMathOperator{\clr}{CLR}

\DeclareMathOperator{\hCov}{h-Cov}
\DeclareMathOperator{\sCov}{s-Cov}
\DeclareMathOperator{\amp}{Amp}

\theoremstyle{remark}
\newtheorem{remark}{Remark}

\newtheorem{proposition}{Proposition}

\title{Geometry-Aware Bayesian Quantification via Compositional Data Analysis}

\author{
 Alejandro Moreo \\
  Istituto di Scienza e Tecnologie dell'Informazione\\
  Consiglio Nazionale delle Ricerche\\
  Pisa, Italy \\
  \texttt{alejandro.moreo@isti.cnr.it} \\
   \And
 Pablo González, Juan José del Coz \\
  Artificial Intelligence Center\\
  University of Oviedo\\
  Asturias, Spain \\
  \texttt{\{gonzalezgpablo,juanjo\}@uniovi.es} \\
}

\begin{document}

\maketitle

\begin{abstract}
  Accurately estimating the unknown target label distribution is the critical first step for adapting to label shift. This task, widely known as quantification or class prevalence estimation, has recently seen significant advances through continuous KDE-based methods which model the density of multiclass classifier posteriors. Posterior vectors might be regarded as compositional data, since they lie on the probability simplex.
  However, existing KDE-based quantifiers typically rely on Euclidean Gaussian kernels, which ignore simplex geometry and incorrectly assign probability mass outside its boundaries.
  We introduce a geometry-aware KDE model for multiclass quantification based on log-ratio representations and Aitchison geometry, together with a shrinkage regularization that improves robustness near the simplex boundary. Combined with a maximum-likelihood interpretation of KDE-based quantification, we derive both point-estimation and Bayesian inference procedures for class prevalences.
  Experiments on 42 datasets across tabular, text, and image domains show that the proposed method is competitive with state-of-the-art quantifiers, often improving over standard KDE-based baselines, while also yielding strong results among Bayesian quantification methods.
  
\end{abstract}

\section{Introduction}

Machine learning models deployed in the wild frequently encounter label shift \citep{Storkey:2009lp}. Adapting to this shift requires accurately estimating the unknown target label distribution. This problem, called quantification \citep{Forman:2005fk}
or class prevalence estimation \citep{Lipton:2018fj}, is a critical standalone task in domains where aggregate population trends matter more than individual predictions. Furthermore, it serves as the essential foundation for label shift adaptation, as accurate prevalence estimates are strictly required to compute importance weights for classifier retraining \citep{Alexandari:2020dn}.

Many prominent methods, such as Maximum Likelihood for Label Shift (MLLS) \citep{Saerens:2002uq} and Black-Box Shift Estimation (BBSE) \citep{Lipton:2018fj}, operate directly on the posterior probabilities produced by a base classifier. In multiclass quantification, modeling the continuous distribution of these posteriors using density-based methods like KDEy \citep{kdey2025} has emerged as a powerful alternative that naturally preserves the dependence structure across classes. Consequently, KDEy is increasingly adopted as a robust solution in a variety of application domains, such as fairness monitoring in IR \cite{jaenich2026quantifying}, graph-related applications \cite{damke2025distribution,micheli2025efficient}, classifier accuracy prediction \cite{volpi2025leap}, and medical imaging for healthcare \cite{godau2025navigating}.

However, modeling classifier posteriors via continuous density estimation introduces a fundamental geometric mismatch. Posterior probability vectors lie on the probability simplex and therefore constitute compositional data. Existing KDE-based quantifiers rely on Gaussian kernels in Euclidean space, which ignore this relative structure and inevitably assign probability mass outside the simplex boundaries. This mismatch becomes particularly problematic near the boundary of the simplex, where posterior probabilities heavily concentrate in many real-world settings.

In this work, we revisit multiclass quantification from the perspective of compositional data analysis (CoDA) \citep{aitchison1982statistical} to resolve this bottleneck. We introduce a geometry-aware kernel based on log-ratio transformations, enabling continuous density estimation in Aitchison geometry. While this representation is more consistent with the underlying simplex structure, naive log-ratio modeling can be unstable near the boundary. To address this issue, we propose a shrinkage-based regularization that improves robustness. Building on this representation, we formulate quantification as a mixture density estimation problem that naturally admits both point estimation and Bayesian inference over class prevalences.
Across 42 datasets spanning tabular, textual, and image domains, the proposed method achieves performance that is competitive with state-of-the-art approaches. It frequently improves over standard KDE-based quantifiers, while providing a principled geometric formulation of continuous density estimation on the simplex.

\section{Related Work}

The literature on label shift adaptation and quantification primarily focuses on the setting of label shift (aka prior probability shift) \citep{Storkey:2009lp}. Existing approaches include classify-and-count variants (e.g., BBSE \citep{Lipton:2018fj}, also known in the quantification literature as Adjusted Classify and Count (ACC) \citep{Forman:2005fk} or the confusion matrix approach in \citep{Saerens:2002uq}), distribution matching methods \cite{Gonzalez-Castro:2013fk,Castano:2022oq,Dussap:2023gr},
and maximum likelihood approaches via expectation maximization (e.g., MLLS, often referred to as EMQ in quantification literature) \citep{Saerens:2002uq,Azizzadenesheli:2019qf,Alexandari:2020dn}.
Among these, KDE-based methods such as KDEy have recently emerged as strong performers in the multiclass setting \citep{kdey2025}.

Bootstrap-based approaches have been explored for deriving uncertainty estimates in quantification \citep{Hopkins:2010fk,Daughton:2019ca,tasche2019confidence}. 
Analytical approaches to confidence interval estimation have also been developed for specific estimators \citep{Vaz:2019eu,denham2021gain}. 
More recently, Bayesian quantification methods have been proposed for standard quantifiers \citep{fiksel2022generalized}; examples include BayesianACC\footnote{Introduced as BayesianCC by Ziegler and Czyż \citep{bayesianCC.2024}. We use the term Bayesian\underline{A}CC to maintain consistent terminology and reflect its mathematical equivalence to ACC (BBSE) rather than to CC (which often denotes plain ``Classify and Count'').} for BBSE/ACC,  MAPLS \citep{ye2024label,hu2025bayesian} for MLLS/EMQ, and Precise Quantifier (PQ) \citep{igiraneza2025estimating} for HDy \cite{Gonzalez-Castro:2013fk}.
However, the multinomial model over binned probabilities used by PQ (an analogue of HDy’s histogram representation) inherits the limitations of univariate, discretized density models and is thus restricted to binary problems. Such limitations are overcome by continuous multivariate density models based on KDE  \citep{kdey2025}. Nevertheless, no prior work has developed a Bayesian formulation for KDE-based quantifiers; we fill this gap. 

We also note that KDE models based on Gaussian kernels may be inappropriate for compositional data.
Simplex-supported alternatives to Euclidean KDE do exist in the compositional-data literature, most notably Dirichlet-kernel and logistic-normal-kernel estimators \citep{aitchison1985kernel}. Subsequent work has further developed the log-ratio route, in particular through Gaussian kernels defined in isometric log-ratio (ILR) coordinates together with modern bandwidth-selection methods, suggesting that this line is especially natural when the goal is to respect the geometry of compositional data \citep{chacon2011gaussian,martin2006updating}. At the same time, Dirichlet kernels remain theoretically attractive because they are natively supported on the simplex and exhibit favorable boundary behavior \citep{ouimet2022asymptotic}. 
In practice, applying standard compositional kernels directly to classifier outputs is difficult, as confident predictions cause (presumably well-calibrated) posteriors to concentrate heavily at the simplex boundaries. Furthermore, unlike standard Euclidean KDEs, these simplex-aware estimators lack drop-in implementations for ML pipelines. By introducing a shrinkage-regularized log-ratio representation, we bridge these gaps, providing a mathematically principled yet computationally practical solution.




\section{Problem Setup}
\label{sec:setup}

Let 
\(\mathcal{X}\) 
denote the input space and \(\mathcal{Y} = \{1, \dots, K\}\) the set of class labels. We assume a standard prior probability shift setting \citep{Storkey:2009lp}, where training and test data are drawn from distributions \(P_s\) and \(P_t\), respectively, such that
\(P_s(Y) \neq P_t(Y)\), and \(P_s(X \mid Y) = P_t(X \mid Y)\).

We assume access to a probabilistic classifier \(s: \mathcal{X} \to \Delta^{K-1}\), which maps each input \(x \in \mathcal{X}\) to a posterior probability vector \(p = s(x) = (p_1, \dots, p_K)\),
where \(p_k = \hat{P}_s(Y = k \mid x)\). Here, \(\Delta^{K-1}\) denotes the probability simplex \(\Delta^{K-1}=\{ p \in \mathbb{R}^K : p_k \ge 0,\ \sum_{k=1}^K p_k = 1 \}\).

Since \(p = s(X)\) is a deterministic, measurable transformation of \(X\), the class-conditional distributions of posterior probabilities are also invariant across domains, i.e., \(P_s(p \mid Y) = P_t(p \mid Y)\); see Lemma 1 in \cite{Lipton:2018fj}.
This allows us to perform quantification directly in the space of posterior probabilities.

Let \(\mathcal{U} = \{x_1, \dots, x_n\}\) denote an unlabeled sample drawn from \(P_t\), and let \(\mathcal{P} = \{p^{(1)}, \dots, p^{(n)}\}\), with \(p^{(i)} = s(x_i)\), be the corresponding set of posterior probability vectors.
The goal of quantification is to estimate the class prevalence vector
\(\pi = (\pi_1, \dots, \pi_K) \in \Delta^{K-1}\), where \(\pi_k = P_t(Y = k)\).

\begin{remark}
Let \(f_1,\ldots,f_K\) denote class-conditional densities, and consider the mixture model
\(
m_\pi(p)=\sum_{k=1}^K \pi_k f_k(p)\), with
\(\pi \in \Delta^{K-1}.\)
Let \(\mathcal{P}=\{p^{(1)},\ldots,p^{(n)}\}\) be the observed posterior probability vectors in the target sample. 
At the population level, quantification can be viewed as a distribution matching problem in which the target density \(q\) is approximated by the mixture \(m_\pi\). When matching is formalized via the Kullback--Leibler divergence, one obtains
\[
\arg\min_{\pi\in\Delta^{K-1}} D_{\mathrm{KL}}(q \,\|\, m_\pi)
=
\arg\max_{\pi\in\Delta^{K-1}} \mathbb{E}_{q}\!\left[\log m_\pi(p)\right].
\]
Replacing the expectation by its empirical counterpart yields the maximum likelihood estimator \citep{Garg:2020jt}
\begin{equation}
\label{eq:maxlike}
\hat{\pi}
=
\arg\max_{\pi\in\Delta^{K-1}}
\sum_{i=1}^n \log m_\pi\!\left(p^{(i)}\right).
\end{equation}
\end{remark}

Following \citep{kdey2025}, we model the densities \(f_k\) using class-conditional kernel density estimators.

\section{Geometric Mismatch in KDE-Based Quantification}
\label{sec:geometry}

The KDE-based formulation introduced above operates on classifier posterior probabilities \(p \in \Delta^{K-1}\). This is already a substantial simplification with respect to working directly in the original input space \(\mathcal{X}\), and has proven highly effective in multiclass quantification \citep{kdey2025}. However, posterior probability vectors are not ordinary Euclidean data: they are \emph{compositions}, i.e., non-negative vectors constrained to sum to one. As such, they live on the probability simplex and are more naturally studied within the framework of compositional data analysis (CoDA) \citep{aitchison1982statistical}.

The constraint by which posterior vectors \(p = (p_1,\dots,p_K)\) lie in the simplex
induces a non-Euclidean geometry: the components of \(p\) are not independent coordinates, and meaningful comparisons are inherently relative. In particular, compositions are more naturally compared through log-ratios than through Euclidean differences in the ambient space \(\mathbb{R}^K\).
%
Standard KDE-based quantifiers, including KDEy, estimate class-conditional densities of posterior vectors using Gaussian kernels in Euclidean space. This introduces a geometric mismatch for two related reasons. First, Gaussian kernels on the ambient space \(\mathbb{R}^K\) assign non-zero probability mass outside the simplex. Second, Euclidean distance does not respect the relative structure of compositions, and may therefore distort similarity relationships between posterior vectors.
This mismatch is not merely aesthetic. Since the quantifier ultimately estimates prevalences by fitting a mixture model to the empirical distribution of posterior vectors, misspecifying the geometry of the data may lead to distorted density estimates and, consequently, biased or unstable prevalence predictions.

\begin{figure}[bh]
\centering
\includegraphics[width=\textwidth]{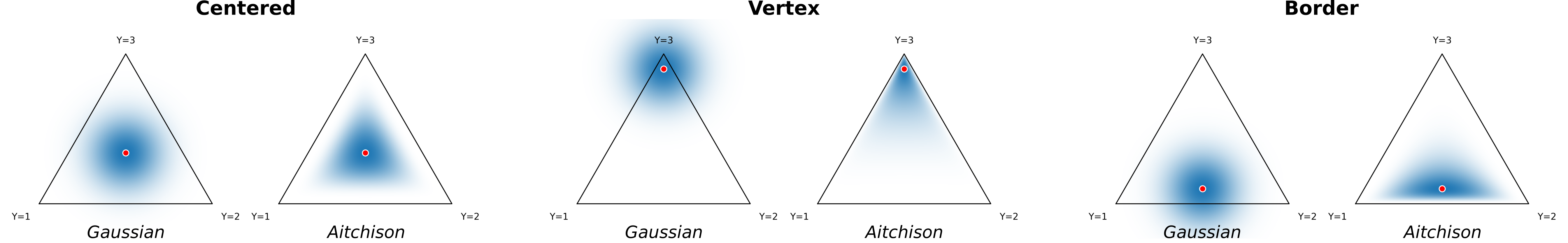}
\caption{Comparison between a standard Gaussian kernel in Euclidean space and a geometry-aware kernel induced through a log-ratio representation. Near the boundary of the simplex, the Gaussian kernel assigns probability mass outside the support, whereas the geometry-aware alternative better respects the simplex structure.}
\label{fig:kernel-mismatch}
\end{figure}

The geometric mismatch is particularly severe near the boundary of the simplex. In many realistic classification problems, posterior vectors tend to concentrate near vertices or edges, corresponding to confident predictions or highly imbalanced uncertainty across classes. This is especially true for neural classifiers, which are known to be overconfident, in which case posterior calibration is required \citep{guo2017calibration}.
In such regions, Euclidean Gaussian kernels place a non-negligible fraction of their mass outside the simplex, precisely where the support constraint is most restrictive.
At the same time, the natural tools from CoDA (namely, log-ratio transformations such as the centered log-ratio (CLR) and isometric log-ratio (ILR) maps) become increasingly sensitive near the boundary, where some coordinates approach zero. This creates a tension: while log-ratio geometry is more appropriate for simplex-valued data, a naive application of CLR/ILR may itself become unstable in the regime where geometric mismatch matters most. Figure~\ref{fig:kernel-mismatch} illustrates this effect.


These observations suggest that density-based quantification should be performed in a geometry-aware representation of the simplex, rather than in the ambient Euclidean space. In the next section, we develop such a representation using log-ratio coordinates. We further show that, although CLR/ILR provide the correct geometric framework, additional regularization is needed to obtain stable density estimates near the simplex boundary.

\section{Shrinkage-Regularized Geometry-Aware KDE}
\label{sec:shrinkage}

Let \(p \in \Delta^{K-1,\circ}\) be a posterior probability vector. We consider log-ratio transformations
\(\phi : \Delta^{K-1,\circ} \to \mathbb{R}^{K-1}\) or to the zero-sum subspace of \(\mathbb{R}^K\), in particular the isometric log-ratio (ILR) and the centered log-ratio (CLR)  maps from CoDA, respectively. These transformations provide Euclidean coordinates consistent with Aitchison geometry, thereby enabling density estimation in a representation that respects the relative structure of compositions.
In practice, both CLR and ILR lead to equivalent distance-based behavior, since ILR is an isometric reparameterization of CLR restricted to the zero-sum subspace. We therefore use CLR in our implementation for computational convenience, while the same construction applies to ILR. From this point onward, we let \(\phi\) denote the CLR transformation, defined as
\[
\phi(p)
=
\left(
\log \frac{p_1}{g(p)},
\ldots,
\log \frac{p_K}{g(p)}
\right),
\qquad
g(p) = \left(\prod_{k=1}^K p_k\right)^{1/K},
\]
where \(g(p)\) denotes the geometric mean of the components of \(p\). 

Despite their geometric appeal, log-ratio coordinates are delicate near the boundary of the simplex. Whenever one or more components of \(p\) approach zero, the corresponding log-ratios become large in magnitude, which may induce unstable density estimates and highly distorted local geometries. This is particularly problematic in quantification, where posterior probabilities may legitimately attain zero or near-zero values.
To mitigate this issue, we introduce a shrinkage transformation that moves each composition away from the boundary and towards the simplex barycenter \(u = (\tfrac{1}{K}, \dots, \tfrac{1}{K})\):
\[
T_\lambda(p) = (1-\lambda)p + \lambda u,
\qquad
\lambda \in [0,1).
\]
When \(\lambda = 0\), the original composition is recovered. As \(\lambda\) increases, \(T_\lambda(p)\) contracts the simplex towards its center, ensuring that all transformed compositions remain bounded away from zero.

Shrinkage has two complementary effects. First, it prevents numerical instability in the log-ratio map. Second, it regularizes the geometry of the transformed space near the simplex boundary, where the nonlinear effects of \(\phi\) are most severe. Importantly, this does not simply recover the original Euclidean KDE. Rather, shrinkage moves the problem into a regime where the log-ratio representation becomes locally well behaved, while preserving the compositional structure of the data. The goal is not to discard simplex geometry, but to regularize its most unstable regime.

Because shrinkage contracts the effective simplex towards the barycenter, the bandwidth must be adjusted accordingly to preserve a comparable amount of smoothing across values of \(\lambda\). Under the affine shrinkage map \(T_\lambda\), distances in the simplex hyperplane are uniformly rescaled by the factor \(1-\lambda\). Therefore, if \(h\) denotes the bandwidth before shrinkage, the natural effective bandwidth becomes
\(
h_{\mathrm{eff}} = (1-\lambda)h
\). 
More generally, since the intrinsic dimension of the simplex is \(K-1\), the corresponding density height rescales by the Jacobian factor \((1-\lambda)^{-(K-1)}\). A full derivation is provided in Appendix~\ref{app:bandwidth-scaling}.

\begin{figure}[tb]
    \centering
    \includegraphics[width=\linewidth]{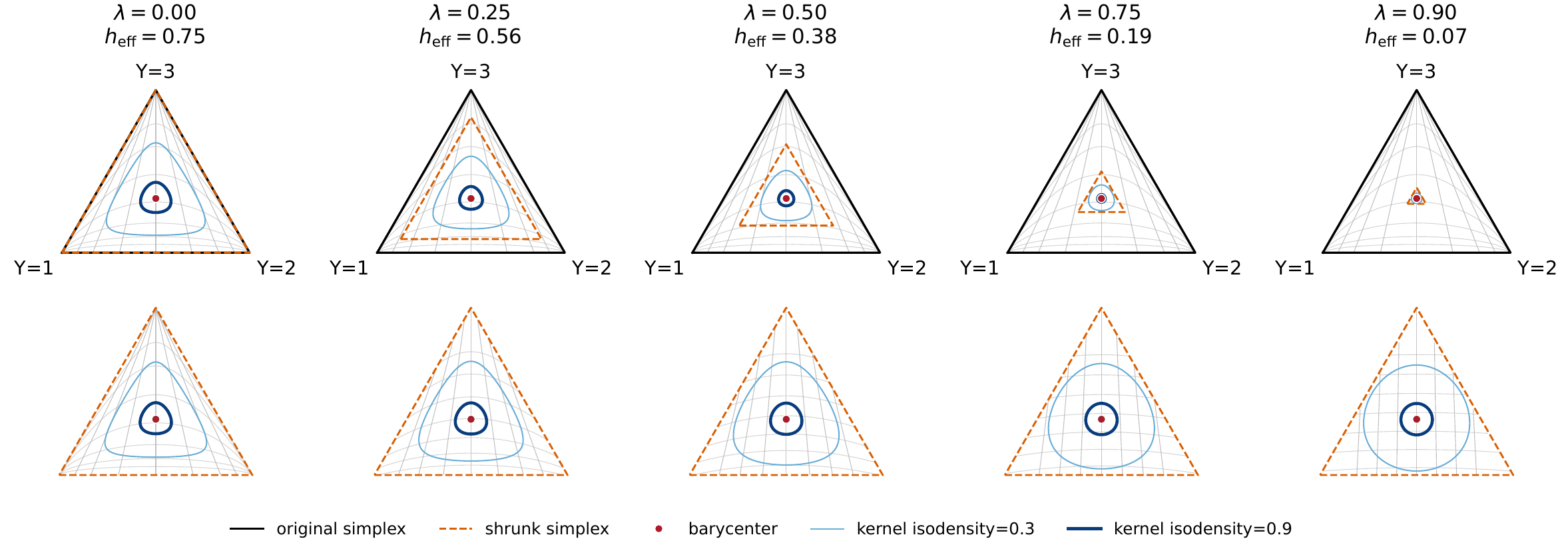}
    \caption{Effect of shrinkage on a geometry-aware kernel centered in the simplex for $h=0.75$. As \(\lambda\) increases, the effective simplex contracts towards the barycenter. The first row show the shrinkage from the point of view of the original simplex, while the second row is a zoom in the effective shrunk region. By adjusting $h_\mathrm{eff}$, the induced kernel becomes locally Euclidean for high values of $\lambda$.
    }
    \label{fig:shrinkage}
\end{figure}

Figure~\ref{fig:shrinkage} illustrates the combined effect of shrinkage and bandwidth rescaling on an Aitchison-centered kernel. 
The regularizing effect of shrinkage can be formalized by studying the behavior of \(\phi\) near the barycenter.

\begin{proposition}
\label{prop:clr-linearization}
Let \(u = (\tfrac{1}{K},\dots,\tfrac{1}{K})\), let \(\phi\) denote the CLR map, and write \(T_\lambda(p) = u + (1-\lambda)(p-u)\). Then, for any fixed \(p \in \Delta^{K-1,\circ}\),
\[
\phi(T_\lambda(p))
=
K(1-\lambda)(p-u)
+
O((1-\lambda)^2)
\qquad \text{as } \lambda \to 1.
\]
\end{proposition}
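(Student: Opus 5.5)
We set \(\varepsilon = 1-\lambda\) and \(v = p-u\), so that \(T_\lambda(p) = u + \varepsilon v\) with \(\sum_k v_k = 0\). The proof is a first-order Taylor expansion of \(\phi\) around the barycenter. The natural first step is to rewrite the CLR map in a form that makes the expansion transparent. Since \(\log g(q) = \frac{1}{K}\sum_j \log q_j\), the \(k\)-th component of \(\phi(q)\) is \(\log q_k - \frac{1}{K}\sum_j \log q_j\). Evaluating at \(q = u+\varepsilon v\) gives \(q_k = \frac{1}{K}(1 + K\varepsilon v_k)\), and the constant \(\log(1/K)\) cancels between the two terms. We obtain
\[
\phi_k(T_\lambda(p)) = \log(1+K\varepsilon v_k) - \frac{1}{K}\sum_{j=1}^K \log(1+K\varepsilon v_j).
\]
This exact identity already shows \(\phi(u)=0\), which is consistent with the claim.

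Next we expand \(\log(1+x) = x + R(x)\) with \(|R(x)| \le C x^2\) for \(|x|\le 1/2\). Because \(p\) is fixed, \(|v_k| \le 1\), so \(|K\varepsilon v_k| \le 1/2\) as soon as \(\varepsilon \le 1/(2K)\). For such \(\varepsilon\) we have \(\log(1+K\varepsilon v_k) = K\varepsilon v_k + O(\varepsilon^2)\), with a constant depending only on \(K\), and in fact uniform in \(p\). Summing over \(j\), the linear part is \(\varepsilon\sum_j v_j = 0\), because \(p\) and \(u\) both sum to one. The average term is therefore purely \(O(\varepsilon^2)\). Combining, \(\phi_k(T_\lambda(p)) = K\varepsilon v_k + O(\varepsilon^2)\) for each \(k\), which is the stated expansion \(\phi(T_\lambda(p)) = K(1-\lambda)(p-u) + O((1-\lambda)^2)\).

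There is no serious obstacle here; the only points requiring care are bookkeeping ones. The first is noticing that the zero-sum property of \(p-u\) kills the first-order contribution of the geometric-mean correction. This is also why the leading term already lies in the zero-sum subspace, matching the range of the CLR map. The second is making the remainder uniform, by restricting to \(\varepsilon\) small enough that all arguments of the logarithm stay in a compact neighbourhood of \(1\). This works even for \(p\) near the boundary, since the shrinkage guarantees \(q_k \ge \varepsilon/K \cdot 0 + (1-\varepsilon)/K > 0\), and the bound \(|v_k|\le 1\) does not depend on how close \(p\) is to the boundary. Finally, we will remark that the identity is the statement that the differential of \(\phi\) at \(u\) is \(K\) times the identity on the tangent space \(\{v:\sum v_k=0\}\). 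This can be checked directly from \(\partial \phi_k/\partial q_j = \delta_{kj}/q_j - 1/(Kq_j)\) at \(q=u\), giving \(K\delta_{kj} - 1\), which acts as \(K\cdot\mathrm{Id}\) on zero-sum vectors.
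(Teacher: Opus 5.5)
Your proof is correct and follows the paper's argument. Both proofs cancel the $\log(1/K)$ terms, rewrite each CLR coordinate as $\log(1+K\varepsilon\delta_i)$ minus its average over $j$, expand the logarithm, and use $\sum_j \delta_j = 0$ to remove the first-order contribution of the geometric-mean term. The differences are cosmetic: the paper uses a second-order Taylor expansion and records the slightly sharper remainder $O((1-\lambda)^2\|p-u\|^2)$, while you use a first-order expansion with an explicit quadratic remainder bound and correctly note that this bound is uniform over the closed simplex once $1-\lambda \le 1/(2K)$.
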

This shows that shrinkage attenuates the nonlinear effects of the CLR transformation: the leading term is linear in \(1-\lambda\), while the nonlinear remainder is of second order. Hence, shrinkage regularizes the geometry precisely in the regime where log-ratio representations would otherwise be most unstable. A full proof is provided in Appendix~\ref{app:clr-linearization}.

This local linearization also has an estimator-level implication. When combined with the bandwidth scaling \(h_{\mathrm{eff}}=(1-\lambda)h\), the induced Gaussian kernel in transformed coordinates is locally well approximated by a Euclidean Gaussian kernel on the simplex hyperplane, up to a multiplicative constant and higher-order local terms. In this sense, shrinkage places the geometry-aware estimator in a locally Euclidean regime near the barycenter, without collapsing it into an ordinary Euclidean KDE. A heuristic derivation is provided in Appendix~\ref{app:estimator}.

\subsection{Geometry-aware KDE}

Given the transformed representation
\(
z = \phi(T_\lambda(p)),
\)
we estimate class-conditional densities in the transformed space by standard kernel density estimation. For each class \(k \in \{1,\dots,K\}\), let
\(
\mathcal{Z}_k = \{ z_i : y_i = k \}
\)
denote the transformed training posteriors for class \(k\). We then define
\begin{equation}
    \label{eq:kde}
    f^h_k(z)
=
\frac{1}{|\mathcal{Z}_k|}
\sum_{z' \in \mathcal{Z}_k}
\mathcal{K}_{h_{\mathrm{eff}}}(z-z'),
\end{equation}
where \(\mathcal{K}_{h_{\mathrm{eff}}}\) is a Gaussian kernel with effective bandwidth \(h_{\mathrm{eff}}=(1-\lambda)h\) (refer to Appendix~\ref{app:bandwidth-scaling} for the mathematical derivation).
This yields a shrinkage-regularized, geometry-aware instantiation of the class-conditional densities. 


We now turn to showing that this model naturally supports both Bayesian posterior inference and point estimation over the prevalence vector \(\pi\).
Let \(z = \phi(T_\lambda(p))\), where \(\phi\) denotes the fixed CLR transformation and \(T_\lambda\) the fixed shrinkage map. For each class \(k \in \{1,\dots,K\}\), let \(f^h_k(z)\) denote the corresponding class-conditional density estimate in the transformed space (Equation~\ref{eq:kde}). We redefine the prevalence-dependent mixture:
\[
m_\pi(z) = \sum_{k=1}^K \pi_k f^h_k(z),
\qquad
\pi \in \Delta^{K-1}.
\]
%
%
Although density estimation is carried out in transformed coordinates, the observed posterior vectors \(p^{(1)},\dots,p^{(n)}\) live on the simplex. Therefore, the exact density induced on the simplex must account for the change of variables. For any \(p \in \Delta^{K-1,\circ}\), the corresponding exact density is
\[
q_\pi(p)
=
J_{\phi \circ T_\lambda}(p)\, m_\pi(\phi(T_\lambda(p))),
\]
where \(J_{\phi \circ T_\lambda}(p)\) denotes the Jacobian term induced by the transformation \(\phi \circ T_\lambda\).
Since the Jacobian does not depend on \(\pi\), the exact likelihood for a test sample
\(\mathcal{P}=\{p^{(1)},\dots,p^{(n)}\}\) satisfies
\[
\mathcal{L}_{\mathrm{exact}}(\pi; \mathcal{P})
=
\prod_{i=1}^n
J_{\phi \circ T_\lambda}(p^{(i)})
\left(
\sum_{k=1}^K \pi_k f^h_k(z^{(i)})
\right)
\propto
\prod_{i=1}^n
\left(
\sum_{k=1}^K \pi_k f^h_k(z^{(i)})
\right),
\]
where \(z^{(i)}=\phi(T_\lambda(p^{(i)}))\). Thus, the Jacobian term does not need to be evaluated explicitly. 

\subsection{Bayesian posterior inference}

The previous result immediately yields a Bayesian formulation of geometry-aware KDE quantification. Given a prior \(p(\pi)\) on the prevalence vector, the posterior distribution is
\begin{equation}
\label{eq:bayes}
p(\pi \mid \mathcal{P})
\propto
p(\pi)\, \mathcal{L}_{\mathrm{exact}}(\pi; \mathcal{P})
\propto
p(\pi)
\prod_{i=1}^n
\left(
\sum_{k=1}^K \pi_k f^h_k(z^{(i)})
\right).
\end{equation}
Thus, although the exact simplex density formally includes a Jacobian term, posterior inference can be carried out directly in transformed coordinates using the unnormalized posterior above. This is the key observation that makes Bayesian inference practical for KDE-based quantification.
Note that the same cancellation argument applies to point estimation, since multiplicative constants independent of \(\pi\) do not affect the argmax in Equation~\ref{eq:maxlike} under the change of variables.


To control posterior dispersion under misspecification, we consider a tempered posterior of the form
\[
p_T(\pi \mid \mathcal{P})
\propto
p(\pi)\,
\left[
\prod_{i=1}^n
\left(
\sum_{k=1}^K \pi_k f^h_k(z^{(i)})
\right)
\right]^{1/T},
\qquad
T > 0.
\]
The temperature parameter \(T\) leaves the density model unchanged and primarily controls posterior concentration. In practice, \(T\) is selected on a held-out calibration split; see Section~\ref{sec:experiments}.

\section{Experiments}
\label{sec:experiments}

\textbf{Datasets.}
We evaluate on 42 datasets spanning text, tabular, and image domains, with numbers of classes ranging from 2 to 100, used in previous related research. The text suite includes the standard LeQua benchmarks \cite{esuli2022lequa,esuli2024overview} and 11 Twitter sentiment datasets \cite{gao2016classification}; the tabular suite contains 22 UCI ML datasets \cite{ucimlrepo}; the image suite includes \texttt{CIFAR10}/\texttt{CIFAR100} \citep{krizhevsky2009learning}, \texttt{MNIST} \cite{lecun1998mnist}, \texttt{FashionMNIST} \citep{xiao2017fashion}, and \texttt{SVHN} \citep{netzer2011reading}. Full dataset statistics are given in Appendix~\ref{app:datasets}.

\textbf{Base classifiers and baselines.}
For text and tabular data, we use logistic regression (LR) on fixed features (neural embeddings provided by the LeQua creators\footnote{LeQua2022: \url{https://doi.org/10.5281/zenodo.5734464} LeQua2024: \url{https://doi.org/10.5281/zenodo.10654474}}, and TF-IDF weighted vectors for Twitter data available online\footnote{\url{https://zenodo.org/record/4255764/files/tweet_sentiment_quantification_snam.zip}}) since LR is the standard choice in the quantification literature as it natively provides well calibrated posterior probabilities \cite{schumacher2025comparative}. For image data, we use ResNet18 (CIFAR/SVHN) or a lightweight CNN (MNIST/FashionMNIST), with calibrated posteriors obtained via BCTS following \citep{Alexandari:2020dn}. 
We consider a representative set of baselines covering classical point estimators, density-based methods, and available Bayesian quantifiers. Specifically, we compare against a naive ``classify and count'' (CC) \citep{Forman:2005fk}, its probabilistic counterpart (PCC) \citep{Forman:2005fk}, BBSE/ACC \citep{Lipton:2018fj}, MLLS/EMQ \citep{Saerens:2002uq,Alexandari:2020dn}, and KDEy-ML (hereafter KDEy(Gau), since it uses Gaussian kernels) \citep{kdey2025}, together with our proposed geometry-aware point estimator KDEy(Ait-$\lambda$). For uncertainty quantification, we include Bayes-ACC (the Bayesian adaptation of BBSE/ACC by \citep{bayesianCC.2024}), Bayes-EMQ (aka MAPLS \citep{ye2024label})\footnote{We adapted the original code \url{https://github.com/ChangkunYe/MAPLS/blob/main/label_shift/mapls.py}}, Bayes-KDEy(Gau) (our Bayesian adaptation of KDEy), and Bayes-KDEy(Ait-$\lambda$) (our proposed Bayesian geometry-aware KDE), with Bootstrap variants of CC (Boots-CC) and PCC (Boots-PCC) as naive baselines. We exclude Precise Quantifier \cite{igiraneza2025estimating}, since it is binary-only. To the best of our knowledge, no additional Bayesian implementations are available for multiclass quantification. We use the implementations of baseline methods available in QuaPy \cite{moreo2021quapy}. 
The code that reproduces our experiments is available online\footnote{\url{https://github.com/AlexMoreo/BayesKDEy}}. 

\textbf{Protocol.}
Evaluation follows a Dirichlet-sampling protocol \citep{Alexandari:2020dn}: for each dataset, we extract 500 test bags satisfying prevalence vectors sampled from a Dirichlet($\alpha$) on the simplex, thereby simulating label shift. 
Following \citep{Alexandari:2020dn}, we use $\alpha=1$ to generate a uniform coverage of the simplex,\footnote{This is equivalent to the so-called Artificial Prevalence Protocol (APP) in the quantification literature \cite{Forman:2005fk}.} $\alpha=0.1$ to generate prevalence samples from the border, and $\alpha=10$ to generate samples concentrated on the center of the simplex.
Bag size depends on the dataset and ranges from 100 (Twitter data) to 2000 (\texttt{CIFAR100}); full details are given in Appendix~\ref{app:datasets}.

\textbf{Evaluation metrics.} Point-estimation performance is evaluated in terms of absolute error $\mathrm{AE}(\pi,\hat{\pi})=K^{-1}||\pi-\hat{\pi}||_1$,
and the squared weight-ratio error 
$W(\pi,\hat{\pi})=K^{-1}||w-\hat{w}||^2_2$, with $w_i=\pi_i/\pi^{\mathrm{train}}_i$, and $\hat{w}_i=
\hat{\pi}_i/\pi^{\mathrm{train}}_i$. 
%
AE is arguably the most directly interpretable measure of prevalence estimation error \citep{sebastiani2020evaluation}, while \(W\) is particularly informative when prevalence estimates are used to construct class-wise importance weights for domain adaptation via importance-weighted empirical risk minimization \citep{Lipton:2018fj,Azizzadenesheli:2019qf,Alexandari:2020dn}.
For Bayesian methods, posterior inference is performed with NUTS \cite{hoffman2014no} 
using 500 warmup steps, and point-estimation metrics are computed from the posterior mean of the 1000 sampled prevalence vectors. Unless explicitly stated, we consider no pre-experimental knowledge, therefore employing a uniform prior. 
Uncertainty is evaluated in terms of coverage and amplitude over the empirical classwise confidence intervals \(\mathcal{C}=\{[\ell_k,u_k]\}_{k=1}^K\) constructed at 95\% level. Concretely, we report standard (hard) coverage ($\hCov$) as the proportion of experiments in which the true prevalence values are simultaneously contained in their respective Bonferroni-corrected confidence intervals; soft coverage ($\sCov$) as the proportion of prevalence values included in their corresponding (uncorrected) confidence intervals; and amplitude ($\amp$) as the mean percentage of the simplex volume (as estimated via Monte Carlo sampling) covered by the hyper-rectangle induced by the classwise intervals. 
For coverage ($\hCov$, $\sCov$), we report the gap with respect to the nominal value \(95\%\), so that lower is better for all the metrics we report.

\textbf{Model selection and calibration.}
Classifier hyperparameters are treated as quantifier hyperparameters and selected by minimizing absolute error (AE) on 100 bags generated using the Dirichlet($\alpha=1$) sampling protocol from a held-out stratified validation split (40\% of the training data). Quantifier-specific hyperparameters include kernel bandwidths and the shrinkage parameter \(\lambda\). For Bayesian methods, posterior temperature is then \textit{post hoc} calibrated on the same validation bags by minimizing the mean classwise Winkler's interval score \citep{gneiting2007strictly}. 
Full grids are reported in Appendix~\ref{app:hyperparameters}.

\subsection{Results}
\textbf{Point estimation.}
Table~\ref{tab:main-point} summarizes point-estimation performance using average ranks for AE and \(W\), reported by modality and overall, for the Dirichlet($\alpha=1$) sampling protocol. Average ranks are preferable here because raw error scales differ substantially across datasets; full tables with raw values are given in Appendix~\ref{app:fulltables:point}.
The geometry-aware KDE is fully competitive in the point-estimation regime and typically improves over Gaussian KDE. Overall, KDEy(Ait-\(\lambda\)) achieves the best average rank in both AE and \(W\), with especially strong results on tabular and image datasets. 


\begin{table*}[t]
\centering
\caption{Point-estimation summary by modality in terms of average rank (lower is better).}
\label{tab:main-point}
\resizebox{.65\textwidth}{!}{%
\begin{tabular}{lcccc cccc}
\toprule
& \multicolumn{4}{c}{AE rank} & \multicolumn{4}{c}{\(W\) rank} \\
\cmidrule(lr){2-5}\cmidrule(lr){6-9}
Method & tabular & text & image & avg & tabular & text & image & avg \\
\midrule
CC \citep{Forman:2005fk} & \phantom{0}4.4 & \phantom{0}4.9 & \phantom{0}4.8 & \phantom{0}4.6 & \phantom{0}4.5 & \phantom{0}4.4 & \phantom{0}5.2 & \phantom{0}4.5 \\
PCC \citep{Forman:2005fk} &  \phantom{0}5.5 & \phantom{0}5.9 & \phantom{0}6.0 & \phantom{0}5.7 & \phantom{0}5.4 & \phantom{0}5.6 & \phantom{0}5.8 & \phantom{0}5.5 \\
BBSE/ACC \citep{Lipton:2018fj} & \phantom{0}4.1 & \phantom{0}4.1 & \phantom{0}4.2 & \phantom{0}4.1 & \phantom{0}3.9 & \phantom{0}4.3 & \phantom{0}4.0 & \phantom{0}4.0 \\
MLLS/EMQ \citep{Saerens:2002uq,Alexandari:2020dn} & \phantom{0}2.8 & \phantom{0}2.3 & \phantom{0}2.0 & \phantom{0}2.5 & \phantom{0}2.8 & \textbf{\phantom{0}2.1} & \phantom{0}2.2 & \phantom{0}2.5 \\
KDEy(Gau) \citep{kdey2025} & \phantom{0}2.4 & \textbf{\phantom{0}1.9} & \phantom{0}2.4 & \phantom{0}2.2 & \phantom{0}2.4 & \textbf{\phantom{0}2.1} & \phantom{0}2.4 & \phantom{0}2.3 \\
KDEy(Ait-$\lambda$) \textbf{(ours)} & \textbf{\phantom{0}1.9} & \textbf{\phantom{0}1.9} & \textbf{\phantom{0}1.6} & \textbf{\phantom{0}1.9} & \textbf{\phantom{0}2.1} & \phantom{0}2.5 & \textbf{\phantom{0}1.4} & \textbf{\phantom{0}2.2} \\
\bottomrule
\end{tabular}
}
\end{table*}

\textbf{Uncertainty evaluation.}
Table~\ref{tab:main-bayes-combined} summarizes Bayesian results for uncertainty-aware quantification; full tables with raw values are given in Appendix~\ref{app:fulltables:bayes}. In our experiments, temperature calibration mainly affects uncertainty rather than the posterior mean, and does not change the overall ranking. For this reason, we report AE and \(W\) ranks for the untempered posterior (\(T=1\)), while coverage gaps and amplitude are shown for both \(T=1\) and the temperature-calibrated setting. For coverage, we report the absolute deviation from the nominal \(95\%\) level, so lower values are better.
In the untempered regime, Bayes-KDEy(Ait-\(\lambda\)) achieves the best aggregate AE and \(W\) ranks, closely followed by Bayes-KDEy(Gau). Bootstrap baselines yield very small amplitudes but poor coverage, making them unreliable as uncertainty quantifiers. Bayes-ACC attains the best raw coverage gaps at \(T=1\), but at the cost of substantially larger regions and clearly weaker point-estimation performance. Bayes-EMQ offers a stronger trade-off, yet both KDE-based Bayesian models remain better overall in aggregate ranking.
As expected, temperature calibration reduces coverage gaps at the cost of amplitude for most methods (but Bayes-ACC). After calibration, the KDE-based Bayesian methods offer the best coverage, while Bayes-EMQ has the smallest amplitude.

\begin{table*}[t]
\centering
\caption{Bayesian uncertainty-aware quantification summary. AE and \(W\) ranks are reported for the \(T=1\) setting only. Coverage gaps and amplitudes are shown as mean \(\pm\) standard deviation across datasets, both for \(T=1\) and for temperature-calibrated models.}
\label{tab:main-bayes-combined}
\resizebox{\textwidth}{!}{%
\begin{tabular}{lccccc ccc}
\toprule
& & & \multicolumn{3}{c}{Temperature \(=1\)} & \multicolumn{3}{c}{Calibrated} \\
\cmidrule(lr){4-6}\cmidrule(lr){7-9}
Method & AE rank & \(W\) rank & $\hCov$ gap & $\sCov$ gap & amp & $\hCov$ gap & $\sCov$ gap & amp \\
\midrule
Boots-CC (baseline) & \phantom{0}4.6 & \phantom{0}4.5 & 60.5 $\pm$ 30.8 & 39.3 $\pm$ 28.1 & \phantom{0}1.7 $\pm$ \phantom{0}2.8 & --- & --- & --- \\
Boots-PCC (baseline) & \phantom{0}5.6 & \phantom{0}5.5 & 74.5 $\pm$ 24.0 & 52.4 $\pm$ 30.3 & \textbf{\phantom{0}1.2 $\pm$ \phantom{0}2.1} & --- & --- & --- \\
Bayes-ACC \citep{bayesianCC.2024} & \phantom{0}3.9 & \phantom{0}3.9 & \textbf{13.3 $\pm$ 21.7} & \textbf{\phantom{0}4.4 $\pm$ \phantom{0}8.3} & \phantom{0}5.4 $\pm$ \phantom{0}7.3 & 18.5 $\pm$ 23.4 & \phantom{0}7.1 $\pm$ 11.5 & \phantom{0}6.6 $\pm$ 10.6 \\
Bayes-EMQ (MAPLS) \citep{ye2024label} & \phantom{0}2.9 & \phantom{0}2.6 & 25.3 $\pm$ 30.2 & 10.9 $\pm$ 14.9 & \phantom{0}3.0 $\pm$ \phantom{0}4.2 & 19.5 $\pm$ 22.8 & \phantom{0}6.0 $\pm$ \phantom{0}7.5 & \textbf{\phantom{0}4.5 $\pm$ \phantom{0}7.9} \\
Bayes-KDEy(Gau) \textbf{(new)} & \phantom{0}2.1 & \phantom{0}2.5 & 20.3 $\pm$ 30.1 & \phantom{0}7.3 $\pm$ 14.4 & \phantom{0}3.6 $\pm$ \phantom{0}5.4 & \textbf{12.9 $\pm$ 18.0} & \textbf{\phantom{0}3.1 $\pm$ \phantom{0}5.6} & \phantom{0}5.2 $\pm$ \phantom{0}7.6 \\
Bayes-KDEy(Ait-\(\lambda\)) \textbf{(ours)} & \textbf{\phantom{0}1.9} & \textbf{\phantom{0}2.1} & 21.7 $\pm$ 31.1 & \phantom{0}7.9 $\pm$ 14.1 & \phantom{0}3.6 $\pm$ \phantom{0}5.4 & \textbf{12.9 $\pm$ 18.6} & \phantom{0}3.3 $\pm$ \phantom{0}6.1 & \phantom{0}6.4 $\pm$ 11.4 \\
\bottomrule
\end{tabular}
}%
\end{table*}


\textbf{Levels of shift.}
Figure~\ref{fig:rankbars} shows performance variation as a function of the severity of label shift. In this experiment, shift magnitude is measured in terms of $||\pi-\pi^{\mathrm{train}}||_1$, and the tests are partitioned into three quantile-based groups, corresponding to low-, medium-, and high-shift regimes with approximately the same number of experiments. AE ranks and standard deviations are computed at the micro-level within each group. This plot reveals our method tends to dominate across levels of shift and data modalities, with the sole exception of the text modality in the low-shift and medium-shift regimes, where Bayes-EMQ and Bayes-KDEy(Gau), respectively, show superior performance.

%
\begin{figure}
    \centering
    \includegraphics[height=1.1in,trim={.7cm 0 4.3cm 0},clip]{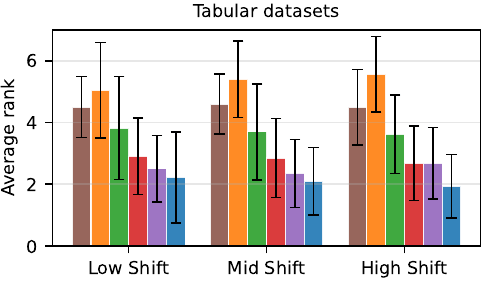}
    \includegraphics[height=1.1in,trim={1cm 0 4.3cm 0},clip]{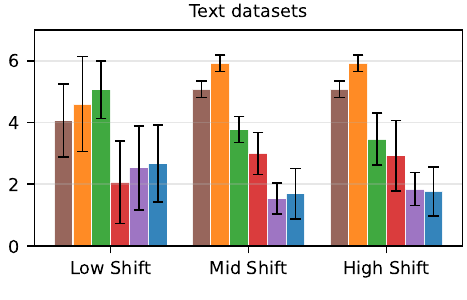}
    \includegraphics[height=1.1in,trim={1cm 0 .28cm 0},clip]{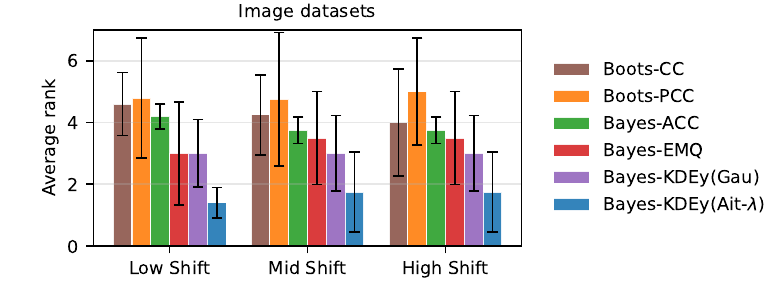}
    \caption{Average AE ranks in low-, medium-, and high-shift regimes for the three data modalities.}
    \label{fig:rankbars}
\end{figure}

\textbf{Target prevalence distributions.}
Following the experiments of \cite{Alexandari:2020dn}, we analyze quantification performance under differently distributed target prevalences as generated from symmetric Dirichlet distributions with \(\alpha_{\mathrm{test}} \in \{0.1, 10\}\); see Figure~\ref{fig:violin}. These settings favor prevalences near the simplex boundary 
and prevalences concentrated near the barycenter, respectively.\footnote{The LeQua datasets are not included in this experiment, since the test bags are fixed and do not contain  per-instance labels.} Unlike \cite{Alexandari:2020dn}, for Bayesian methods we additionally use informative but conservative symmetric Dirichlet priors, setting \(\alpha_{\mathrm{prior}}=0.3\) for \(\alpha_{\mathrm{test}}=0.1\) and \(\alpha_{\mathrm{prior}}=3\) for \(\alpha_{\mathrm{test}}=10\), effectively showcasing one of the main advantages of Bayesian inference. 
Results are by and large similar across methods for \(\alpha_{\mathrm{test}}=10\), including bootstrap variants that cannot benefit from prior information, suggesting that bags concentrated near the simplex center are comparatively easier to quantify. The most informative case is \(\alpha_{\mathrm{test}}=0.1\), where target prevalences lie near the simplex boundary and the task becomes more challenging. In this regime, bootstrap baselines degrade substantially, while Bayesian methods benefit from prior information. The shrinkage-regularized Aitchison KDE outperforms the Euclidean Gaussian KDE in this setting, where the target prevalence vector lies near the simplex boundary and geometric mismatch is expected to be most consequential. 
By respecting the simplex geometry, our method eliminates probability mass leakage outside valid bounds and provides higher resolution near the simplex boundaries, allowing for more precise estimation in this regime.

\begin{figure}
    \centering
    \includegraphics[width=\textwidth]{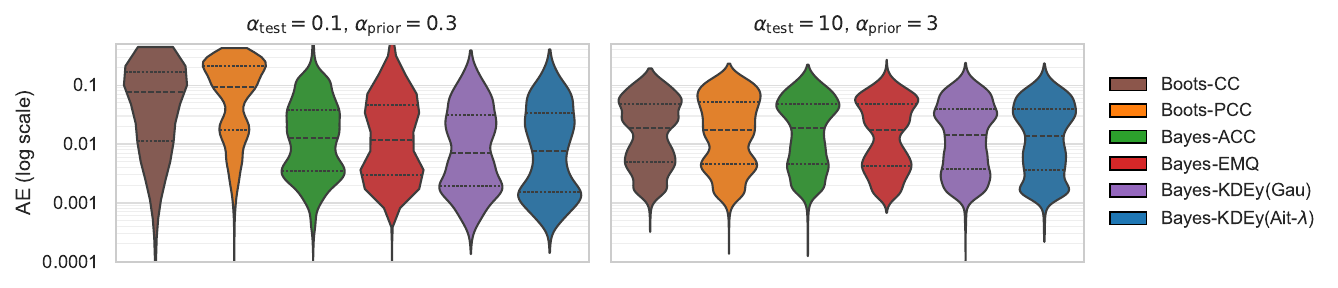}
    \caption{Violin plots for AE score distributions (log scale) for different target distributions.}
    \label{fig:violin}
\end{figure}

\textbf{Ablation experiments.} A naive Aitchison variant without shrinkage is reported in Appendix~\ref{app:ablation}. While competitive on many datasets, it exhibits sharp failures on a small but important set of benchmarks, namely \texttt{LeQua2022-T1B}, \texttt{LeQua2024-T2}, \texttt{isolet}, \texttt{mhr}, and \texttt{wine-quality}. In these cases, the pure Aitchison kernel (akin to $\lambda=0$) yields substantially larger AE/$W$ errors than the Gaussian variant. 
Importantly, these failures are largely removed by shrinkage, which brings the regularized Aitchison variant back to the range of the strongest competitors and, in most of these cases, above Gaussian KDE. More generally, the shrinkage values chosen during model selection are far from uniform across the 42 datasets: a near-pure Aitchison regime (\(\lambda=0.001\)) is the most frequently selected setting, occurring in \(31.0\%\) of the cases, while a near-Gaussian regime (\(\lambda=0.999\)) is the second most frequent, occurring in \(16.7\%\) of the cases. This indicates that shrinkage is not merely acting as a numerical safeguard, but as an adaptive mechanism that selects the geometry most compatible with the posterior structure of each dataset.

\textbf{Computational cost.} The proposed method requires the bandwidth and shrinkage to be optimized, which increases the computational cost. However, once set, the cost at inference time is not significantly different from that of the competitors.
All experiments were run on a standard desktop workstation, without requiring specialized high-performance computing infrastructure.

\textbf{Statistical significance.}
Pairwise significance was assessed at the dataset level using paired Wilcoxon signed-rank tests with Holm--Bonferroni correction after averaging results across bags within each dataset; see Appendix~\ref{app:fulltables:point} and~\ref{app:fulltables:bayes}. The proposed method consistently belongs to the top statistical group. In the point-estimation regime, it significantly improves over CC, PCC, and BBSE/ACC (\(p<0.001\)) and over MLLS/EMQ (\(p<0.05\)), while remaining statistically indistinguishable from KDEy(Gau). In the Bayesian regime at \(T=1\), KDEy(Ait-\(\lambda\)) significantly improves over the bootstrap baselines, Bayes-ACC, and Bayes-EMQ (MAPLS) (\(p<0.001\)), while remaining statistically tied with its Gaussian KDE analogue within the Bayesian formulation developed in this paper. The same pattern holds after temperature calibration.



\section{Conclusions}


We have presented a geometry-aware, uncertainty-aware approach to multiclass quantification under label shift. The method is based on two ideas. First, we revisit KDE-based quantification through compositional data analysis, replacing Euclidean density estimation on posterior probabilities with a shrinkage-regularized log-ratio representation better aligned with simplex geometry. Second, we show that this density model admits a Bayesian formulation, enabling posterior inference over class prevalences and principled uncertainty quantification. Empirically, the proposed geometry-aware KDE often outperforms standard Gaussian KDE and other baselines in both point estimation and uncertainty quantification. These results suggest that respecting simplex geometry is not only well motivated, but also practically beneficial for multiclass quantification. Our experiments also show that geometry-awareness alone is not sufficient: shrinkage is central to stabilizing log-ratio modeling near the simplex boundary.

\textbf{Limitations and open directions: } Our investigation is limited to the case in which the shrinkage and bandwidth values are selected by validation, while a fuller Bayesian treatment of these parameters may be worth investigating. 
Our method strongly relies on the distributional assumptions characteristics of label shift. Studying variants for different types of shift such as covariate shift \citep{tasche2022class} or sparse join shift \citep{chen2022estimating}, appears to be a promising direction for future work in quantification.

\bibliographystyle{plain}
\bibliography{references}


\newpage
\section*{Appendix}
\appendix

\section{Proof of Proposition~\ref{prop:clr-linearization}: linearization of CLR through shrinkage}
\label{app:clr-linearization}

In this appendix we prove the local linearization result stated in the main text. Throughout the proof, we let \(\phi=\clr\).

\begin{proposition}
\label{prop:clr-linearization-app}
Let
\(
u=\left(\frac1K,\ldots,\frac1K\right)
\)
be the barycenter of the simplex $\Delta^{K-1}$, and define the shrinkage operator
\[
T_\lambda(p)=(1-\lambda)p+\lambda u,
\qquad \lambda\in[0,1).
\]

For any fixed \(p\in\Delta^{K-1,\circ}\), it holds that
\[
\phi(T_\lambda(p))
=
K(1-\lambda)(p-u)
+
O\!\left((1-\lambda)^2\right)
\qquad \text{as } \lambda\to 1.
\]
\end{proposition}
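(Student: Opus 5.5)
Set \(\varepsilon = 1-\lambda\) and \(v = p-u\). Then \(T_\lambda(p) = u + \varepsilon v\), and \(\sum_k v_k = 0\) because both \(p\) and \(u\) lie on the simplex. The plan is to expand each CLR coordinate in powers of \(\varepsilon\) and use the zero-sum property of \(v\) to kill the geometric-mean correction at first order.

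\textbf{Step 1 (reduce to a scalar expansion).} Because \(\log\) of the geometric mean is the arithmetic mean of the logs, each CLR coordinate can be written as
\[
\phi_k(u+\varepsilon v)=\log\!\left(\tfrac1K+\varepsilon v_k\right)-\frac1K\sum_{j=1}^K\log\!\left(\tfrac1K+\varepsilon v_j\right).
\]
Writing \(\log(\tfrac1K+\varepsilon v_j)=-\log K+\log(1+K\varepsilon v_j)\), the constant \(-\log K\) cancels between the two terms. This leaves
\[
\phi_k(u+\varepsilon v)=\log(1+K\varepsilon v_k)-\frac1K\sum_{j=1}^K\log(1+K\varepsilon v_j).
\]

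\textbf{Step 2 (Taylor expand with a uniform remainder).} For \(|x|\le 1/2\) we have \(\log(1+x)=x+R(x)\) with \(|R(x)|\le x^2\). Each \(|v_j|\le 1\) because \(p,u\in\Delta^{K-1}\), so \(|K\varepsilon v_j|\le 1/2\) once \(\varepsilon\le 1/(2K)\), i.e. for \(\lambda\) close enough to 1. Note that \(T_\lambda(p)\) stays in the open simplex for all \(\lambda\in[0,1)\), so \(\phi\) is always defined; the restriction on \(\varepsilon\) is only needed for the remainder bound.

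\textbf{Step 3 (collect terms).} Substituting the expansion gives
\[
\phi_k = K\varepsilon v_k - \varepsilon\sum_{j} v_j + R(K\varepsilon v_k)-\frac1K\sum_j R(K\varepsilon v_j).
\]
The middle term vanishes because \(\sum_j v_j = 0\). Each remainder is bounded by \(K^2\varepsilon^2\), so the whole remainder vector has norm \(O(\varepsilon^2)\), with constants depending only on \(K\). This yields \(\phi(T_\lambda(p)) = K(1-\lambda)(p-u) + O((1-\lambda)^2)\). The bound is in fact uniform in \(p\in\Delta^{K-1}\), which is stronger than the fixed-\(p\) statement requires.

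The only delicate point is the first-order cancellation of the geometric-mean term, which relies on the zero-sum property of \(v\). Without it there would be a spurious \(-\varepsilon\sum_j v_j\,\mathbf{1}\) component. Everything else is a routine Taylor bound.
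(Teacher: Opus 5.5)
Your proof is correct and follows the same route as the paper. Both substitute $T_\lambda(p)=u+\varepsilon(p-u)$, factor out $\log\frac1K$, Taylor-expand $\log(1+K\varepsilon v_j)$, and use $\sum_j v_j=0$ to cancel the first-order geometric-mean correction. The only difference is that you stop at first order with the explicit bound $|\log(1+x)-x|\le x^2$ on $|x|\le\frac12$, which gives explicit constants and uniformity in $p$, whereas the paper carries the quadratic terms and absorbs them into $O(\varepsilon^2\|p-u\|^2)$; you would recover that form too by keeping $v_j^2$ instead of bounding it by $1$.
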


\begin{proof}
The shrinkage operator can be written as
\[
T_\lambda(p)=u+\varepsilon\delta,
\]
with 
$\delta := p-u$, $\varepsilon := 1-\lambda$.
Since both \(p\) and \(u\) belong to the simplex, it is implied that
\(
\sum_{i=1}^K \delta_i = 0.
\)
Note that each component of \(T_\lambda(p)\) can be written as
\[
[T_\lambda(p)]_i = \frac1K + \varepsilon \delta_i.
\]

Because \(p\in\Delta^{K-1,\circ}\), all its coordinates are strictly positive. Hence, for \(\lambda\) sufficiently close to \(1\) (equivalently, \(\varepsilon\) sufficiently small), all coordinates of \(T_\lambda(p)\) remain strictly positive as well.

By definition of the CLR map,
\begin{equation}
\label{eq:app:clr}
[\phi(T_\lambda(p))]_i
=
\log\bigl([T_\lambda(p)]_i\bigr)
-
\frac1K\sum_{j=1}^K \log\bigl([T_\lambda(p)]_j\bigr).
\end{equation}

We start with the first term of Equation~\ref{eq:app:clr}, and expand the logarithm around the simplex barycenter. For each \(i\in\{1,\dots,K\}\),
\[
\log\bigl([T_\lambda(p)]_i\bigr)
=
\log\left(\frac1K+\varepsilon\delta_i\right)
=
\log\frac1K + \log\bigl(1+K\varepsilon\delta_i\bigr).
\]
Using the second-order Taylor expansion
\(
\log(1+x)=x-\frac{x^2}{2}+O(x^3),
\)
for $x=K\varepsilon\delta_i$ in the right-hand term, we obtain
\begin{equation}
\label{eq:app:clrcoord}
\log([T_\lambda(p)]_i)
=
\log\frac1K
+
K\varepsilon\delta_i
-
\frac{K^2\varepsilon^2}{2}\delta_i^2
+
O(\varepsilon^3\|\delta\|^3).
\end{equation}

For the second term of Equation~\ref{eq:app:clr}, we need to average over the log coordinates in Equation~\ref{eq:app:clrcoord}. This yields
\[
\frac1K\sum_{j=1}^K \log\bigl([T_\lambda(p)]_j\bigr)
=
\log\frac1K
+
K\varepsilon\frac1K\sum_{j=1}^K\delta_j
-
\frac{K^2\varepsilon^2}{2}\frac1K\sum_{j=1}^K\delta_j^2
+
O(\varepsilon^3\|\delta\|^3).
\]
Since \(\sum_{j=1}^K\delta_j=0\), this simplifies to
\begin{equation}
\label{eq:app:avecoor}   
\frac1K\sum_{j=1}^K \log\bigl([T_\lambda(p)]_j\bigr)
=
\log\frac1K
-
\frac{K\varepsilon^2}{2}\sum_{j=1}^K\delta_j^2
+
O(\varepsilon^3\|\delta\|^3).
\end{equation}

For the complete derivation of the CLR map (Equation~\ref{eq:app:clr}), we need to subtract the two expansions above (Equations~\ref{eq:app:clrcoord}~and~\ref{eq:app:avecoor}); this gives
\begin{align*}
[\phi(T_\lambda(p))]_i
&=
K\varepsilon\delta_i
-
\frac{K^2\varepsilon^2}{2}\delta_i^2
+
\frac{K\varepsilon^2}{2}\sum_{j=1}^K\delta_j^2
+
O(\varepsilon^3\|\delta\|^3) \\
&=
K\varepsilon\delta_i
+
O(\varepsilon^2\|\delta\|^2).
\end{align*}
Since this holds componentwise, we can write in vector form
\[
\phi(T_\lambda(p))
=
K\varepsilon\delta
+
O(\varepsilon^2\|\delta\|^2).
\]
Recalling that \(\varepsilon=1-\lambda\) and \(\delta=p-u\), we obtain
\[
\phi(T_\lambda(p))
=
K(1-\lambda)(p-u)
+
O\!\left((1-\lambda)^2\|p-u\|^2\right),
\]
which proves the first claim. The weaker bound
\[
\phi(T_\lambda(p))
=
K(1-\lambda)(p-u)+O\!\left((1-\lambda)^2\right)
\]
follows immediately for fixed \(p\).
\end{proof}

\begin{remark}
The proposition shows that shrinkage does not remove the compositional structure of the problem. Rather, it moves posterior vectors towards a regime where the log-ratio map is locally well approximated by a linear transformation. In this sense, shrinkage regularizes the most nonlinear part of the simplex geometry, namely the neighborhood of the boundary.
\end{remark}

\section{Bandwidth scaling under shrinkage}
\label{app:bandwidth-scaling}

The regularization of the log-ratio coordinates in our model is achieved by a shrinkage operator that effectively reduces the operational region of the density model in the simplex. In order to preserve coherence with a set of reference bandwidth values (e.g., a grid of values to be explored under model selection), the kernel bandwidths have to be transformed consequentially.
In this appendix we justify the bandwidth rescaling used in the main text. Recall the shrinkage map
\[
T_\lambda(p) = (1-\lambda)p + \lambda u,
\qquad \lambda\in[0,1),
\]
where \(u=\left(\frac1K,\ldots,\frac1K\right)\) is the barycenter of the simplex \(\Delta^{K-1}\). Let \(d=K-1\) denote the intrinsic dimension of the simplex.

\begin{proposition}
\label{prop:bandwidth-scale}
Under the shrinkage map \(T_\lambda\), the natural scaling of a reference bandwidth $h$ is
\[
h_{\mathrm{eff}} = (1-\lambda)h.
\]
Moreover, if a density is normalized with respect to the simplex volume measure, then its height rescales by the Jacobian factor
\(
(1-\lambda)^{-d}.
\)
\end{proposition}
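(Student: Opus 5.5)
The argument rests on one observation: $T_\lambda$ is an affine map of the simplex hyperplane onto itself, a homothety with center $u$ and ratio $1-\lambda$. Both claims of Proposition~\ref{prop:bandwidth-scale} then follow from elementary properties of homotheties.

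\textbf{Distances.} Write $T_\lambda(p)=u+(1-\lambda)(p-u)$, as in Proposition~\ref{prop:clr-linearization}. For any $p,p'\in\Delta^{K-1}$,
\[
T_\lambda(p)-T_\lambda(p')=(1-\lambda)(p-p').
\]
Hence every Euclidean distance measured within the affine hull $H=\{x:\sum_k x_k=1\}$ is multiplied exactly by $1-\lambda$. The shrunk simplex $T_\lambda(\Delta^{K-1})$ is a scaled copy of the original, with linear size $1-\lambda$.

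\textbf{Bandwidth.} A bandwidth $h$ is a length scale on which the kernel measures separation between points. To keep smoothing ``comparable'', we require that a kernel evaluated on shrunk points at bandwidth $h_{\mathrm{eff}}$ agree with one evaluated on original points at bandwidth $h$. For a kernel of the form $\mathcal{K}_h(x)=h^{-d}\kappa(x/h)$ with $x\in H-u$, we have
\[
\kappa\!\left(\frac{T_\lambda(p)-T_\lambda(p')}{h_{\mathrm{eff}}}\right)=\kappa\!\left(\frac{p-p'}{h}\right)
\quad\text{for all } p,p'
\]
if and only if $h_{\mathrm{eff}}=(1-\lambda)h$. The "if" direction is immediate from the distance identity. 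For "only if", it suffices that $\kappa$ is nonconstant along rays, which holds for the Gaussian. This choice makes the smoothing invariant under $T_\lambda$.

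Applying this to the log-ratio coordinates involves a subtlety. There the scale is only approximately $1-\lambda$: by Proposition~\ref{prop:clr-linearization}, $\phi(T_\lambda(p))\approx K(1-\lambda)(p-u)$ near the barycenter. I would therefore state the rescaling as exact in the simplex hyperplane, and only asymptotically valid (up to the constant $K$) after applying $\phi$. This is the point where the justification is heuristic rather than exact, so I would phrase the first claim as "natural scaling" in exactly this sense.

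\textbf{Density height.} This is a change of variables on the $d$-dimensional manifold $H$. Choose an orthonormal basis of $H-u$; the ILR coordinates would serve. In these coordinates $T_\lambda$ is $y\mapsto(1-\lambda)y$ plus a translation, so its Jacobian determinant with respect to $d$-dimensional Lebesgue (simplex volume) measure is $(1-\lambda)^d$. Thus $\mathrm{vol}(T_\lambda(A))=(1-\lambda)^d\mathrm{vol}(A)$.

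If $f$ is a density on $\Delta^{K-1}$, the pushforward density on the shrunk simplex is
\[
f_\lambda(T_\lambda(p))=(1-\lambda)^{-d}f(p).
\]
Equivalently, $\int f_\lambda=1$ follows by substitution. Applying this to the kernel itself gives the same conclusion: $h_{\mathrm{eff}}^{-d}=(1-\lambda)^{-d}h^{-d}$, which confirms consistency with the bandwidth claim.

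\textbf{Main obstacle.} The computation is routine. The real difficulty is being precise about the measure: one must use the intrinsic $d=K-1$ dimensional volume on $H$, not ambient $K$-dimensional volume, which would wrongly give an exponent of $K$. Using an orthonormal (ILR-type) parametrization of $H$ settles this.
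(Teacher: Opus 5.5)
Your proposal is correct and follows the paper's own route: the uniform contraction $T_\lambda(p)-T_\lambda(q)=(1-\lambda)(p-q)$ on the $d$-dimensional affine hull, invariance of the normalized distance $\|\cdot\|/h$ to get $h_{\mathrm{eff}}=(1-\lambda)h$, and the Jacobian determinant $(1-\lambda)^d$ on that subspace for the $(1-\lambda)^{-d}$ height factor. Your caveat that the rescaling is only first-order valid after applying $\phi$ matches the paper's subsequent remark on the connection with CLR geometry. The ``only if'' direction, the explicit pushforward formula, and the intrinsic-versus-ambient dimension point are refinements the paper does not spell out, but they do not change the argument.
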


\begin{proof}
The map \(T_\lambda\) is affine, and for any two simplex points \(p,q\in\Delta^{K-1}\),
\[
T_\lambda(p)-T_\lambda(q)
=
(1-\lambda)(p-q).
\]
Hence all Euclidean distances in the affine hyperplane containing the simplex are uniformly contracted by the factor
\(
a = 1-\lambda.
\)

Now consider an isotropic Gaussian kernel centered at some point \(c\) in the simplex hyperplane,
\[
\mathcal{K}_h(p-c)
\propto
\exp\!\left(
-\frac{\|p-c\|^2}{2h^2}
\right).
\]
After shrinkage, the corresponding point is
\(p' = T_\lambda(p),\)
and similarly the center is mapped to
\(
c'=T_\lambda(c).
\)
Since \(T_\lambda\) is affine,
\[
p' - c'
=
(1-\lambda)(p-c),
\]
so that
\[
\|p' - c'\| = (1-\lambda)\|p-c\|.
\]

If we want the kernel to preserve the same relative spread with respect to the contracted simplex, then the normalized distance should remain unchanged, i.e.,
\[
\frac{\|p' - c'\|}{h_{\mathrm{eff}}}
=
\frac{\|p-c\|}{h}.
\]
Substituting \(\|p' - c'\|=(1-\lambda)\|p-c\|\), we obtain
\[
\frac{(1-\lambda)\|p-c\|}{h_{\mathrm{eff}}}
=
\frac{\|p-c\|}{h},
\]
which implies
\[
h_{\mathrm{eff}} = (1-\lambda)h.
\]

To derive the density scaling, note that \(T_\lambda\) acts on a \(d\)-dimensional affine subspace, with \(d=K-1\). Since the map is a uniform contraction by factor \(1-\lambda\), its Jacobian determinant on this subspace is
\(
(1-\lambda)^d.
\)
Therefore, if a density is normalized with respect to the simplex volume measure, then under change of variables its height must be multiplied by
\(
(1-\lambda)^{-d}
\)
in order to preserve total probability mass.
\end{proof}


\begin{remark}[Discarding Jacobian scaling]
For fixed \(\lambda\), the Jacobian factor \((1-\lambda)^{-d}\) is a constant multiplicative term shared by all kernels and all class-conditional densities. It therefore factors out of the mixture in Equation~\ref{eq:kde} and is independent of \(\pi\). Consequently, it only adds a constant to the log-likelihood in Equation~\ref{eq:maxlike} and only changes the posterior normalization in Equation~\ref{eq:bayes}. For fixed $\lambda$ it may thus be omitted for inference with respect to \(\pi\).
\end{remark}

\begin{remark}[Connection with the CLR geometry]
The previous proposition is stated in the simplex hyperplane. In the main text, however, density estimation is performed after the log-ratio transformation \(\phi\). The local linearization result of Proposition~\ref{prop:clr-linearization} shows that, near the simplex barycenter,
\[
\phi(T_\lambda(p))
=
K(1-\lambda)(p-u)
+
O\!\left((1-\lambda)^2\|p-u\|^2\right).
\]
Thus, to first order, shrinkage also contracts distances in CLR coordinates by the same factor \(1-\lambda\). This justifies using the same bandwidth rescaling rule in the transformed space:
\[
h_{\mathrm{eff}} = (1-\lambda)h.
\]
In other words, shrinkage not only regularizes the geometry, but also induces a natural linear rescaling of the kernel bandwidth.
\end{remark}

\section{Estimator-level effects of shrinkage}
\label{app:estimator}

In this appendix we provide a heuristic interpretation of the effect of shrinkage at the level of the KDE estimator. The goal is not to establish a full estimator equivalence, but to explain why shrinkage moves the geometry-aware estimator into a locally Euclidean regime near the simplex barycenter.

Let \(\Delta^{K-1}\) be the probability simplex, let \(u\) denote its barycenter, and let
\[
T_\lambda(p) = (1-\lambda)p + \lambda u
\]
be the shrinkage map. Let \(\phi=\clr\), and let
\[
\mathcal P^\star = \{p^{(1)},\dots,p^{(n)}\}\subset \Delta^{K-1}
\]
be a set of reference posterior vectors. For \(d=K-1\), define the Gaussian kernel on the simplex hyperplane by
\[
\mathcal K_b(r)
=
(2\pi b^2)^{-d/2}
\exp\!\left(-\frac{\|r\|^2}{2b^2}\right).
\]
Consider the transformed-space KDE
\[
\widehat f^{\mathrm{tr}}_{\lambda,h}(p)
=
\frac1n
\sum_{i=1}^n
\mathcal K_h\!\left(
\phi(T_\lambda(p))-\phi(T_\lambda(p^{(i)}))
\right).
\]

\begin{remark}[Heuristic local Euclidean regime induced by shrinkage]
Assume that \(p\) and all reference points \(p^{(i)}\) lie in a sufficiently small neighborhood of the barycenter. By the local linearization result of Appendix~\ref{app:clr-linearization},
\[
\phi(T_\lambda(p))-\phi(T_\lambda(p^{(i)}))
=
K(1-\lambda)\bigl(p-p^{(i)}\bigr) + r_i(p),
\]
where the remainder \(r_i(p)\) is of higher order in the local displacement from \(u\).

Now set
\[
h_{\mathrm{eff}} = (1-\lambda)h.
\]
Substituting the linear term into the Gaussian kernel yields
\[
\mathcal K_{h_{\mathrm{eff}}}\!\left(K(1-\lambda)(p-p^{(i)})\right)
=
C_{\lambda,K}\,
\mathcal K_{h/K}\!\left(p-p^{(i)}\right),
\]
where
\[
C_{\lambda,K} = \bigl(K(1-\lambda)\bigr)^{-(K-1)}
\]
is a multiplicative constant independent of \(p\). A first-order Taylor expansion of \(\mathcal K_{h_{\mathrm{eff}}}\) around the linear term therefore gives
\[
\mathcal K_{h_{\mathrm{eff}}}\!\left(
\phi(T_\lambda(p))-\phi(T_\lambda(p^{(i)}))
\right)
=
C_{\lambda,K}\,
\mathcal K_{h/K}\!\left(p-p^{(i)}\right)
+
\text{higher-order local terms}.
\]
Averaging over \(i\), one obtains the heuristic approximation
\[
\widehat f^{\mathrm{tr}}_{\lambda,h_{\mathrm{eff}}}(p)
\approx
C_{\lambda,K}\,
\widehat f^{\mathrm{Euc}}_{h/K}(p),
\]
where \(\widehat f^{\mathrm{Euc}}_{h/K}\) denotes the Euclidean Gaussian KDE on the simplex hyperplane with bandwidth \(h/K\).

Thus, shrinkage does not remove the compositional structure of the estimator, but places it in a regime where the induced smoothing is locally well approximated by Euclidean Gaussian smoothing. Since the factor \(C_{\lambda,K}\) is independent of \(p\), it contributes only an additive constant to the corresponding log-likelihood. Consequently, it does not affect maximum likelihood estimation or Bayesian inference with respect to the prevalence vector \(\pi\).
\end{remark}

The class-conditional estimators used in the main text are obtained by applying the same construction separately to the class-specific reference sets.

\section{Additional Experimental Details}
\label{app:details}

\subsection{Datasets}
\label{app:datasets}

Dataset details are provided in Table~\ref{tab:dataset-details}. 
%
%
For the LeQua datasets, validation and test pool sizes are not reported because the data are provided directly as predefined bags rather than as instance-level pools from which bags are sampled. In our experiments, we use the first 100 validation bags and the first 500 test bags.
For image datasets, the training and validation sizes reported in this table refer to the effective feature-level splits used for quantification, not to the original raw dataset splits. The representation model is first trained on 70\% of the original training set and monitored on the remaining 30\%. After training, features are extracted from that 30\% split and from the original test set; the training/validation/test sizes reported here are defined on these extracted-feature splits.

\begin{table}[h]
\centering
\caption{Dataset details. }
\label{tab:dataset-details}
\begin{tabular}{llrrrrrr}
\toprule
Dataset & Modality & Train & Val & Test & Bag size & Features & Classes \\
\midrule
\texttt{abalone} & tabular & 2,846 & 1,139 & 1,220 & 1,000 & 9 & 15 \\
\texttt{academic-success} & tabular & 3,096 & 1,239 & 1,328 & 1,000 & 36 & 3 \\
\texttt{chess} & tabular & 19,198 & 7,680 & 8,228 & 1,000 & 20 & 13 \\
\texttt{cmc} & tabular & 1,031 & 413 & 442 & 1,000 & 9 & 3 \\
\texttt{connect-4} & tabular & 25,000 & 10,000 & 42,557 & 1,000 & 84 & 3 \\
\texttt{digits} & tabular & 3,933 & 1,574 & 1,687 & 1,000 & 64 & 10 \\
\texttt{dry-bean} & tabular & 9,527 & 3,811 & 4,084 & 1,000 & 16 & 7 \\
\texttt{hand\_digits} & tabular & 7,694 & 3,078 & 3,298 & 1,000 & 16 & 10 \\
\texttt{image\_seg} & tabular & 1,617 & 647 & 693 & 1,000 & 19 & 7 \\
\texttt{isolet} & tabular & 5,457 & 2,183 & 2,340 & 1,000 & 617 & 26 \\
\texttt{letter} & tabular & 14,000 & 5,600 & 6,000 & 1,000 & 16 & 26 \\
\texttt{mhr} & tabular & 709 & 284 & 305 & 1,000 & 6 & 3 \\
\texttt{molecular} & tabular & 2,233 & 894 & 957 & 1,000 & 227 & 3 \\
\texttt{nursery} & tabular & 9,070 & 3,628 & 3,888 & 1,000 & 19 & 4 \\
\texttt{obesity} & tabular & 1,477 & 591 & 634 & 1,000 & 23 & 7 \\
\texttt{page\_block} & tabular & 3,811 & 1,525 & 1,634 & 1,000 & 10 & 4 \\
\texttt{phishing} & tabular & 947 & 379 & 406 & 1,000 & 9 & 3 \\
\texttt{satellite} & tabular & 4,504 & 1,802 & 1,931 & 1,000 & 36 & 6 \\
\texttt{shuttle} & tabular & 25,000 & 10,000 & 32,756 & 1,000 & 7 & 3 \\
\texttt{waveform-v1} & tabular & 3,500 & 1,400 & 1,500 & 1,000 & 21 & 3 \\
\texttt{wine-quality} & tabular & 4,523 & 1,810 & 1,939 & 1,000 & 11 & 5 \\
\texttt{yeast} & tabular & 1,035 & 414 & 444 & 1,000 & 8 & 9 \\
\texttt{gasp} & text & 8,788 & 1,256 & 3,765 & 100 & 78,244 & 3 \\
\texttt{hcr} & text & 1,594 & 797 & 798 & 100 & 22,061 & 3 \\
\texttt{omd} & text & 1,839 & 263 & 787 & 100 & 19,385 & 3 \\
\texttt{sanders} & text & 2,155 & 308 & 923 & 100 & 23,980 & 3 \\
\texttt{semeval13} & text & 11,338 & 1,654 & 3,813 & 100 & 107,984 & 3 \\
\texttt{semeval14} & text & 11,338 & 1,654 & 1,853 & 100 & 107,984 & 3 \\
\texttt{semeval15} & text & 11,338 & 1,654 & 2,390 & 100 & 107,984 & 3 \\
\texttt{semeval16} & text & 8,000 & 2,000 & 2,000 & 100 & 76,387 & 3 \\
\texttt{sst} & text & 2,971 & 425 & 1,271 & 100 & 29,664 & 3 \\
\texttt{wa} & text & 2,184 & 312 & 936 & 100 & 23,745 & 3 \\
\texttt{wb} & text & 4,259 & 609 & 1,823 & 100 & 40,507 & 3 \\
\texttt{LeQua2022-T1A} & text & 5,000 & --- & --- & 250 & 300 & 2 \\
\texttt{LeQua2022-T1B} & text & 20,000 & --- & --- & 1,000 & 300 & 28 \\
\texttt{LeQua2024-T1} & text & 5,000 & --- & --- & 250 & 256 & 2 \\
\texttt{LeQua2024-T2} & text & 20,000 & --- & --- & 1,000 & 256 & 28 \\
\texttt{CIFAR100} & image & 9,000 & 6,000 & 10,000 & 2,000 & 100 & 100 \\
\texttt{CIFAR10} & image & 9,000 & 6,000 & 10,000 & 500 & 10 & 10 \\
\texttt{MNIST} & image & 10,800 & 7,200 & 10,000 & 500 & 10 & 10 \\
\texttt{FashionMNIST} & image & 10,800 & 7,200 & 10,000 & 500 & 10 & 10 \\
\texttt{SVHN} & image & 13,186 & 8,791 & 26,032 & 500 & 10 & 10 \\
\bottomrule
\end{tabular}
\end{table}

\subsection{Hyperparameter Grids and Model Selection}
\label{app:hyperparameters}

Classifier hyperparameters are treated as quantifier hyperparameters and are optimized using quantification-oriented validation loss. The hyperparameters we explore are specified in Table~\ref{tab:hyperparameter-grids}.

\begin{table}[h]
\centering
\caption{Hyperparameter grids used in model selection.}
\label{tab:hyperparameter-grids}
\begin{tabular}{lll}
\toprule
Component & Hyperparameter & Grid \\
\midrule
Logistic Regression & \(C\) & \texttt{logspace(-4,4,9)} \\
Logistic Regression & \texttt{class\_weight} & \{\texttt{balanced}, \texttt{None}\} \\
Gaussian KDE & \(h\) & \texttt{logspace(-2,0,10)} \\
Aitchison KDE & \(h\) & \texttt{logspace(-1,1,10)} \\
Shrinkage & \(\lambda\) & \(\{0.001, 0.25, 0.5, 0.75, 0.9, 0.999\}\) \\
\bottomrule
\end{tabular}
\end{table}

Hyperparameters are selected by minimizing the absolute error (AE) of the corresponding point estimator on 100 validation bags generated under a Dirichlet\((1)\) sampling protocol. For Bayesian methods, we additionally perform \textit{post hoc} temperature calibration after model selection. The temperature is searched over the grid 
\(
T \in \left\{\tfrac{1}{2}, 1, 1.5, 2, 5, 10, 100, 1000\right\}
\), and selected by minimizing the mean Winkler score (aka interval score) \citep{gneiting2007strictly} across classes on the same validation bags. Given classwise confidence intervals \(\mathcal{C}=\{[\ell_k,u_k]\}_{k=1}^K\) constructed at significance level \(\alpha\), the mean Winkler score is defined as
\begin{equation}
\label{eq:winkler}
    \mathcal{W}(\pi,\mathcal C;\alpha)
    =
    \frac{1}{K}\sum_{k=1}^K
    \left[
    (u_k-\ell_k)
    +
    \frac{2}{\alpha}
    \left(
    \max(0,\ell_k-\pi_k)
    +
    \max(0,\pi_k-u_k)
    \right)
    \right].
\end{equation}

\clearpage

\subsection{Full Tables: Performance of point-estimators.}
\label{app:fulltables:point}


Table~\ref{tab:point} reports the full AE and $W$ results for point-estimation performance. The table uses the following conventions: For each dataset and metric, the best result is highlighted in bold. Results that are \textit{not} statistically significantly different from the best one according to a paired Wilcoxon signed-rank test at significance level 0.05 are marked with  $\dag$ symbol. In addition, cells are color-coded for ease of visualization, with more intense cells indicating lower (better) errors and lighter cells indicating higher (worse) errors.
Statistical significance comparisons are reported in Table~\ref{tab:ae-pairwise-symbols}. Symbols denote pairwise Wilcoxon-Holm comparisons across datasets on AE: $\ll$ / $<$ mean the row method is significantly better than the column method ($p$ < 0.001 / $p$ < 0.05), $\gg$ / $>$ mean significantly worse, $\approx$ means not significantly different, and = marks the diagonal.


\begin{table}[h]
    \caption{Point-estimation performance in terms of AE and $W$}
    \label{tab:point}
    \centering    
    \resizebox{\textwidth}{!}{%
\begin{tabular}{ll|cccccc|cccccc|}
\toprule
\multicolumn{2}{c}{} & \multicolumn{1}{c}{\begin{sideways}CC\;\end{sideways}} & \multicolumn{1}{c}{\begin{sideways}PCC\;\end{sideways}} & \multicolumn{1}{c}{\begin{sideways}BBSE/ACC\;\end{sideways}} & \multicolumn{1}{c}{\begin{sideways}MLLS/EMQ\;\end{sideways}} & \multicolumn{1}{c}{\begin{sideways}KDEy(Gau)\;\end{sideways}} & \multicolumn{1}{c}{\begin{sideways}KDEy(Ait-$\lambda$)\;\end{sideways}} & \multicolumn{1}{c}{\begin{sideways}CC\;\end{sideways}} & \multicolumn{1}{c}{\begin{sideways}PCC\;\end{sideways}} & \multicolumn{1}{c}{\begin{sideways}BBSE/ACC\;\end{sideways}} & \multicolumn{1}{c}{\begin{sideways}MLLS/EMQ\;\end{sideways}} & \multicolumn{1}{c}{\begin{sideways}KDEy(Gau)\;\end{sideways}} & \multicolumn{1}{c}{\begin{sideways}KDEy(Ait-$\lambda$)\;\end{sideways}} \\
\cmidrule(lr){3-8}\cmidrule(lr){9-14}
\multicolumn{2}{c}{} & \multicolumn{6}{c}{AE} & \multicolumn{6}{c}{W} \\
\midrule
\multirow{22}{*}{\begin{sideways}tabular\end{sideways}} & \texttt{abalone} & $0.0450$\cellcolor{olive!28} & $\textbf{0.0424}$\cellcolor{olive!30} & $0.0966$\cellcolor{olive!0} & $0.0655$\cellcolor{olive!17} & $0.0496$\cellcolor{olive!26} & $0.0505$\cellcolor{olive!25} & $7.0645$\cellcolor{olive!29} & $\textbf{6.2734}$\cellcolor{olive!30} & $96.8502$\cellcolor{olive!0} & $15.7999$\cellcolor{olive!26} & $9.7935$\cellcolor{olive!28} & $9.8538$\cellcolor{olive!28} \\
 & \texttt{academic-success} & $0.0953$\cellcolor{olive!7} & $0.1211$\cellcolor{olive!0} & $0.0491$\cellcolor{olive!22} & $0.0260$\cellcolor{olive!29} & $0.0251$\cellcolor{olive!29} & $\textbf{0.0236}$\cellcolor{olive!30} & $0.2541$\cellcolor{olive!12} & $0.4139$\cellcolor{olive!0} & $0.0796$\cellcolor{olive!25} & $0.0231$\cellcolor{olive!29} & $0.0210$\cellcolor{olive!29} & $\textbf{0.0188}$\cellcolor{olive!30} \\
 & \texttt{chess} & $0.0413$\cellcolor{olive!3} & $0.0450$\cellcolor{olive!0} & $0.0409$\cellcolor{olive!4} & $0.0398$\cellcolor{olive!5} & $^\dag0.0165$\cellcolor{olive!29} & $\textbf{0.0163}$\cellcolor{olive!30} & $1.9618$\cellcolor{olive!5} & $2.3372$\cellcolor{olive!0} & $1.1529$\cellcolor{olive!17} & $1.3512$\cellcolor{olive!14} & $\textbf{0.3125}$\cellcolor{olive!30} & $0.3878$\cellcolor{olive!28} \\
 & \texttt{cmc} & $0.1560$\cellcolor{olive!7} & $0.1778$\cellcolor{olive!0} & $0.1065$\cellcolor{olive!25} & $0.0973$\cellcolor{olive!28} & $^\dag0.0937$\cellcolor{olive!29} & $\textbf{0.0934}$\cellcolor{olive!30} & $0.3903$\cellcolor{olive!10} & $0.4937$\cellcolor{olive!0} & $0.2123$\cellcolor{olive!27} & $\textbf{0.1884}$\cellcolor{olive!30} & $^\dag0.1913$\cellcolor{olive!29} & $^\dag0.1925$\cellcolor{olive!29} \\
 & \texttt{connect-4} & $0.1138$\cellcolor{olive!7} & $0.1468$\cellcolor{olive!0} & $0.0270$\cellcolor{olive!28} & $0.0301$\cellcolor{olive!27} & $^\dag0.0205$\cellcolor{olive!29} & $\textbf{0.0201}$\cellcolor{olive!30} & $1.0574$\cellcolor{olive!11} & $1.6642$\cellcolor{olive!0} & $0.0742$\cellcolor{olive!29} & $0.1020$\cellcolor{olive!28} & $0.0412$\cellcolor{olive!29} & $\textbf{0.0395}$\cellcolor{olive!30} \\
 & \texttt{digits} & $0.0035$\cellcolor{olive!10} & $0.0045$\cellcolor{olive!0} & $0.0030$\cellcolor{olive!15} & $0.0023$\cellcolor{olive!22} & $0.0024$\cellcolor{olive!21} & $\textbf{0.0016}$\cellcolor{olive!30} & $0.0034$\cellcolor{olive!8} & $0.0045$\cellcolor{olive!0} & $0.0017$\cellcolor{olive!21} & $0.0012$\cellcolor{olive!25} & $0.0014$\cellcolor{olive!23} & $\textbf{0.0006}$\cellcolor{olive!30} \\
 & \texttt{dry-bean} & $0.0088$\cellcolor{olive!11} & $0.0122$\cellcolor{olive!0} & $0.0040$\cellcolor{olive!26} & $0.0032$\cellcolor{olive!29} & $0.0032$\cellcolor{olive!29} & $\textbf{0.0030}$\cellcolor{olive!30} & $0.0066$\cellcolor{olive!16} & $0.0131$\cellcolor{olive!0} & $0.0020$\cellcolor{olive!27} & $0.0012$\cellcolor{olive!29} & $0.0011$\cellcolor{olive!29} & $\textbf{0.0010}$\cellcolor{olive!30} \\
 & \texttt{hand\_digits} & $0.0048$\cellcolor{olive!9} & $0.0063$\cellcolor{olive!0} & $0.0031$\cellcolor{olive!20} & $0.0023$\cellcolor{olive!25} & $0.0023$\cellcolor{olive!25} & $\textbf{0.0016}$\cellcolor{olive!30} & $0.0054$\cellcolor{olive!13} & $0.0092$\cellcolor{olive!0} & $0.0018$\cellcolor{olive!25} & $0.0010$\cellcolor{olive!28} & $0.0011$\cellcolor{olive!27} & $\textbf{0.0006}$\cellcolor{olive!30} \\
 & \texttt{image\_seg} & $0.0063$\cellcolor{olive!16} & $0.0083$\cellcolor{olive!4} & $0.0090$\cellcolor{olive!0} & $0.0051$\cellcolor{olive!24} & $\textbf{0.0041}$\cellcolor{olive!30} & $0.0041$\cellcolor{olive!29} & $0.0098$\cellcolor{olive!15} & $0.0164$\cellcolor{olive!0} & $0.0114$\cellcolor{olive!11} & $0.0046$\cellcolor{olive!27} & $\textbf{0.0033}$\cellcolor{olive!30} & $0.0034$\cellcolor{olive!29} \\
 & \texttt{isolet} & $0.0021$\cellcolor{olive!6} & $0.0023$\cellcolor{olive!0} & $0.0018$\cellcolor{olive!14} & $0.0014$\cellcolor{olive!26} & $0.0014$\cellcolor{olive!28} & $\textbf{0.0013}$\cellcolor{olive!30} & $0.0077$\cellcolor{olive!6} & $0.0092$\cellcolor{olive!0} & $0.0051$\cellcolor{olive!19} & $0.0034$\cellcolor{olive!26} & $0.0029$\cellcolor{olive!29} & $\textbf{0.0027}$\cellcolor{olive!30} \\
 & \texttt{letter} & $0.0077$\cellcolor{olive!9} & $0.0108$\cellcolor{olive!0} & $0.0048$\cellcolor{olive!19} & $0.0043$\cellcolor{olive!20} & $0.0025$\cellcolor{olive!26} & $\textbf{0.0014}$\cellcolor{olive!30} & $0.0856$\cellcolor{olive!14} & $0.1631$\cellcolor{olive!0} & $0.0300$\cellcolor{olive!24} & $0.0253$\cellcolor{olive!25} & $0.0078$\cellcolor{olive!29} & $\textbf{0.0025}$\cellcolor{olive!30} \\
 & \texttt{mhr} & $0.1216$\cellcolor{olive!5} & $0.1444$\cellcolor{olive!0} & $0.1113$\cellcolor{olive!8} & $0.0549$\cellcolor{olive!22} & $\textbf{0.0257}$\cellcolor{olive!30} & $0.0340$\cellcolor{olive!27} & $0.2154$\cellcolor{olive!6} & $0.2776$\cellcolor{olive!0} & $0.1621$\cellcolor{olive!12} & $0.0443$\cellcolor{olive!26} & $\textbf{0.0104}$\cellcolor{olive!30} & $0.0164$\cellcolor{olive!29} \\
 & \texttt{molecular} & $0.0188$\cellcolor{olive!18} & $0.0388$\cellcolor{olive!0} & $0.0089$\cellcolor{olive!28} & $0.0071$\cellcolor{olive!29} & $0.0072$\cellcolor{olive!29} & $\textbf{0.0068}$\cellcolor{olive!30} & $0.0066$\cellcolor{olive!24} & $0.0294$\cellcolor{olive!0} & $0.0014$\cellcolor{olive!29} & $0.0013$\cellcolor{olive!29} & $0.0011$\cellcolor{olive!29} & $\textbf{0.0011}$\cellcolor{olive!30} \\
 & \texttt{nursery} & $0.0179$\cellcolor{olive!12} & $0.0265$\cellcolor{olive!0} & $\textbf{0.0050}$\cellcolor{olive!30} & $^\dag0.0051$\cellcolor{olive!29} & $0.0061$\cellcolor{olive!28} & $0.0069$\cellcolor{olive!27} & $0.1206$\cellcolor{olive!18} & $0.2875$\cellcolor{olive!0} & $\textbf{0.0139}$\cellcolor{olive!30} & $0.0256$\cellcolor{olive!28} & $0.0703$\cellcolor{olive!23} & $0.0474$\cellcolor{olive!26} \\
 & \texttt{obesity} & $0.0070$\cellcolor{olive!16} & $0.0072$\cellcolor{olive!15} & $0.0095$\cellcolor{olive!0} & $\textbf{0.0050}$\cellcolor{olive!30} & $0.0084$\cellcolor{olive!6} & $0.0085$\cellcolor{olive!6} & $0.0059$\cellcolor{olive!17} & $0.0065$\cellcolor{olive!15} & $0.0102$\cellcolor{olive!0} & $\textbf{0.0029}$\cellcolor{olive!30} & $0.0092$\cellcolor{olive!3} & $0.0093$\cellcolor{olive!3} \\
 & \texttt{page\_block} & $\textbf{0.0141}$\cellcolor{olive!30} & $0.0271$\cellcolor{olive!0} & $0.0217$\cellcolor{olive!12} & $0.0170$\cellcolor{olive!23} & $0.0194$\cellcolor{olive!17} & $0.0215$\cellcolor{olive!12} & $\textbf{0.3355}$\cellcolor{olive!30} & $1.4168$\cellcolor{olive!2} & $1.4289$\cellcolor{olive!2} & $0.5930$\cellcolor{olive!23} & $1.1913$\cellcolor{olive!8} & $1.5181$\cellcolor{olive!0} \\
 & \texttt{phishing} & $0.0953$\cellcolor{olive!3} & $0.1062$\cellcolor{olive!0} & $0.0553$\cellcolor{olive!18} & $0.0281$\cellcolor{olive!27} & $\textbf{0.0221}$\cellcolor{olive!30} & $0.0243$\cellcolor{olive!29} & $1.3672$\cellcolor{olive!1} & $1.4128$\cellcolor{olive!0} & $0.4169$\cellcolor{olive!22} & $0.1092$\cellcolor{olive!29} & $\textbf{0.0717}$\cellcolor{olive!30} & $0.0872$\cellcolor{olive!29} \\
 & \texttt{satellite} & $0.0231$\cellcolor{olive!10} & $0.0312$\cellcolor{olive!0} & $0.0111$\cellcolor{olive!24} & $0.0076$\cellcolor{olive!29} & $\textbf{0.0070}$\cellcolor{olive!30} & $0.0078$\cellcolor{olive!29} & $0.0767$\cellcolor{olive!14} & $0.1463$\cellcolor{olive!0} & $0.0194$\cellcolor{olive!27} & $0.0077$\cellcolor{olive!29} & $\textbf{0.0059}$\cellcolor{olive!30} & $0.0078$\cellcolor{olive!29} \\
 & \texttt{shuttle} & $0.0073$\cellcolor{olive!8} & $0.0093$\cellcolor{olive!0} & $0.0021$\cellcolor{olive!28} & $0.0054$\cellcolor{olive!15} & $0.0019$\cellcolor{olive!29} & $\textbf{0.0018}$\cellcolor{olive!30} & $0.0028$\cellcolor{olive!11} & $0.0043$\cellcolor{olive!0} & $0.0004$\cellcolor{olive!29} & $0.0014$\cellcolor{olive!21} & $0.0003$\cellcolor{olive!29} & $\textbf{0.0002}$\cellcolor{olive!30} \\
 & \texttt{waveform-v1} & $0.0416$\cellcolor{olive!9} & $0.0582$\cellcolor{olive!0} & $0.0250$\cellcolor{olive!19} & $\textbf{0.0070}$\cellcolor{olive!30} & $0.0102$\cellcolor{olive!28} & $^\dag0.0073$\cellcolor{olive!29} & $0.0234$\cellcolor{olive!14} & $0.0436$\cellcolor{olive!0} & $0.0094$\cellcolor{olive!23} & $\textbf{0.0007}$\cellcolor{olive!30} & $0.0015$\cellcolor{olive!29} & $0.0008$\cellcolor{olive!29} \\
 & \texttt{wine-quality} & $0.0998$\cellcolor{olive!16} & $0.1144$\cellcolor{olive!9} & $0.0879$\cellcolor{olive!21} & $\textbf{0.0714}$\cellcolor{olive!30} & $0.1328$\cellcolor{olive!0} & $^\dag0.0729$\cellcolor{olive!29} & $5.4520$\cellcolor{olive!16} & $7.6107$\cellcolor{olive!8} & $2.9600$\cellcolor{olive!25} & $\textbf{1.6965}$\cellcolor{olive!30} & $9.9969$\cellcolor{olive!0} & $3.8726$\cellcolor{olive!22} \\
 & \texttt{yeast} & $0.0539$\cellcolor{olive!15} & $0.0557$\cellcolor{olive!12} & $0.0561$\cellcolor{olive!12} & $0.0647$\cellcolor{olive!0} & $0.0449$\cellcolor{olive!28} & $\textbf{0.0437}$\cellcolor{olive!30} & $7.5748$\cellcolor{olive!17} & $6.3074$\cellcolor{olive!23} & $6.7092$\cellcolor{olive!21} & $11.1195$\cellcolor{olive!0} & $5.8082$\cellcolor{olive!26} & $\textbf{5.0089}$\cellcolor{olive!30} \\
\midrule
\multirow{15}{*}{\begin{sideways}text\end{sideways}} & \texttt{gasp} & $0.0899$\cellcolor{olive!11} & $0.1199$\cellcolor{olive!0} & $0.0518$\cellcolor{olive!25} & $0.0415$\cellcolor{olive!29} & $0.0409$\cellcolor{olive!29} & $\textbf{0.0397}$\cellcolor{olive!30} & $0.6271$\cellcolor{olive!16} & $1.2881$\cellcolor{olive!0} & $0.1704$\cellcolor{olive!28} & $\textbf{0.1001}$\cellcolor{olive!30} & $0.1149$\cellcolor{olive!29} & $^\dag0.1011$\cellcolor{olive!29} \\
 & \texttt{hcr} & $0.1265$\cellcolor{olive!10} & $0.1524$\cellcolor{olive!0} & $0.0962$\cellcolor{olive!23} & $^\dag0.0811$\cellcolor{olive!29} & $0.0814$\cellcolor{olive!29} & $\textbf{0.0793}$\cellcolor{olive!30} & $0.3777$\cellcolor{olive!9} & $0.4836$\cellcolor{olive!0} & $0.2078$\cellcolor{olive!25} & $\textbf{0.1610}$\cellcolor{olive!30} & $^\dag0.1737$\cellcolor{olive!28} & $^\dag0.1635$\cellcolor{olive!29} \\
 & \texttt{omd} & $0.1062$\cellcolor{olive!6} & $0.1182$\cellcolor{olive!0} & $0.0901$\cellcolor{olive!14} & $0.0688$\cellcolor{olive!26} & $^\dag0.0620$\cellcolor{olive!29} & $\textbf{0.0618}$\cellcolor{olive!30} & $0.1844$\cellcolor{olive!9} & $0.2356$\cellcolor{olive!0} & $0.1412$\cellcolor{olive!17} & $0.0876$\cellcolor{olive!27} & $\textbf{0.0715}$\cellcolor{olive!30} & $^\dag0.0745$\cellcolor{olive!29} \\
 & \texttt{sanders} & $0.1114$\cellcolor{olive!7} & $0.1341$\cellcolor{olive!0} & $0.0587$\cellcolor{olive!25} & $^\dag0.0470$\cellcolor{olive!29} & $\textbf{0.0459}$\cellcolor{olive!30} & $^\dag0.0463$\cellcolor{olive!29} & $0.5178$\cellcolor{olive!7} & $0.6633$\cellcolor{olive!0} & $0.1318$\cellcolor{olive!27} & $0.0930$\cellcolor{olive!29} & $\textbf{0.0839}$\cellcolor{olive!30} & $0.0892$\cellcolor{olive!29} \\
 & \texttt{semeval13} & $0.1115$\cellcolor{olive!9} & $0.1365$\cellcolor{olive!0} & $0.0758$\cellcolor{olive!23} & $0.0739$\cellcolor{olive!23} & $\textbf{0.0580}$\cellcolor{olive!30} & $^\dag0.0589$\cellcolor{olive!29} & $0.2692$\cellcolor{olive!16} & $0.5073$\cellcolor{olive!0} & $0.1111$\cellcolor{olive!26} & $0.1040$\cellcolor{olive!27} & $\textbf{0.0642}$\cellcolor{olive!30} & $^\dag0.0658$\cellcolor{olive!29} \\
 & \texttt{semeval14} & $0.1038$\cellcolor{olive!11} & $0.1332$\cellcolor{olive!0} & $0.0668$\cellcolor{olive!25} & $0.0656$\cellcolor{olive!26} & $\textbf{0.0560}$\cellcolor{olive!30} & $^\dag0.0568$\cellcolor{olive!29} & $0.2100$\cellcolor{olive!19} & $0.4723$\cellcolor{olive!0} & $0.0967$\cellcolor{olive!28} & $0.1070$\cellcolor{olive!27} & $\textbf{0.0776}$\cellcolor{olive!30} & $0.0818$\cellcolor{olive!29} \\
 & \texttt{semeval15} & $0.1253$\cellcolor{olive!9} & $0.1425$\cellcolor{olive!0} & $0.0958$\cellcolor{olive!24} & $^\dag0.0865$\cellcolor{olive!29} & $\textbf{0.0863}$\cellcolor{olive!30} & $0.0913$\cellcolor{olive!27} & $0.2880$\cellcolor{olive!14} & $0.4533$\cellcolor{olive!0} & $0.1585$\cellcolor{olive!25} & $\textbf{0.1124}$\cellcolor{olive!30} & $0.1520$\cellcolor{olive!26} & $0.1715$\cellcolor{olive!24} \\
 & \texttt{semeval16} & $0.1389$\cellcolor{olive!19} & $0.1612$\cellcolor{olive!10} & $0.1891$\cellcolor{olive!0} & $\textbf{0.1110}$\cellcolor{olive!30} & $0.1370$\cellcolor{olive!20} & $0.1390$\cellcolor{olive!19} & $0.5124$\cellcolor{olive!26} & $0.6856$\cellcolor{olive!19} & $1.1685$\cellcolor{olive!0} & $\textbf{0.4215}$\cellcolor{olive!30} & $0.6520$\cellcolor{olive!20} & $0.6749$\cellcolor{olive!19} \\
 & \texttt{sst} & $0.1051$\cellcolor{olive!11} & $0.1352$\cellcolor{olive!0} & $0.0749$\cellcolor{olive!22} & $^\dag0.0539$\cellcolor{olive!29} & $0.0541$\cellcolor{olive!29} & $\textbf{0.0538}$\cellcolor{olive!30} & $0.1687$\cellcolor{olive!13} & $0.2659$\cellcolor{olive!0} & $0.0904$\cellcolor{olive!24} & $^\dag0.0487$\cellcolor{olive!29} & $0.0490$\cellcolor{olive!29} & $\textbf{0.0484}$\cellcolor{olive!30} \\
 & \texttt{wa} & $0.0783$\cellcolor{olive!2} & $0.0819$\cellcolor{olive!0} & $0.0517$\cellcolor{olive!22} & $0.0500$\cellcolor{olive!24} & $\textbf{0.0425}$\cellcolor{olive!30} & $^\dag0.0432$\cellcolor{olive!29} & $0.0873$\cellcolor{olive!2} & $0.0938$\cellcolor{olive!0} & $0.0393$\cellcolor{olive!24} & $0.0370$\cellcolor{olive!25} & $\textbf{0.0281}$\cellcolor{olive!30} & $^\dag0.0297$\cellcolor{olive!29} \\
 & \texttt{wb} & $0.0804$\cellcolor{olive!0} & $0.0799$\cellcolor{olive!0} & $0.0439$\cellcolor{olive!23} & $\textbf{0.0333}$\cellcolor{olive!30} & $0.0343$\cellcolor{olive!29} & $0.0346$\cellcolor{olive!29} & $0.1001$\cellcolor{olive!0} & $0.0990$\cellcolor{olive!0} & $0.0316$\cellcolor{olive!25} & $\textbf{0.0183}$\cellcolor{olive!30} & $0.0190$\cellcolor{olive!29} & $0.0194$\cellcolor{olive!29} \\
 & \texttt{LeQua2022-T1A} & $0.0933$\cellcolor{olive!7} & $0.1181$\cellcolor{olive!0} & $0.0314$\cellcolor{olive!27} & $^\dag0.0240$\cellcolor{olive!29} & $0.0242$\cellcolor{olive!29} & $\textbf{0.0234}$\cellcolor{olive!30} & $0.1260$\cellcolor{olive!11} & $0.2024$\cellcolor{olive!0} & $0.0170$\cellcolor{olive!28} & $^\dag0.0099$\cellcolor{olive!29} & $^\dag0.0097$\cellcolor{olive!29} & $\textbf{0.0093}$\cellcolor{olive!30} \\
 & \texttt{LeQua2022-T1B} & $0.0140$\cellcolor{olive!16} & $0.0165$\cellcolor{olive!0} & $0.0130$\cellcolor{olive!22} & $^\dag0.0119$\cellcolor{olive!29} & $0.0118$\cellcolor{olive!29} & $\textbf{0.0118}$\cellcolor{olive!30} & $\textbf{91.1316}$\cellcolor{olive!30} & $118.7259$\cellcolor{olive!2} & $121.7791$\cellcolor{olive!0} & $92.8496$\cellcolor{olive!28} & $109.3588$\cellcolor{olive!12} & $109.3062$\cellcolor{olive!12} \\
 & \texttt{LeQua2024-T1} & $0.0862$\cellcolor{olive!9} & $0.1167$\cellcolor{olive!0} & $0.0270$\cellcolor{olive!27} & $^\dag0.0205$\cellcolor{olive!29} & $\textbf{0.0204}$\cellcolor{olive!30} & $^\dag0.0206$\cellcolor{olive!29} & $0.1114$\cellcolor{olive!13} & $0.1978$\cellcolor{olive!0} & $0.0130$\cellcolor{olive!29} & $^\dag0.0078$\cellcolor{olive!29} & $\textbf{0.0078}$\cellcolor{olive!30} & $^\dag0.0079$\cellcolor{olive!29} \\
 & \texttt{LeQua2024-T2} & $0.0171$\cellcolor{olive!11} & $0.0190$\cellcolor{olive!0} & $0.0170$\cellcolor{olive!11} & $\textbf{0.0137}$\cellcolor{olive!30} & $0.0160$\cellcolor{olive!17} & $0.0156$\cellcolor{olive!19} & $137.5911$\cellcolor{olive!21} & $163.6366$\cellcolor{olive!15} & $231.3429$\cellcolor{olive!0} & $\textbf{100.8582}$\cellcolor{olive!30} & $186.1994$\cellcolor{olive!10} & $196.3228$\cellcolor{olive!8} \\
\midrule
\multirow{5}{*}{\begin{sideways}image\end{sideways}} & \texttt{CIFAR100} & $0.0025$\cellcolor{olive!8} & $0.0028$\cellcolor{olive!0} & $0.0022$\cellcolor{olive!15} & $\textbf{0.0016}$\cellcolor{olive!30} & $0.0018$\cellcolor{olive!25} & $0.0017$\cellcolor{olive!28} & $0.1375$\cellcolor{olive!8} & $0.1695$\cellcolor{olive!0} & $0.1009$\cellcolor{olive!18} & $\textbf{0.0576}$\cellcolor{olive!30} & $0.0712$\cellcolor{olive!26} & $0.0643$\cellcolor{olive!28} \\
 & \texttt{CIFAR10} & $0.0065$\cellcolor{olive!10} & $0.0080$\cellcolor{olive!0} & $0.0044$\cellcolor{olive!24} & $\textbf{0.0036}$\cellcolor{olive!30} & $0.0038$\cellcolor{olive!28} & $^\dag0.0036$\cellcolor{olive!29} & $0.0092$\cellcolor{olive!10} & $0.0127$\cellcolor{olive!0} & $0.0037$\cellcolor{olive!26} & $^\dag0.0025$\cellcolor{olive!29} & $0.0027$\cellcolor{olive!29} & $\textbf{0.0024}$\cellcolor{olive!30} \\
 & \texttt{MNIST} & $0.0016$\cellcolor{olive!7} & $0.0018$\cellcolor{olive!0} & $0.0017$\cellcolor{olive!3} & $0.0012$\cellcolor{olive!28} & $0.0012$\cellcolor{olive!26} & $\textbf{0.0011}$\cellcolor{olive!30} & $0.0007$\cellcolor{olive!0} & $0.0006$\cellcolor{olive!4} & $0.0005$\cellcolor{olive!11} & $0.0003$\cellcolor{olive!29} & $0.0003$\cellcolor{olive!25} & $\textbf{0.0003}$\cellcolor{olive!30} \\
 & \texttt{fashionMNIST} & $0.0083$\cellcolor{olive!9} & $0.0103$\cellcolor{olive!0} & $0.0047$\cellcolor{olive!25} & $0.0041$\cellcolor{olive!28} & $\textbf{0.0038}$\cellcolor{olive!30} & $0.0038$\cellcolor{olive!29} & $0.0201$\cellcolor{olive!9} & $0.0274$\cellcolor{olive!0} & $0.0052$\cellcolor{olive!27} & $0.0041$\cellcolor{olive!29} & $\textbf{0.0036}$\cellcolor{olive!30} & $0.0036$\cellcolor{olive!29} \\
 & \texttt{SVHN} & $0.0098$\cellcolor{olive!3} & $0.0106$\cellcolor{olive!0} & $0.0061$\cellcolor{olive!22} & $0.0053$\cellcolor{olive!26} & $0.0049$\cellcolor{olive!28} & $\textbf{0.0045}$\cellcolor{olive!30} & $0.0235$\cellcolor{olive!6} & $0.0284$\cellcolor{olive!0} & $0.0091$\cellcolor{olive!24} & $0.0072$\cellcolor{olive!27} & $0.0060$\cellcolor{olive!28} & $\textbf{0.0052}$\cellcolor{olive!30} \\
\midrule
 & Rank & $4.6$\cellcolor{olive!8} & $5.7$\cellcolor{olive!0} & $4.1$\cellcolor{olive!12} & $2.5$\cellcolor{olive!25} & $2.2$\cellcolor{olive!27} & $\textbf{1.9}$\cellcolor{olive!30} & $4.5$\cellcolor{olive!8} & $5.5$\cellcolor{olive!0} & $4.0$\cellcolor{olive!13} & $2.5$\cellcolor{olive!27} & $2.3$\cellcolor{olive!29} & $\textbf{2.2}$\cellcolor{olive!30} \\
\bottomrule
\end{tabular}

    }%
\end{table}

\begin{table*}[t]
\centering
\caption{Pairwise symbolic matrix of Wilcoxon-Holm comparisons on AE.}
\label{tab:ae-pairwise-symbols}
\begin{tabular}{lcccccc}
\toprule
 & CC & PCC & BBSE/ACC & MLLS/EMQ & KDEy(Gau) & KDEy(Ait-$\lambda$) \\
\midrule
CC & $=$ & $\ll$ & $\gg$ & $\gg$ & $\gg$ & $\gg$ \\
PCC & $\gg$ & $=$ & $\gg$ & $\gg$ & $\gg$ & $\gg$ \\
BBSE/ACC & $\ll$ & $\ll$ & $=$ & $\gg$ & $\gg$ & $\gg$ \\
MLLS/EMQ & $\ll$ & $\ll$ & $\ll$ & $=$ & $\approx$ & $>$ \\
KDEy(Gau) & $\ll$ & $\ll$ & $\ll$ & $\approx$ & $=$ & $\approx$ \\
KDEy(Ait-$\lambda$) & $\ll$ & $\ll$ & $\ll$ & $<$ & $\approx$ & $=$ \\
\bottomrule
\end{tabular}
\end{table*}


\subsection{Full Tables: Performance of uncertainty-aware estimators.}
\label{app:fulltables:bayes}


Table~\ref{tab:bayest1} reports the full AE and $W$ results for the point-estimation performance of Bayesian methods and bootstrap-based baselines. Notational conventions are as in Table~\ref{tab:point}.


\begin{table}[h]
    \caption{Point-estimation performance of Bayesian methods in terms of AE and $W$}
    \label{tab:bayest1}
    \centering
    \resizebox{\textwidth}{!}{%
\begin{tabular}{ll|cccccc|cccccc|}
\toprule
\multicolumn{2}{c}{} & \multicolumn{1}{c}{\begin{sideways}Boots-CC\;\end{sideways}} & \multicolumn{1}{c}{\begin{sideways}Boots-PCC\;\end{sideways}} & \multicolumn{1}{c}{\begin{sideways}Bayes-ACC\;\end{sideways}} & \multicolumn{1}{c}{\begin{sideways}Bayes-EMQ (MAPLS)\;\end{sideways}} & \multicolumn{1}{c}{\begin{sideways}Bayes-KDEy(Gau)\;\end{sideways}} & \multicolumn{1}{c}{\begin{sideways}Bayes-KDEy(Ait-$\lambda$)\;\end{sideways}} & \multicolumn{1}{c}{\begin{sideways}Boots-CC\;\end{sideways}} & \multicolumn{1}{c}{\begin{sideways}Boots-PCC\;\end{sideways}} & \multicolumn{1}{c}{\begin{sideways}Bayes-ACC\;\end{sideways}} & \multicolumn{1}{c}{\begin{sideways}Bayes-EMQ (MAPLS)\;\end{sideways}} & \multicolumn{1}{c}{\begin{sideways}Bayes-KDEy(Gau)\;\end{sideways}} & \multicolumn{1}{c}{\begin{sideways}Bayes-KDEy(Ait-$\lambda$)\;\end{sideways}} \\
\cmidrule(lr){3-8}\cmidrule(lr){9-14}
\multicolumn{2}{c}{} & \multicolumn{6}{c}{AE} & \multicolumn{6}{c}{W} \\
\midrule
\multirow{22}{*}{\begin{sideways}tabular\end{sideways}} & \texttt{abalone} & $0.0450$\cellcolor{olive!28} & $\textbf{0.0424}$\cellcolor{olive!30} & $0.0888$\cellcolor{olive!0} & $0.0555$\cellcolor{olive!21} & $0.0489$\cellcolor{olive!25} & $0.0489$\cellcolor{olive!25} & $7.0638$\cellcolor{olive!29} & $\textbf{6.2735}$\cellcolor{olive!30} & $40.9764$\cellcolor{olive!0} & $13.5245$\cellcolor{olive!23} & $9.5851$\cellcolor{olive!27} & $9.5199$\cellcolor{olive!27} \\
 & \texttt{academic-success} & $0.0953$\cellcolor{olive!7} & $0.1211$\cellcolor{olive!0} & $0.0457$\cellcolor{olive!23} & $0.0254$\cellcolor{olive!29} & $0.0244$\cellcolor{olive!29} & $\textbf{0.0230}$\cellcolor{olive!30} & $0.2542$\cellcolor{olive!12} & $0.4139$\cellcolor{olive!0} & $0.0698$\cellcolor{olive!26} & $0.0220$\cellcolor{olive!29} & $0.0199$\cellcolor{olive!29} & $\textbf{0.0178}$\cellcolor{olive!30} \\
 & \texttt{chess} & $0.0413$\cellcolor{olive!3} & $0.0450$\cellcolor{olive!0} & $0.0280$\cellcolor{olive!17} & $0.0345$\cellcolor{olive!10} & $\textbf{0.0155}$\cellcolor{olive!30} & $0.0161$\cellcolor{olive!29} & $1.9621$\cellcolor{olive!5} & $2.3368$\cellcolor{olive!0} & $0.5852$\cellcolor{olive!25} & $1.1171$\cellcolor{olive!18} & $\textbf{0.3102}$\cellcolor{olive!30} & $0.3897$\cellcolor{olive!28} \\
 & \texttt{cmc} & $0.1560$\cellcolor{olive!7} & $0.1778$\cellcolor{olive!0} & $0.0987$\cellcolor{olive!27} & $0.0945$\cellcolor{olive!28} & $\textbf{0.0911}$\cellcolor{olive!30} & $^\dag0.0912$\cellcolor{olive!29} & $0.3902$\cellcolor{olive!9} & $0.4938$\cellcolor{olive!0} & $^\dag0.1830$\cellcolor{olive!29} & $\textbf{0.1808}$\cellcolor{olive!30} & $^\dag0.1850$\cellcolor{olive!29} & $^\dag0.1872$\cellcolor{olive!29} \\
 & \texttt{connect-4} & $0.1138$\cellcolor{olive!7} & $0.1468$\cellcolor{olive!0} & $0.0266$\cellcolor{olive!28} & $0.0284$\cellcolor{olive!28} & $\textbf{0.0201}$\cellcolor{olive!30} & $^\dag0.0201$\cellcolor{olive!29} & $1.0574$\cellcolor{olive!11} & $1.6643$\cellcolor{olive!0} & $0.0710$\cellcolor{olive!29} & $0.0896$\cellcolor{olive!29} & $^\dag0.0390$\cellcolor{olive!29} & $\textbf{0.0388}$\cellcolor{olive!30} \\
 & \texttt{digits} & $0.0036$\cellcolor{olive!10} & $0.0045$\cellcolor{olive!0} & $0.0031$\cellcolor{olive!15} & $0.0025$\cellcolor{olive!21} & $0.0026$\cellcolor{olive!21} & $\textbf{0.0018}$\cellcolor{olive!30} & $0.0034$\cellcolor{olive!8} & $0.0046$\cellcolor{olive!0} & $0.0018$\cellcolor{olive!21} & $0.0013$\cellcolor{olive!25} & $0.0015$\cellcolor{olive!23} & $\textbf{0.0007}$\cellcolor{olive!30} \\
 & \texttt{dry-bean} & $0.0088$\cellcolor{olive!11} & $0.0122$\cellcolor{olive!0} & $0.0042$\cellcolor{olive!26} & $0.0034$\cellcolor{olive!29} & $0.0034$\cellcolor{olive!29} & $\textbf{0.0032}$\cellcolor{olive!30} & $0.0066$\cellcolor{olive!16} & $0.0131$\cellcolor{olive!0} & $0.0025$\cellcolor{olive!26} & $0.0014$\cellcolor{olive!29} & $0.0013$\cellcolor{olive!29} & $\textbf{0.0012}$\cellcolor{olive!30} \\
 & \texttt{hand\_digits} & $0.0048$\cellcolor{olive!9} & $0.0063$\cellcolor{olive!0} & $0.0031$\cellcolor{olive!21} & $0.0024$\cellcolor{olive!25} & $0.0026$\cellcolor{olive!24} & $\textbf{0.0017}$\cellcolor{olive!30} & $0.0054$\cellcolor{olive!13} & $0.0092$\cellcolor{olive!0} & $0.0018$\cellcolor{olive!25} & $0.0011$\cellcolor{olive!28} & $0.0013$\cellcolor{olive!27} & $\textbf{0.0006}$\cellcolor{olive!30} \\
 & \texttt{image\_seg} & $0.0065$\cellcolor{olive!17} & $0.0084$\cellcolor{olive!6} & $0.0095$\cellcolor{olive!0} & $0.0054$\cellcolor{olive!24} & $\textbf{0.0044}$\cellcolor{olive!30} & $0.0044$\cellcolor{olive!29} & $0.0098$\cellcolor{olive!15} & $0.0164$\cellcolor{olive!0} & $0.0113$\cellcolor{olive!11} & $0.0045$\cellcolor{olive!27} & $\textbf{0.0035}$\cellcolor{olive!30} & $0.0036$\cellcolor{olive!29} \\
 & \texttt{isolet} & $0.0021$\cellcolor{olive!18} & $0.0023$\cellcolor{olive!12} & $0.0028$\cellcolor{olive!0} & $0.0018$\cellcolor{olive!25} & $^\dag0.0017$\cellcolor{olive!29} & $\textbf{0.0017}$\cellcolor{olive!30} & $0.0078$\cellcolor{olive!10} & $0.0093$\cellcolor{olive!3} & $0.0100$\cellcolor{olive!0} & $0.0044$\cellcolor{olive!26} & $^\dag0.0038$\cellcolor{olive!29} & $\textbf{0.0038}$\cellcolor{olive!30} \\
 & \texttt{letter} & $0.0077$\cellcolor{olive!10} & $0.0108$\cellcolor{olive!0} & $0.0047$\cellcolor{olive!19} & $0.0042$\cellcolor{olive!21} & $0.0028$\cellcolor{olive!26} & $\textbf{0.0016}$\cellcolor{olive!30} & $0.0857$\cellcolor{olive!14} & $0.1631$\cellcolor{olive!0} & $0.0282$\cellcolor{olive!25} & $0.0245$\cellcolor{olive!25} & $0.0098$\cellcolor{olive!28} & $\textbf{0.0031}$\cellcolor{olive!30} \\
 & \texttt{mhr} & $0.1216$\cellcolor{olive!5} & $0.1444$\cellcolor{olive!0} & $0.0951$\cellcolor{olive!12} & $0.0533$\cellcolor{olive!22} & $\textbf{0.0242}$\cellcolor{olive!30} & $0.0324$\cellcolor{olive!27} & $0.2154$\cellcolor{olive!6} & $0.2777$\cellcolor{olive!0} & $0.1269$\cellcolor{olive!16} & $0.0420$\cellcolor{olive!26} & $\textbf{0.0092}$\cellcolor{olive!30} & $0.0147$\cellcolor{olive!29} \\
 & \texttt{molecular} & $0.0188$\cellcolor{olive!18} & $0.0388$\cellcolor{olive!0} & $0.0087$\cellcolor{olive!27} & $0.0070$\cellcolor{olive!29} & $0.0069$\cellcolor{olive!29} & $\textbf{0.0066}$\cellcolor{olive!30} & $0.0066$\cellcolor{olive!24} & $0.0294$\cellcolor{olive!0} & $0.0014$\cellcolor{olive!29} & $0.0012$\cellcolor{olive!29} & $0.0011$\cellcolor{olive!29} & $\textbf{0.0010}$\cellcolor{olive!30} \\
 & \texttt{nursery} & $0.0179$\cellcolor{olive!12} & $0.0265$\cellcolor{olive!0} & $\textbf{0.0052}$\cellcolor{olive!30} & $^\dag0.0053$\cellcolor{olive!29} & $0.0061$\cellcolor{olive!28} & $0.0058$\cellcolor{olive!29} & $0.1207$\cellcolor{olive!18} & $0.2875$\cellcolor{olive!0} & $\textbf{0.0138}$\cellcolor{olive!30} & $0.0257$\cellcolor{olive!28} & $0.0340$\cellcolor{olive!27} & $0.0259$\cellcolor{olive!28} \\
 & \texttt{obesity} & $0.0070$\cellcolor{olive!19} & $0.0072$\cellcolor{olive!18} & $0.0100$\cellcolor{olive!0} & $\textbf{0.0054}$\cellcolor{olive!30} & $0.0084$\cellcolor{olive!10} & $0.0085$\cellcolor{olive!9} & $0.0059$\cellcolor{olive!19} & $0.0065$\cellcolor{olive!16} & $0.0106$\cellcolor{olive!0} & $\textbf{0.0033}$\cellcolor{olive!30} & $0.0095$\cellcolor{olive!4} & $0.0096$\cellcolor{olive!4} \\
 & \texttt{page\_block} & $\textbf{0.0141}$\cellcolor{olive!30} & $0.0272$\cellcolor{olive!0} & $0.0255$\cellcolor{olive!3} & $0.0169$\cellcolor{olive!23} & $0.0189$\cellcolor{olive!18} & $0.0209$\cellcolor{olive!14} & $\textbf{0.3351}$\cellcolor{olive!30} & $1.4162$\cellcolor{olive!11} & $2.1035$\cellcolor{olive!0} & $0.5741$\cellcolor{olive!25} & $1.1079$\cellcolor{olive!16} & $1.4118$\cellcolor{olive!11} \\
 & \texttt{phishing} & $0.0953$\cellcolor{olive!3} & $0.1062$\cellcolor{olive!0} & $0.0503$\cellcolor{olive!19} & $0.0260$\cellcolor{olive!28} & $\textbf{0.0213}$\cellcolor{olive!30} & $0.0234$\cellcolor{olive!29} & $1.3672$\cellcolor{olive!1} & $1.4132$\cellcolor{olive!0} & $0.3629$\cellcolor{olive!23} & $0.0926$\cellcolor{olive!29} & $\textbf{0.0664}$\cellcolor{olive!30} & $0.0802$\cellcolor{olive!29} \\
 & \texttt{satellite} & $0.0231$\cellcolor{olive!10} & $0.0312$\cellcolor{olive!0} & $0.0111$\cellcolor{olive!24} & $0.0075$\cellcolor{olive!29} & $\textbf{0.0070}$\cellcolor{olive!30} & $0.0079$\cellcolor{olive!28} & $0.0768$\cellcolor{olive!14} & $0.1463$\cellcolor{olive!0} & $0.0185$\cellcolor{olive!27} & $0.0073$\cellcolor{olive!29} & $\textbf{0.0058}$\cellcolor{olive!30} & $0.0080$\cellcolor{olive!29} \\
 & \texttt{shuttle} & $0.0073$\cellcolor{olive!8} & $0.0093$\cellcolor{olive!0} & $^\dag0.0022$\cellcolor{olive!29} & $0.0057$\cellcolor{olive!14} & $0.0022$\cellcolor{olive!29} & $\textbf{0.0021}$\cellcolor{olive!30} & $0.0028$\cellcolor{olive!11} & $0.0043$\cellcolor{olive!0} & $^\dag0.0004$\cellcolor{olive!29} & $0.0016$\cellcolor{olive!20} & $0.0004$\cellcolor{olive!29} & $\textbf{0.0003}$\cellcolor{olive!30} \\
 & \texttt{waveform-v1} & $0.0416$\cellcolor{olive!9} & $0.0582$\cellcolor{olive!0} & $0.0240$\cellcolor{olive!20} & $\textbf{0.0072}$\cellcolor{olive!30} & $0.0100$\cellcolor{olive!28} & $^\dag0.0073$\cellcolor{olive!29} & $0.0234$\cellcolor{olive!14} & $0.0436$\cellcolor{olive!0} & $0.0086$\cellcolor{olive!24} & $\textbf{0.0007}$\cellcolor{olive!30} & $0.0015$\cellcolor{olive!29} & $^\dag0.0008$\cellcolor{olive!29} \\
 & \texttt{wine-quality} & $0.0998$\cellcolor{olive!8} & $0.1144$\cellcolor{olive!0} & $\textbf{0.0618}$\cellcolor{olive!30} & $0.0662$\cellcolor{olive!27} & $0.1017$\cellcolor{olive!7} & $0.0714$\cellcolor{olive!24} & $5.4516$\cellcolor{olive!10} & $7.6112$\cellcolor{olive!0} & $^\dag1.6117$\cellcolor{olive!29} & $\textbf{1.5160}$\cellcolor{olive!30} & $6.4950$\cellcolor{olive!5} & $3.8283$\cellcolor{olive!18} \\
 & \texttt{yeast} & $0.0539$\cellcolor{olive!13} & $0.0557$\cellcolor{olive!10} & $0.0476$\cellcolor{olive!23} & $0.0623$\cellcolor{olive!0} & $0.0446$\cellcolor{olive!28} & $\textbf{0.0434}$\cellcolor{olive!30} & $7.5742$\cellcolor{olive!14} & $6.3076$\cellcolor{olive!22} & $5.5444$\cellcolor{olive!26} & $10.1276$\cellcolor{olive!0} & $5.7306$\cellcolor{olive!25} & $\textbf{4.9885}$\cellcolor{olive!30} \\
\midrule
\multirow{15}{*}{\begin{sideways}text\end{sideways}} & \texttt{gasp} & $0.0900$\cellcolor{olive!11} & $0.1199$\cellcolor{olive!0} & $0.0503$\cellcolor{olive!26} & $0.0446$\cellcolor{olive!28} & $^\dag0.0415$\cellcolor{olive!29} & $\textbf{0.0409}$\cellcolor{olive!30} & $0.6272$\cellcolor{olive!16} & $1.2870$\cellcolor{olive!0} & $0.1595$\cellcolor{olive!28} & $\textbf{0.1084}$\cellcolor{olive!30} & $^\dag0.1230$\cellcolor{olive!29} & $^\dag0.1134$\cellcolor{olive!29} \\
 & \texttt{hcr} & $0.1265$\cellcolor{olive!10} & $0.1524$\cellcolor{olive!0} & $0.0840$\cellcolor{olive!27} & $0.0848$\cellcolor{olive!27} & $0.0809$\cellcolor{olive!28} & $\textbf{0.0780}$\cellcolor{olive!30} & $0.3778$\cellcolor{olive!9} & $0.4836$\cellcolor{olive!0} & $\textbf{0.1616}$\cellcolor{olive!30} & $^\dag0.1729$\cellcolor{olive!28} & $^\dag0.1781$\cellcolor{olive!28} & $^\dag0.1662$\cellcolor{olive!29} \\
 & \texttt{omd} & $0.1062$\cellcolor{olive!6} & $0.1181$\cellcolor{olive!0} & $0.0825$\cellcolor{olive!18} & $0.0725$\cellcolor{olive!23} & $\textbf{0.0609}$\cellcolor{olive!30} & $^\dag0.0624$\cellcolor{olive!29} & $0.1845$\cellcolor{olive!9} & $0.2356$\cellcolor{olive!0} & $0.1176$\cellcolor{olive!20} & $0.0952$\cellcolor{olive!24} & $\textbf{0.0670}$\cellcolor{olive!30} & $^\dag0.0749$\cellcolor{olive!28} \\
 & \texttt{sanders} & $0.1114$\cellcolor{olive!7} & $0.1341$\cellcolor{olive!0} & $0.0534$\cellcolor{olive!26} & $\textbf{0.0415}$\cellcolor{olive!30} & $0.0454$\cellcolor{olive!28} & $0.0473$\cellcolor{olive!28} & $0.5176$\cellcolor{olive!7} & $0.6638$\cellcolor{olive!0} & $0.1101$\cellcolor{olive!27} & $\textbf{0.0696}$\cellcolor{olive!30} & $0.0841$\cellcolor{olive!29} & $0.0942$\cellcolor{olive!28} \\
 & \texttt{semeval13} & $0.1115$\cellcolor{olive!8} & $0.1364$\cellcolor{olive!0} & $0.0653$\cellcolor{olive!25} & $0.0722$\cellcolor{olive!23} & $^\dag0.0536$\cellcolor{olive!29} & $\textbf{0.0529}$\cellcolor{olive!30} & $0.2692$\cellcolor{olive!15} & $0.5069$\cellcolor{olive!0} & $0.0869$\cellcolor{olive!27} & $0.1059$\cellcolor{olive!26} & $0.0566$\cellcolor{olive!29} & $\textbf{0.0536}$\cellcolor{olive!30} \\
 & \texttt{semeval14} & $0.1039$\cellcolor{olive!10} & $0.1332$\cellcolor{olive!0} & $0.0587$\cellcolor{olive!27} & $0.0659$\cellcolor{olive!25} & $0.0536$\cellcolor{olive!29} & $\textbf{0.0525}$\cellcolor{olive!30} & $0.2100$\cellcolor{olive!19} & $0.4723$\cellcolor{olive!0} & $\textbf{0.0678}$\cellcolor{olive!30} & $0.1075$\cellcolor{olive!27} & $^\dag0.0727$\cellcolor{olive!29} & $^\dag0.0721$\cellcolor{olive!29} \\
 & \texttt{semeval15} & $0.1254$\cellcolor{olive!8} & $0.1425$\cellcolor{olive!0} & $0.0900$\cellcolor{olive!26} & $0.0848$\cellcolor{olive!28} & $\textbf{0.0823}$\cellcolor{olive!30} & $0.0862$\cellcolor{olive!28} & $0.2884$\cellcolor{olive!14} & $0.4532$\cellcolor{olive!0} & $0.1458$\cellcolor{olive!27} & $\textbf{0.1121}$\cellcolor{olive!30} & $0.1374$\cellcolor{olive!27} & $0.1505$\cellcolor{olive!26} \\
 & \texttt{semeval16} & $0.1389$\cellcolor{olive!12} & $0.1611$\cellcolor{olive!0} & $0.1261$\cellcolor{olive!19} & $^\dag0.1101$\cellcolor{olive!28} & $\textbf{0.1082}$\cellcolor{olive!30} & $^\dag0.1087$\cellcolor{olive!29} & $0.5126$\cellcolor{olive!18} & $0.6855$\cellcolor{olive!0} & $0.6068$\cellcolor{olive!8} & $\textbf{0.3997}$\cellcolor{olive!30} & $0.4494$\cellcolor{olive!24} & $0.4573$\cellcolor{olive!23} \\
 & \texttt{sst} & $0.1050$\cellcolor{olive!10} & $0.1352$\cellcolor{olive!0} & $0.0702$\cellcolor{olive!23} & $0.0562$\cellcolor{olive!28} & $0.0532$\cellcolor{olive!29} & $\textbf{0.0522}$\cellcolor{olive!30} & $0.1685$\cellcolor{olive!13} & $0.2659$\cellcolor{olive!0} & $0.0826$\cellcolor{olive!24} & $0.0509$\cellcolor{olive!29} & $0.0465$\cellcolor{olive!29} & $\textbf{0.0450}$\cellcolor{olive!30} \\
 & \texttt{wa} & $0.0784$\cellcolor{olive!2} & $0.0819$\cellcolor{olive!0} & $0.0500$\cellcolor{olive!24} & $0.0484$\cellcolor{olive!26} & $\textbf{0.0432}$\cellcolor{olive!30} & $^\dag0.0445$\cellcolor{olive!29} & $0.0874$\cellcolor{olive!2} & $0.0938$\cellcolor{olive!0} & $0.0368$\cellcolor{olive!26} & $0.0339$\cellcolor{olive!27} & $\textbf{0.0288}$\cellcolor{olive!30} & $^\dag0.0306$\cellcolor{olive!29} \\
 & \texttt{wb} & $0.0804$\cellcolor{olive!0} & $0.0799$\cellcolor{olive!0} & $0.0432$\cellcolor{olive!24} & $^\dag0.0356$\cellcolor{olive!29} & $\textbf{0.0356}$\cellcolor{olive!30} & $^\dag0.0359$\cellcolor{olive!29} & $0.1002$\cellcolor{olive!0} & $0.0990$\cellcolor{olive!0} & $0.0312$\cellcolor{olive!25} & $^\dag0.0204$\cellcolor{olive!29} & $\textbf{0.0202}$\cellcolor{olive!30} & $0.0205$\cellcolor{olive!29} \\
 & \texttt{LeQua2022-T1A} & $0.0933$\cellcolor{olive!7} & $0.1181$\cellcolor{olive!0} & $0.0326$\cellcolor{olive!27} & $^\dag0.0247$\cellcolor{olive!29} & $^\dag0.0243$\cellcolor{olive!29} & $\textbf{0.0242}$\cellcolor{olive!30} & $0.1261$\cellcolor{olive!11} & $0.2024$\cellcolor{olive!0} & $0.0180$\cellcolor{olive!28} & $^\dag0.0101$\cellcolor{olive!29} & $^\dag0.0098$\cellcolor{olive!29} & $\textbf{0.0097}$\cellcolor{olive!30} \\
 & \texttt{LeQua2022-T1B} & $0.0140$\cellcolor{olive!13} & $0.0165$\cellcolor{olive!0} & $0.0122$\cellcolor{olive!22} & $0.0119$\cellcolor{olive!24} & $0.0109$\cellcolor{olive!29} & $\textbf{0.0109}$\cellcolor{olive!30} & $^\dag91.1317$\cellcolor{olive!29} & $118.7730$\cellcolor{olive!22} & $199.2715$\cellcolor{olive!0} & $\textbf{89.9244}$\cellcolor{olive!30} & $99.0783$\cellcolor{olive!27} & $99.0695$\cellcolor{olive!27} \\
 & \texttt{LeQua2024-T1} & $0.0862$\cellcolor{olive!9} & $0.1167$\cellcolor{olive!0} & $0.0265$\cellcolor{olive!28} & $0.0218$\cellcolor{olive!29} & $^\dag0.0207$\cellcolor{olive!29} & $\textbf{0.0204}$\cellcolor{olive!30} & $0.1114$\cellcolor{olive!13} & $0.1978$\cellcolor{olive!0} & $0.0125$\cellcolor{olive!29} & $0.0085$\cellcolor{olive!29} & $^\dag0.0079$\cellcolor{olive!29} & $\textbf{0.0078}$\cellcolor{olive!30} \\
 & \texttt{LeQua2024-T2} & $0.0171$\cellcolor{olive!10} & $0.0190$\cellcolor{olive!0} & $0.0145$\cellcolor{olive!25} & $\textbf{0.0136}$\cellcolor{olive!30} & $^\dag0.0138$\cellcolor{olive!29} & $0.0139$\cellcolor{olive!28} & $137.5365$\cellcolor{olive!19} & $163.6220$\cellcolor{olive!12} & $206.9347$\cellcolor{olive!0} & $\textbf{101.0987}$\cellcolor{olive!30} & $152.2147$\cellcolor{olive!15} & $171.8516$\cellcolor{olive!9} \\
\midrule
\multirow{5}{*}{\begin{sideways}image\end{sideways}} & \texttt{CIFAR100} & $0.0025$\cellcolor{olive!8} & $0.0028$\cellcolor{olive!0} & $0.0028$\cellcolor{olive!1} & $\textbf{0.0017}$\cellcolor{olive!30} & $0.0018$\cellcolor{olive!27} & $0.0017$\cellcolor{olive!29} & $0.1375$\cellcolor{olive!10} & $0.1695$\cellcolor{olive!2} & $0.1797$\cellcolor{olive!0} & $\textbf{0.0602}$\cellcolor{olive!30} & $0.0692$\cellcolor{olive!27} & $0.0659$\cellcolor{olive!28} \\
 & \texttt{CIFAR10} & $0.0065$\cellcolor{olive!9} & $0.0080$\cellcolor{olive!0} & $0.0046$\cellcolor{olive!23} & $0.0041$\cellcolor{olive!26} & $0.0043$\cellcolor{olive!25} & $\textbf{0.0036}$\cellcolor{olive!30} & $0.0092$\cellcolor{olive!10} & $0.0127$\cellcolor{olive!0} & $0.0039$\cellcolor{olive!25} & $0.0031$\cellcolor{olive!28} & $0.0034$\cellcolor{olive!26} & $\textbf{0.0024}$\cellcolor{olive!30} \\
 & \texttt{MNIST} & $\textbf{0.0017}$\cellcolor{olive!30} & $0.0018$\cellcolor{olive!25} & $0.0019$\cellcolor{olive!19} & $0.0021$\cellcolor{olive!0} & $0.0020$\cellcolor{olive!5} & $0.0019$\cellcolor{olive!13} & $^\dag0.0007$\cellcolor{olive!10} & $\textbf{0.0007}$\cellcolor{olive!30} & $0.0007$\cellcolor{olive!18} & $0.0007$\cellcolor{olive!16} & $0.0008$\cellcolor{olive!0} & $0.0007$\cellcolor{olive!26} \\
 & \texttt{fashionMNIST} & $0.0083$\cellcolor{olive!10} & $0.0104$\cellcolor{olive!0} & $0.0049$\cellcolor{olive!27} & $0.0046$\cellcolor{olive!28} & $0.0044$\cellcolor{olive!29} & $\textbf{0.0044}$\cellcolor{olive!30} & $0.0201$\cellcolor{olive!9} & $0.0274$\cellcolor{olive!0} & $0.0056$\cellcolor{olive!28} & $0.0049$\cellcolor{olive!29} & $^\dag0.0042$\cellcolor{olive!29} & $\textbf{0.0042}$\cellcolor{olive!30} \\
 & \texttt{SVHN} & $0.0099$\cellcolor{olive!3} & $0.0106$\cellcolor{olive!0} & $0.0061$\cellcolor{olive!23} & $0.0056$\cellcolor{olive!25} & $0.0052$\cellcolor{olive!27} & $\textbf{0.0047}$\cellcolor{olive!30} & $0.0235$\cellcolor{olive!6} & $0.0284$\cellcolor{olive!0} & $0.0088$\cellcolor{olive!25} & $0.0073$\cellcolor{olive!27} & $0.0067$\cellcolor{olive!28} & $\textbf{0.0055}$\cellcolor{olive!30} \\
\midrule
 & Rank & $4.6$\cellcolor{olive!8} & $5.6$\cellcolor{olive!0} & $3.9$\cellcolor{olive!13} & $2.9$\cellcolor{olive!21} & $2.1$\cellcolor{olive!27} & $\textbf{1.9}$\cellcolor{olive!30} & $4.5$\cellcolor{olive!8} & $5.5$\cellcolor{olive!0} & $3.9$\cellcolor{olive!14} & $2.6$\cellcolor{olive!25} & $2.5$\cellcolor{olive!26} & $\textbf{2.1}$\cellcolor{olive!30} \\
\bottomrule
\end{tabular}

    }%
\end{table}


Tables~\ref{tab:coverage} and~\ref{tab:amplitude} report coverage (hard, \(\hCov\), and soft, \(\sCov\)) and amplitude results for 95\% posterior intervals constructed from the 1000 samples generated by each method, with and without temperature calibration. In Table~\ref{tab:coverage}, highlighted cells indicate coverage values within the range \(95\%\pm5\%\), whereas in Table~\ref{tab:amplitude}, highlighted cells indicate amplitude values covering at most \(5\%\) of the total simplex volume. As expected, temperature calibration often improves coverage at the expense of larger amplitude. Since the temperature grid includes one value below the default \(T=1\), namely \(T=\frac{1}{2}\), calibration may in a few cases (e.g., datasets \texttt{wa} and \texttt{wb}) lead to lower coverage and smaller amplitude. Bootstrap methods are unaffected by temperature calibration.

\begin{table}[h]
    \caption{Coverage before and after temperature calibration. Left: hard coverage; right: soft coverage. Highlighted cells indicate values within the range $95\%\pm5\%$.}
    \label{tab:coverage}
    \centering
    \resizebox{.48\textwidth}{!}{%
\begin{tabular}{ll|cc|cccc|cccc|}
\toprule
\multicolumn{2}{c}{} & \multicolumn{1}{c}{\begin{sideways}Boots-CC\;\end{sideways}} & \multicolumn{1}{c}{\begin{sideways}Boots-PCC\;\end{sideways}} & \multicolumn{1}{c}{\begin{sideways}Bayes-ACC\;\end{sideways}} & \multicolumn{1}{c}{\begin{sideways}Bayes-EMQ (MAPLS)\;\end{sideways}} & \multicolumn{1}{c}{\begin{sideways}Bayes-KDEy(Gau)\;\end{sideways}} & \multicolumn{1}{c}{\begin{sideways}Bayes-KDEy(Ait-$\lambda$)\;\end{sideways}} & \multicolumn{1}{c}{\begin{sideways}Bayes-ACC$^{cal}$\;\end{sideways}} & \multicolumn{1}{c}{\begin{sideways}Bayes-EMQ (MAPLS)$^{cal}$\;\end{sideways}} & \multicolumn{1}{c}{\begin{sideways}Bayes-KDEy(Gau)$^{cal}$\;\end{sideways}} & \multicolumn{1}{c}{\begin{sideways}Bayes-KDEy(Ait-$\lambda$)$^{cal}$\;\end{sideways}} \\
\cmidrule(lr){3-4}\cmidrule(lr){5-8}\cmidrule(lr){9-12}
\multicolumn{2}{c}{} & \multicolumn{2}{c}{} & \multicolumn{4}{c}{Temperature=1} & \multicolumn{4}{c}{Calibrated Temperature} \\
\midrule
\multirow{22}{*}{\begin{sideways}tabular\end{sideways}} & \texttt{abalone} & $0$ & $0$ & $2$ & $8$ & $0$ & $0$ & $86$ & $51$ & $83$ & $85$ \\
 & \texttt{academic-success} & $4$ & $1$ & $73$ & $82$ & $88$ & $89$ & $42$ & $95$\cellcolor{olive!25} & $88$ & $89$ \\
 & \texttt{chess} & $0$ & $0$ & $91$\cellcolor{olive!25} & $35$ & $60$ & $49$ & $93$\cellcolor{olive!25} & $57$ & $84$ & $78$ \\
 & \texttt{cmc} & $1$ & $0$ & $79$ & $20$ & $24$ & $27$ & $79$ & $49$ & $53$ & $50$ \\
 & \texttt{connect-4} & $3$ & $0$ & $96$\cellcolor{olive!25} & $90$\cellcolor{olive!25} & $96$\cellcolor{olive!25} & $97$\cellcolor{olive!25} & $96$\cellcolor{olive!25} & $90$\cellcolor{olive!25} & $96$\cellcolor{olive!25} & $97$\cellcolor{olive!25} \\
 & \texttt{digits} & $88$ & $79$ & $95$\cellcolor{olive!25} & $94$\cellcolor{olive!25} & $95$\cellcolor{olive!25} & $95$\cellcolor{olive!25} & $94$\cellcolor{olive!25} & $94$\cellcolor{olive!25} & $94$\cellcolor{olive!25} & $95$\cellcolor{olive!25} \\
 & \texttt{dry-bean} & $54$ & $20$ & $98$\cellcolor{olive!25} & $95$\cellcolor{olive!25} & $98$\cellcolor{olive!25} & $98$\cellcolor{olive!25} & $96$\cellcolor{olive!25} & $96$\cellcolor{olive!25} & $97$\cellcolor{olive!25} & $97$\cellcolor{olive!25} \\
 & \texttt{hand\_digits} & $78$ & $41$ & $95$\cellcolor{olive!25} & $92$\cellcolor{olive!25} & $95$\cellcolor{olive!25} & $95$\cellcolor{olive!25} & $93$\cellcolor{olive!25} & $92$\cellcolor{olive!25} & $93$\cellcolor{olive!25} & $95$\cellcolor{olive!25} \\
 & \texttt{image\_seg} & $61$ & $44$ & $85$ & $82$ & $89$ & $89$\cellcolor{olive!25} & $45$ & $56$ & $74$ & $74$ \\
 & \texttt{isolet} & $88$ & $55$ & $75$ & $71$ & $75$ & $75$ & $73$ & $66$ & $74$ & $72$ \\
 & \texttt{letter} & $5$ & $0$ & $74$ & $71$ & $73$ & $75$ & $63$ & $60$ & $62$ & $75$ \\
 & \texttt{mhr} & $3$ & $0$ & $67$ & $47$ & $87$ & $69$ & $67$ & $93$\cellcolor{olive!25} & $100$\cellcolor{olive!25} & $100$\cellcolor{olive!25} \\
 & \texttt{molecular} & $55$ & $21$ & $99$\cellcolor{olive!25} & $99$\cellcolor{olive!25} & $100$\cellcolor{olive!25} & $100$\cellcolor{olive!25} & $99$\cellcolor{olive!25} & $99$\cellcolor{olive!25} & $100$\cellcolor{olive!25} & $100$\cellcolor{olive!25} \\
 & \texttt{nursery} & $42$ & $21$ & $99$\cellcolor{olive!25} & $97$\cellcolor{olive!25} & $99$\cellcolor{olive!25} & $99$\cellcolor{olive!25} & $97$\cellcolor{olive!25} & $96$\cellcolor{olive!25} & $96$\cellcolor{olive!25} & $95$\cellcolor{olive!25} \\
 & \texttt{obesity} & $73$ & $65$ & $96$\cellcolor{olive!25} & $80$ & $69$ & $67$ & $66$ & $84$ & $69$ & $67$ \\
 & \texttt{page\_block} & $66$ & $16$ & $69$ & $64$ & $43$ & $35$ & $30$ & $64$ & $64$ & $53$ \\
 & \texttt{phishing} & $3$ & $1$ & $82$ & $75$ & $78$ & $75$ & $45$ & $91$\cellcolor{olive!25} & $92$\cellcolor{olive!25} & $91$\cellcolor{olive!25} \\
 & \texttt{satellite} & $12$ & $3$ & $97$\cellcolor{olive!25} & $92$\cellcolor{olive!25} & $98$\cellcolor{olive!25} & $90$\cellcolor{olive!25} & $97$\cellcolor{olive!25} & $93$\cellcolor{olive!25} & $98$\cellcolor{olive!25} & $90$\cellcolor{olive!25} \\
 & \texttt{shuttle} & $92$\cellcolor{olive!25} & $74$ & $100$\cellcolor{olive!25} & $98$\cellcolor{olive!25} & $100$\cellcolor{olive!25} & $100$\cellcolor{olive!25} & $100$\cellcolor{olive!25} & $97$\cellcolor{olive!25} & $99$\cellcolor{olive!25} & $100$\cellcolor{olive!25} \\
 & \texttt{waveform-v1} & $16$ & $6$ & $87$ & $98$\cellcolor{olive!25} & $99$\cellcolor{olive!25} & $100$\cellcolor{olive!25} & $54$ & $98$\cellcolor{olive!25} & $90$\cellcolor{olive!25} & $96$\cellcolor{olive!25} \\
 & \texttt{wine-quality} & $0$ & $0$ & $91$\cellcolor{olive!25} & $45$ & $28$ & $14$ & $91$\cellcolor{olive!25} & $85$ & $86$ & $95$\cellcolor{olive!25} \\
 & \texttt{yeast} & $0$ & $0$ & $63$ & $0$ & $0$ & $0$ & $27$ & $34$ & $92$\cellcolor{olive!25} & $92$\cellcolor{olive!25} \\
\midrule
\multirow{15}{*}{\begin{sideways}text\end{sideways}} & \texttt{gasp} & $37$ & $11$ & $96$\cellcolor{olive!25} & $90$\cellcolor{olive!25} & $95$\cellcolor{olive!25} & $96$\cellcolor{olive!25} & $96$\cellcolor{olive!25} & $79$ & $87$ & $87$ \\
 & \texttt{hcr} & $19$ & $3$ & $92$\cellcolor{olive!25} & $58$ & $76$ & $80$ & $97$\cellcolor{olive!25} & $86$ & $89$\cellcolor{olive!25} & $91$\cellcolor{olive!25} \\
 & \texttt{omd} & $28$ & $22$ & $92$\cellcolor{olive!25} & $67$ & $94$\cellcolor{olive!25} & $92$\cellcolor{olive!25} & $76$ & $67$ & $84$ & $79$ \\
 & \texttt{sanders} & $25$ & $4$ & $96$\cellcolor{olive!25} & $95$\cellcolor{olive!25} & $93$\cellcolor{olive!25} & $91$\cellcolor{olive!25} & $96$\cellcolor{olive!25} & $95$\cellcolor{olive!25} & $93$\cellcolor{olive!25} & $91$\cellcolor{olive!25} \\
 & \texttt{semeval13} & $28$ & $12$ & $91$\cellcolor{olive!25} & $76$ & $93$\cellcolor{olive!25} & $93$\cellcolor{olive!25} & $91$\cellcolor{olive!25} & $76$ & $93$\cellcolor{olive!25} & $93$\cellcolor{olive!25} \\
 & \texttt{semeval14} & $28$ & $13$ & $94$\cellcolor{olive!25} & $77$ & $94$\cellcolor{olive!25} & $95$\cellcolor{olive!25} & $94$\cellcolor{olive!25} & $77$ & $94$\cellcolor{olive!25} & $95$\cellcolor{olive!25} \\
 & \texttt{semeval15} & $21$ & $12$ & $73$ & $61$ & $76$ & $74$ & $73$ & $61$ & $76$ & $74$ \\
 & \texttt{semeval16} & $18$ & $4$ & $88$ & $64$ & $87$ & $87$ & $93$\cellcolor{olive!25} & $84$ & $87$ & $87$ \\
 & \texttt{sst} & $28$ & $15$ & $92$\cellcolor{olive!25} & $86$ & $92$\cellcolor{olive!25} & $93$\cellcolor{olive!25} & $92$\cellcolor{olive!25} & $86$ & $92$\cellcolor{olive!25} & $93$\cellcolor{olive!25} \\
 & \texttt{wa} & $43$ & $34$ & $91$\cellcolor{olive!25} & $88$ & $90$\cellcolor{olive!25} & $91$\cellcolor{olive!25} & $78$ & $73$ & $75$ & $77$ \\
 & \texttt{wb} & $43$ & $37$ & $95$\cellcolor{olive!25} & $94$\cellcolor{olive!25} & $96$\cellcolor{olive!25} & $96$\cellcolor{olive!25} & $95$\cellcolor{olive!25} & $88$ & $88$ & $89$ \\
 & \texttt{LeQua2022-T1A} & $32$ & $21$ & $96$\cellcolor{olive!25} & $96$\cellcolor{olive!25} & $98$\cellcolor{olive!25} & $97$\cellcolor{olive!25} & $96$\cellcolor{olive!25} & $89$\cellcolor{olive!25} & $93$\cellcolor{olive!25} & $91$\cellcolor{olive!25} \\
 & \texttt{LeQua2022-T1B} & $0$ & $0$ & $42$ & $1$ & $11$ & $13$ & $42$ & $22$ & $42$ & $41$ \\
 & \texttt{LeQua2024-T1} & $34$ & $18$ & $98$\cellcolor{olive!25} & $95$\cellcolor{olive!25} & $98$\cellcolor{olive!25} & $98$\cellcolor{olive!25} & $90$\cellcolor{olive!25} & $89$ & $91$\cellcolor{olive!25} & $92$\cellcolor{olive!25} \\
 & \texttt{LeQua2024-T2} & $0$ & $0$ & $40$ & $1$ & $17$ & $11$ & $40$ & $17$ & $42$ & $39$ \\
\midrule
\multirow{5}{*}{\begin{sideways}image\end{sideways}} & \texttt{CIFAR100} & $0$ & $0$ & $8$ & $4$ & $7$ & $6$ & $4$ & $4$ & $7$ & $6$ \\
 & \texttt{CIFAR10} & $90$\cellcolor{olive!25} & $54$ & $90$\cellcolor{olive!25} & $88$ & $90$\cellcolor{olive!25} & $90$\cellcolor{olive!25} & $88$ & $88$ & $89$ & $90$\cellcolor{olive!25} \\
 & \texttt{MNIST} & $100$\cellcolor{olive!25} & $90$\cellcolor{olive!25} & $90$\cellcolor{olive!25} & $90$\cellcolor{olive!25} & $90$\cellcolor{olive!25} & $90$\cellcolor{olive!25} & $90$\cellcolor{olive!25} & $90$\cellcolor{olive!25} & $90$\cellcolor{olive!25} & $90$\cellcolor{olive!25} \\
 & \texttt{fashionMNIST} & $65$ & $29$ & $90$\cellcolor{olive!25} & $86$ & $90$\cellcolor{olive!25} & $90$\cellcolor{olive!25} & $86$ & $82$ & $88$ & $88$ \\
 & \texttt{SVHN} & $71$ & $36$ & $89$\cellcolor{olive!25} & $89$ & $89$\cellcolor{olive!25} & $90$\cellcolor{olive!25} & $83$ & $88$ & $86$ & $89$ \\
\midrule
 & Mean & $35$ & $21$ & $82$ & $70$ & $76$ & $74$ & $77$ & $76$ & $83$ & $83$ \\
\bottomrule
\end{tabular}

    }%
    \resizebox{.48\textwidth}{!}{%
\begin{tabular}{ll|cc|cccc|cccc|}
\toprule
\multicolumn{2}{c}{} & \multicolumn{1}{c}{\begin{sideways}Boots-CC\;\end{sideways}} & \multicolumn{1}{c}{\begin{sideways}Boots-PCC\;\end{sideways}} & \multicolumn{1}{c}{\begin{sideways}Bayes-ACC\;\end{sideways}} & \multicolumn{1}{c}{\begin{sideways}Bayes-EMQ (MAPLS)\;\end{sideways}} & \multicolumn{1}{c}{\begin{sideways}Bayes-KDEy(Gau)\;\end{sideways}} & \multicolumn{1}{c}{\begin{sideways}Bayes-KDEy(Ait-$\lambda$)\;\end{sideways}} & \multicolumn{1}{c}{\begin{sideways}Bayes-ACC$^{cal}$\;\end{sideways}} & \multicolumn{1}{c}{\begin{sideways}Bayes-EMQ (MAPLS)$^{cal}$\;\end{sideways}} & \multicolumn{1}{c}{\begin{sideways}Bayes-KDEy(Gau)$^{cal}$\;\end{sideways}} & \multicolumn{1}{c}{\begin{sideways}Bayes-KDEy(Ait-$\lambda$)$^{cal}$\;\end{sideways}} \\
\cmidrule(lr){3-4}\cmidrule(lr){5-8}\cmidrule(lr){9-12}
\multicolumn{2}{c}{} & \multicolumn{2}{c}{} & \multicolumn{4}{c}{Temperature=1} & \multicolumn{4}{c}{Calibrated Temperature} \\
\midrule
\multirow{22}{*}{\begin{sideways}tabular\end{sideways}} & \texttt{abalone} & $21$ & $8$ & $58$ & $66$ & $34$ & $45$ & $95$\cellcolor{olive!25} & $88$ & $93$\cellcolor{olive!25} & $94$\cellcolor{olive!25} \\
 & \texttt{academic-success} & $17$ & $8$ & $75$ & $85$ & $87$ & $89$ & $53$ & $96$\cellcolor{olive!25} & $87$ & $89$ \\
 & \texttt{chess} & $23$ & $9$ & $96$\cellcolor{olive!25} & $79$ & $88$ & $83$ & $97$\cellcolor{olive!25} & $87$ & $95$\cellcolor{olive!25} & $93$\cellcolor{olive!25} \\
 & \texttt{cmc} & $10$ & $2$ & $80$ & $44$ & $48$ & $50$ & $80$ & $61$ & $65$ & $63$ \\
 & \texttt{connect-4} & $14$ & $6$ & $96$\cellcolor{olive!25} & $89$\cellcolor{olive!25} & $96$\cellcolor{olive!25} & $96$\cellcolor{olive!25} & $96$\cellcolor{olive!25} & $89$\cellcolor{olive!25} & $96$\cellcolor{olive!25} & $96$\cellcolor{olive!25} \\
 & \texttt{digits} & $97$\cellcolor{olive!25} & $95$\cellcolor{olive!25} & $99$\cellcolor{olive!25} & $98$\cellcolor{olive!25} & $99$\cellcolor{olive!25} & $99$\cellcolor{olive!25} & $98$\cellcolor{olive!25} & $99$\cellcolor{olive!25} & $99$\cellcolor{olive!25} & $99$\cellcolor{olive!25} \\
 & \texttt{dry-bean} & $86$ & $72$ & $100$\cellcolor{olive!25} & $99$\cellcolor{olive!25} & $100$\cellcolor{olive!25} & $100$\cellcolor{olive!25} & $98$\cellcolor{olive!25} & $98$\cellcolor{olive!25} & $98$\cellcolor{olive!25} & $98$\cellcolor{olive!25} \\
 & \texttt{hand\_digits} & $94$\cellcolor{olive!25} & $86$ & $99$\cellcolor{olive!25} & $98$\cellcolor{olive!25} & $99$\cellcolor{olive!25} & $99$\cellcolor{olive!25} & $98$\cellcolor{olive!25} & $98$\cellcolor{olive!25} & $99$\cellcolor{olive!25} & $99$\cellcolor{olive!25} \\
 & \texttt{image\_seg} & $89$ & $83$ & $91$\cellcolor{olive!25} & $92$\cellcolor{olive!25} & $96$\cellcolor{olive!25} & $96$\cellcolor{olive!25} & $81$ & $86$ & $91$\cellcolor{olive!25} & $91$\cellcolor{olive!25} \\
 & \texttt{isolet} & $98$\cellcolor{olive!25} & $96$\cellcolor{olive!25} & $99$\cellcolor{olive!25} & $97$\cellcolor{olive!25} & $99$\cellcolor{olive!25} & $99$\cellcolor{olive!25} & $98$\cellcolor{olive!25} & $97$\cellcolor{olive!25} & $98$\cellcolor{olive!25} & $98$\cellcolor{olive!25} \\
 & \texttt{letter} & $76$ & $48$ & $98$\cellcolor{olive!25} & $96$\cellcolor{olive!25} & $98$\cellcolor{olive!25} & $99$\cellcolor{olive!25} & $93$\cellcolor{olive!25} & $92$\cellcolor{olive!25} & $95$\cellcolor{olive!25} & $99$\cellcolor{olive!25} \\
 & \texttt{mhr} & $16$ & $6$ & $66$ & $59$ & $88$ & $76$ & $66$ & $96$\cellcolor{olive!25} & $100$\cellcolor{olive!25} & $100$\cellcolor{olive!25} \\
 & \texttt{molecular} & $69$ & $39$ & $99$\cellcolor{olive!25} & $99$\cellcolor{olive!25} & $100$\cellcolor{olive!25} & $100$\cellcolor{olive!25} & $99$\cellcolor{olive!25} & $99$\cellcolor{olive!25} & $100$\cellcolor{olive!25} & $100$\cellcolor{olive!25} \\
 & \texttt{nursery} & $70$ & $56$ & $100$\cellcolor{olive!25} & $99$\cellcolor{olive!25} & $100$\cellcolor{olive!25} & $99$\cellcolor{olive!25} & $98$\cellcolor{olive!25} & $97$\cellcolor{olive!25} & $96$\cellcolor{olive!25} & $97$\cellcolor{olive!25} \\
 & \texttt{obesity} & $91$\cellcolor{olive!25} & $89$ & $95$\cellcolor{olive!25} & $94$\cellcolor{olive!25} & $88$ & $88$ & $84$ & $96$\cellcolor{olive!25} & $88$ & $88$ \\
 & \texttt{page\_block} & $79$ & $46$ & $79$ & $78$ & $72$ & $68$ & $64$ & $78$ & $81$ & $77$ \\
 & \texttt{phishing} & $17$ & $10$ & $83$ & $78$ & $84$ & $80$ & $58$ & $90$\cellcolor{olive!25} & $92$\cellcolor{olive!25} & $90$\cellcolor{olive!25} \\
 & \texttt{satellite} & $60$ & $46$ & $97$\cellcolor{olive!25} & $96$\cellcolor{olive!25} & $98$\cellcolor{olive!25} & $95$\cellcolor{olive!25} & $97$\cellcolor{olive!25} & $97$\cellcolor{olive!25} & $98$\cellcolor{olive!25} & $95$\cellcolor{olive!25} \\
 & \texttt{shuttle} & $94$\cellcolor{olive!25} & $85$ & $100$\cellcolor{olive!25} & $99$\cellcolor{olive!25} & $100$\cellcolor{olive!25} & $100$\cellcolor{olive!25} & $100$\cellcolor{olive!25} & $97$\cellcolor{olive!25} & $100$\cellcolor{olive!25} & $100$\cellcolor{olive!25} \\
 & \texttt{waveform-v1} & $37$ & $22$ & $86$ & $99$\cellcolor{olive!25} & $98$\cellcolor{olive!25} & $100$\cellcolor{olive!25} & $64$ & $99$\cellcolor{olive!25} & $91$\cellcolor{olive!25} & $96$\cellcolor{olive!25} \\
 & \texttt{wine-quality} & $15$ & $4$ & $93$\cellcolor{olive!25} & $70$ & $63$ & $52$ & $93$\cellcolor{olive!25} & $93$\cellcolor{olive!25} & $93$\cellcolor{olive!25} & $96$\cellcolor{olive!25} \\
 & \texttt{yeast} & $23$ & $12$ & $85$ & $30$ & $44$ & $46$ & $70$ & $77$ & $95$\cellcolor{olive!25} & $95$\cellcolor{olive!25} \\
\midrule
\multirow{15}{*}{\begin{sideways}text\end{sideways}} & \texttt{gasp} & $54$ & $28$ & $97$\cellcolor{olive!25} & $92$\cellcolor{olive!25} & $97$\cellcolor{olive!25} & $96$\cellcolor{olive!25} & $97$\cellcolor{olive!25} & $85$ & $90$\cellcolor{olive!25} & $89$\cellcolor{olive!25} \\
 & \texttt{hcr} & $40$ & $17$ & $94$\cellcolor{olive!25} & $71$ & $84$ & $86$ & $98$\cellcolor{olive!25} & $91$\cellcolor{olive!25} & $93$\cellcolor{olive!25} & $94$\cellcolor{olive!25} \\
 & \texttt{omd} & $47$ & $38$ & $92$\cellcolor{olive!25} & $76$ & $96$\cellcolor{olive!25} & $93$\cellcolor{olive!25} & $80$ & $76$ & $86$ & $84$ \\
 & \texttt{sanders} & $46$ & $22$ & $97$\cellcolor{olive!25} & $96$\cellcolor{olive!25} & $95$\cellcolor{olive!25} & $95$\cellcolor{olive!25} & $97$\cellcolor{olive!25} & $96$\cellcolor{olive!25} & $95$\cellcolor{olive!25} & $95$\cellcolor{olive!25} \\
 & \texttt{semeval13} & $48$ & $32$ & $94$\cellcolor{olive!25} & $81$ & $95$\cellcolor{olive!25} & $96$\cellcolor{olive!25} & $94$\cellcolor{olive!25} & $81$ & $95$\cellcolor{olive!25} & $96$\cellcolor{olive!25} \\
 & \texttt{semeval14} & $50$ & $32$ & $96$\cellcolor{olive!25} & $84$ & $95$\cellcolor{olive!25} & $96$\cellcolor{olive!25} & $96$\cellcolor{olive!25} & $84$ & $95$\cellcolor{olive!25} & $96$\cellcolor{olive!25} \\
 & \texttt{semeval15} & $42$ & $30$ & $81$ & $72$ & $82$ & $83$ & $81$ & $72$ & $82$ & $83$ \\
 & \texttt{semeval16} & $39$ & $18$ & $89$ & $73$ & $89$ & $89$ & $93$\cellcolor{olive!25} & $89$\cellcolor{olive!25} & $89$ & $89$ \\
 & \texttt{sst} & $48$ & $35$ & $94$\cellcolor{olive!25} & $91$\cellcolor{olive!25} & $94$\cellcolor{olive!25} & $95$\cellcolor{olive!25} & $94$\cellcolor{olive!25} & $91$\cellcolor{olive!25} & $94$\cellcolor{olive!25} & $95$\cellcolor{olive!25} \\
 & \texttt{wa} & $61$ & $54$ & $94$\cellcolor{olive!25} & $90$\cellcolor{olive!25} & $93$\cellcolor{olive!25} & $93$\cellcolor{olive!25} & $84$ & $80$ & $84$ & $83$ \\
 & \texttt{wb} & $59$ & $56$ & $97$\cellcolor{olive!25} & $96$\cellcolor{olive!25} & $98$\cellcolor{olive!25} & $98$\cellcolor{olive!25} & $97$\cellcolor{olive!25} & $91$\cellcolor{olive!25} & $91$\cellcolor{olive!25} & $91$\cellcolor{olive!25} \\
 & \texttt{LeQua2022-T1A} & $32$ & $21$ & $96$\cellcolor{olive!25} & $96$\cellcolor{olive!25} & $98$\cellcolor{olive!25} & $97$\cellcolor{olive!25} & $96$\cellcolor{olive!25} & $89$\cellcolor{olive!25} & $93$\cellcolor{olive!25} & $91$\cellcolor{olive!25} \\
 & \texttt{LeQua2022-T1B} & $52$ & $30$ & $93$\cellcolor{olive!25} & $66$ & $84$ & $84$ & $93$\cellcolor{olive!25} & $87$ & $94$\cellcolor{olive!25} & $94$\cellcolor{olive!25} \\
 & \texttt{LeQua2024-T1} & $34$ & $18$ & $98$\cellcolor{olive!25} & $95$\cellcolor{olive!25} & $98$\cellcolor{olive!25} & $98$\cellcolor{olive!25} & $90$\cellcolor{olive!25} & $89$ & $91$\cellcolor{olive!25} & $92$\cellcolor{olive!25} \\
 & \texttt{LeQua2024-T2} & $41$ & $22$ & $93$\cellcolor{olive!25} & $69$ & $84$ & $82$ & $93$\cellcolor{olive!25} & $86$ & $95$\cellcolor{olive!25} & $94$\cellcolor{olive!25} \\
\midrule
\multirow{5}{*}{\begin{sideways}image\end{sideways}} & \texttt{CIFAR100} & $82$ & $60$ & $96$\cellcolor{olive!25} & $92$\cellcolor{olive!25} & $95$\cellcolor{olive!25} & $94$\cellcolor{olive!25} & $89$ & $92$\cellcolor{olive!25} & $95$\cellcolor{olive!25} & $94$\cellcolor{olive!25} \\
 & \texttt{CIFAR10} & $97$\cellcolor{olive!25} & $89$\cellcolor{olive!25} & $99$\cellcolor{olive!25} & $98$\cellcolor{olive!25} & $99$\cellcolor{olive!25} & $99$\cellcolor{olive!25} & $97$\cellcolor{olive!25} & $98$\cellcolor{olive!25} & $98$\cellcolor{olive!25} & $98$\cellcolor{olive!25} \\
 & \texttt{MNIST} & $100$\cellcolor{olive!25} & $99$\cellcolor{olive!25} & $99$\cellcolor{olive!25} & $98$\cellcolor{olive!25} & $99$\cellcolor{olive!25} & $99$\cellcolor{olive!25} & $99$\cellcolor{olive!25} & $99$\cellcolor{olive!25} & $99$\cellcolor{olive!25} & $99$\cellcolor{olive!25} \\
 & \texttt{fashionMNIST} & $91$\cellcolor{olive!25} & $82$ & $98$\cellcolor{olive!25} & $97$\cellcolor{olive!25} & $99$\cellcolor{olive!25} & $99$\cellcolor{olive!25} & $97$\cellcolor{olive!25} & $96$\cellcolor{olive!25} & $98$\cellcolor{olive!25} & $98$\cellcolor{olive!25} \\
 & \texttt{SVHN} & $92$\cellcolor{olive!25} & $83$ & $98$\cellcolor{olive!25} & $98$\cellcolor{olive!25} & $99$\cellcolor{olive!25} & $99$\cellcolor{olive!25} & $96$\cellcolor{olive!25} & $97$\cellcolor{olive!25} & $97$\cellcolor{olive!25} & $98$\cellcolor{olive!25} \\
\midrule
 & Mean & $56$ & $43$ & $92$ & $85$ & $89$ & $89$ & $89$ & $90$ & $93$ & $93$ \\
\bottomrule
\end{tabular}

    }%
\end{table} %
\begin{table}[h]
    \caption{Amplitude 
    before and after temperature calibration. Highlighted cells indicate values $\leq5\%$.}
    \label{tab:amplitude}
    \centering
    \resizebox{.8\textwidth}{!}{%
\begin{tabular}{ll|cc|cccc|cccc|}
\toprule
\multicolumn{2}{c}{} & \multicolumn{1}{c}{\begin{sideways}Boots-CC\;\end{sideways}} & \multicolumn{1}{c}{\begin{sideways}Boots-PCC\;\end{sideways}} & \multicolumn{1}{c}{\begin{sideways}Bayes-ACC\;\end{sideways}} & \multicolumn{1}{c}{\begin{sideways}Bayes-EMQ (MAPLS)\;\end{sideways}} & \multicolumn{1}{c}{\begin{sideways}Bayes-KDEy(Gau)\;\end{sideways}} & \multicolumn{1}{c}{\begin{sideways}Bayes-KDEy(Ait-$\lambda$)\;\end{sideways}} & \multicolumn{1}{c}{\begin{sideways}Bayes-ACC$^{cal}$\;\end{sideways}} & \multicolumn{1}{c}{\begin{sideways}Bayes-EMQ (MAPLS)$^{cal}$\;\end{sideways}} & \multicolumn{1}{c}{\begin{sideways}Bayes-KDEy(Gau)$^{cal}$\;\end{sideways}} & \multicolumn{1}{c}{\begin{sideways}Bayes-KDEy(Ait-$\lambda$)$^{cal}$\;\end{sideways}} \\
\cmidrule(lr){3-4}\cmidrule(lr){5-8}\cmidrule(lr){9-12}
\multicolumn{2}{c}{} & \multicolumn{2}{c}{} & \multicolumn{4}{c}{Temperature=1} & \multicolumn{4}{c}{Calibrated Temperature} \\
\midrule
\multirow{22}{*}{\begin{sideways}tabular\end{sideways}} & \texttt{abalone} & $<0.1$\cellcolor{olive!25} & $<0.1$\cellcolor{olive!25} & $<0.1$\cellcolor{olive!25} & $<0.1$\cellcolor{olive!25} & $<0.1$\cellcolor{olive!25} & $<0.1$\cellcolor{olive!25} & $44.9$ & $3.3$\cellcolor{olive!25} & $26.0$ & $30.0$ \\
 & \texttt{academic-success} & $0.4$\cellcolor{olive!25} & $0.2$\cellcolor{olive!25} & $2.2$\cellcolor{olive!25} & $1.0$\cellcolor{olive!25} & $1.0$\cellcolor{olive!25} & $1.0$\cellcolor{olive!25} & $1.1$\cellcolor{olive!25} & $2.0$\cellcolor{olive!25} & $1.0$\cellcolor{olive!25} & $1.0$\cellcolor{olive!25} \\
 & \texttt{chess} & $<0.1$\cellcolor{olive!25} & $<0.1$\cellcolor{olive!25} & $0.1$\cellcolor{olive!25} & $<0.1$\cellcolor{olive!25} & $<0.1$\cellcolor{olive!25} & $<0.1$\cellcolor{olive!25} & $0.4$\cellcolor{olive!25} & $0.1$\cellcolor{olive!25} & $<0.1$\cellcolor{olive!25} & $<0.1$\cellcolor{olive!25} \\
 & \texttt{cmc} & $0.5$\cellcolor{olive!25} & $0.1$\cellcolor{olive!25} & $12.5$ & $3.2$\cellcolor{olive!25} & $3.0$\cellcolor{olive!25} & $2.9$\cellcolor{olive!25} & $12.5$ & $6.1$ & $5.9$ & $5.6$ \\
 & \texttt{connect-4} & $0.5$\cellcolor{olive!25} & $0.1$\cellcolor{olive!25} & $2.7$\cellcolor{olive!25} & $1.8$\cellcolor{olive!25} & $1.6$\cellcolor{olive!25} & $1.6$\cellcolor{olive!25} & $2.7$\cellcolor{olive!25} & $1.8$\cellcolor{olive!25} & $1.6$\cellcolor{olive!25} & $1.6$\cellcolor{olive!25} \\
 & \texttt{digits} & $<0.1$\cellcolor{olive!25} & $<0.1$\cellcolor{olive!25} & $<0.1$\cellcolor{olive!25} & $<0.1$\cellcolor{olive!25} & $<0.1$\cellcolor{olive!25} & $<0.1$\cellcolor{olive!25} & $<0.1$\cellcolor{olive!25} & $<0.1$\cellcolor{olive!25} & $<0.1$\cellcolor{olive!25} & $<0.1$\cellcolor{olive!25} \\
 & \texttt{dry-bean} & $<0.1$\cellcolor{olive!25} & $<0.1$\cellcolor{olive!25} & $<0.1$\cellcolor{olive!25} & $<0.1$\cellcolor{olive!25} & $<0.1$\cellcolor{olive!25} & $<0.1$\cellcolor{olive!25} & $<0.1$\cellcolor{olive!25} & $<0.1$\cellcolor{olive!25} & $<0.1$\cellcolor{olive!25} & $<0.1$\cellcolor{olive!25} \\
 & \texttt{hand\_digits} & $<0.1$\cellcolor{olive!25} & $<0.1$\cellcolor{olive!25} & $<0.1$\cellcolor{olive!25} & $<0.1$\cellcolor{olive!25} & $<0.1$\cellcolor{olive!25} & $<0.1$\cellcolor{olive!25} & $<0.1$\cellcolor{olive!25} & $<0.1$\cellcolor{olive!25} & $<0.1$\cellcolor{olive!25} & $<0.1$\cellcolor{olive!25} \\
 & \texttt{image\_seg} & $<0.1$\cellcolor{olive!25} & $<0.1$\cellcolor{olive!25} & $<0.1$\cellcolor{olive!25} & $<0.1$\cellcolor{olive!25} & $<0.1$\cellcolor{olive!25} & $<0.1$\cellcolor{olive!25} & $<0.1$\cellcolor{olive!25} & $<0.1$\cellcolor{olive!25} & $<0.1$\cellcolor{olive!25} & $<0.1$\cellcolor{olive!25} \\
 & \texttt{isolet} & $<0.1$\cellcolor{olive!25} & $<0.1$\cellcolor{olive!25} & $<0.1$\cellcolor{olive!25} & $<0.1$\cellcolor{olive!25} & $<0.1$\cellcolor{olive!25} & $<0.1$\cellcolor{olive!25} & $<0.1$\cellcolor{olive!25} & $<0.1$\cellcolor{olive!25} & $<0.1$\cellcolor{olive!25} & $<0.1$\cellcolor{olive!25} \\
 & \texttt{letter} & $<0.1$\cellcolor{olive!25} & $<0.1$\cellcolor{olive!25} & $<0.1$\cellcolor{olive!25} & $<0.1$\cellcolor{olive!25} & $<0.1$\cellcolor{olive!25} & $<0.1$\cellcolor{olive!25} & $<0.1$\cellcolor{olive!25} & $<0.1$\cellcolor{olive!25} & $<0.1$\cellcolor{olive!25} & $<0.1$\cellcolor{olive!25} \\
 & \texttt{mhr} & $0.4$\cellcolor{olive!25} & $0.1$\cellcolor{olive!25} & $6.4$ & $1.9$\cellcolor{olive!25} & $1.1$\cellcolor{olive!25} & $1.1$\cellcolor{olive!25} & $6.4$ & $15.3$ & $10.1$ & $9.9$ \\
 & \texttt{molecular} & $0.4$\cellcolor{olive!25} & $0.4$\cellcolor{olive!25} & $0.5$\cellcolor{olive!25} & $0.4$\cellcolor{olive!25} & $0.4$\cellcolor{olive!25} & $0.4$\cellcolor{olive!25} & $0.5$\cellcolor{olive!25} & $0.4$\cellcolor{olive!25} & $0.4$\cellcolor{olive!25} & $0.4$\cellcolor{olive!25} \\
 & \texttt{nursery} & $<0.1$\cellcolor{olive!25} & $<0.1$\cellcolor{olive!25} & $0.1$\cellcolor{olive!25} & $<0.1$\cellcolor{olive!25} & $<0.1$\cellcolor{olive!25} & $<0.1$\cellcolor{olive!25} & $<0.1$\cellcolor{olive!25} & $<0.1$\cellcolor{olive!25} & $<0.1$\cellcolor{olive!25} & $<0.1$\cellcolor{olive!25} \\
 & \texttt{obesity} & $<0.1$\cellcolor{olive!25} & $<0.1$\cellcolor{olive!25} & $<0.1$\cellcolor{olive!25} & $<0.1$\cellcolor{olive!25} & $<0.1$\cellcolor{olive!25} & $<0.1$\cellcolor{olive!25} & $<0.1$\cellcolor{olive!25} & $<0.1$\cellcolor{olive!25} & $<0.1$\cellcolor{olive!25} & $<0.1$\cellcolor{olive!25} \\
 & \texttt{page\_block} & $<0.1$\cellcolor{olive!25} & $<0.1$\cellcolor{olive!25} & $0.2$\cellcolor{olive!25} & $<0.1$\cellcolor{olive!25} & $<0.1$\cellcolor{olive!25} & $<0.1$\cellcolor{olive!25} & $0.1$\cellcolor{olive!25} & $<0.1$\cellcolor{olive!25} & $0.1$\cellcolor{olive!25} & $0.1$\cellcolor{olive!25} \\
 & \texttt{phishing} & $0.4$\cellcolor{olive!25} & $0.2$\cellcolor{olive!25} & $3.4$\cellcolor{olive!25} & $0.9$\cellcolor{olive!25} & $0.8$\cellcolor{olive!25} & $0.8$\cellcolor{olive!25} & $1.7$\cellcolor{olive!25} & $1.4$\cellcolor{olive!25} & $1.1$\cellcolor{olive!25} & $1.2$\cellcolor{olive!25} \\
 & \texttt{satellite} & $<0.1$\cellcolor{olive!25} & $<0.1$\cellcolor{olive!25} & $<0.1$\cellcolor{olive!25} & $<0.1$\cellcolor{olive!25} & $<0.1$\cellcolor{olive!25} & $<0.1$\cellcolor{olive!25} & $<0.1$\cellcolor{olive!25} & $<0.1$\cellcolor{olive!25} & $<0.1$\cellcolor{olive!25} & $<0.1$\cellcolor{olive!25} \\
 & \texttt{shuttle} & $0.3$\cellcolor{olive!25} & $0.3$\cellcolor{olive!25} & $0.3$\cellcolor{olive!25} & $0.3$\cellcolor{olive!25} & $0.3$\cellcolor{olive!25} & $0.3$\cellcolor{olive!25} & $0.2$\cellcolor{olive!25} & $0.2$\cellcolor{olive!25} & $0.2$\cellcolor{olive!25} & $0.2$\cellcolor{olive!25} \\
 & \texttt{waveform-v1} & $0.4$\cellcolor{olive!25} & $0.3$\cellcolor{olive!25} & $0.9$\cellcolor{olive!25} & $0.5$\cellcolor{olive!25} & $0.5$\cellcolor{olive!25} & $0.5$\cellcolor{olive!25} & $0.4$\cellcolor{olive!25} & $0.5$\cellcolor{olive!25} & $0.2$\cellcolor{olive!25} & $0.2$\cellcolor{olive!25} \\
 & \texttt{wine-quality} & $<0.1$\cellcolor{olive!25} & $<0.1$\cellcolor{olive!25} & $5.7$ & $0.8$\cellcolor{olive!25} & $1.6$\cellcolor{olive!25} & $0.2$\cellcolor{olive!25} & $5.7$ & $7.9$ & $15.3$ & $59.0$ \\
 & \texttt{yeast} & $<0.1$\cellcolor{olive!25} & $<0.1$\cellcolor{olive!25} & $0.3$\cellcolor{olive!25} & $<0.1$\cellcolor{olive!25} & $<0.1$\cellcolor{olive!25} & $<0.1$\cellcolor{olive!25} & $<0.1$\cellcolor{olive!25} & $0.4$\cellcolor{olive!25} & $28.2$ & $29.9$ \\
\midrule
\multirow{15}{*}{\begin{sideways}text\end{sideways}} & \texttt{gasp} & $4.3$\cellcolor{olive!25} & $2.2$\cellcolor{olive!25} & $10.3$ & $6.9$ & $7.8$ & $7.5$ & $10.3$ & $3.6$\cellcolor{olive!25} & $4.0$\cellcolor{olive!25} & $3.9$\cellcolor{olive!25} \\
 & \texttt{hcr} & $4.5$\cellcolor{olive!25} & $1.2$\cellcolor{olive!25} & $21.0$ & $8.2$ & $12.7$ & $12.9$ & $36.0$ & $30.7$ & $22.9$ & $23.3$ \\
 & \texttt{omd} & $4.4$\cellcolor{olive!25} & $3.8$\cellcolor{olive!25} & $17.4$ & $7.3$ & $12.8$ & $12.4$ & $9.6$ & $7.3$ & $6.8$ & $6.6$ \\
 & \texttt{sanders} & $4.3$\cellcolor{olive!25} & $1.6$\cellcolor{olive!25} & $12.0$ & $8.6$ & $8.6$ & $8.4$ & $12.0$ & $8.6$ & $8.6$ & $8.4$ \\
 & \texttt{semeval13} & $4.2$\cellcolor{olive!25} & $3.1$\cellcolor{olive!25} & $11.2$ & $7.2$ & $9.0$ & $9.2$ & $11.2$ & $7.2$ & $9.0$ & $9.2$ \\
 & \texttt{semeval14} & $4.3$\cellcolor{olive!25} & $3.1$\cellcolor{olive!25} & $11.5$ & $7.2$ & $9.0$ & $9.2$ & $11.5$ & $7.2$ & $9.0$ & $9.2$ \\
 & \texttt{semeval15} & $4.2$\cellcolor{olive!25} & $3.1$\cellcolor{olive!25} & $12.0$ & $7.6$ & $9.6$ & $10.0$ & $12.0$ & $7.6$ & $9.6$ & $10.0$ \\
 & \texttt{semeval16} & $4.6$\cellcolor{olive!25} & $1.4$\cellcolor{olive!25} & $28.3$ & $12.3$ & $20.9$ & $21.1$ & $36.6$ & $39.5$ & $20.9$ & $21.1$ \\
 & \texttt{sst} & $4.5$\cellcolor{olive!25} & $4.2$\cellcolor{olive!25} & $17.3$ & $9.9$ & $10.5$ & $10.5$ & $17.3$ & $9.9$ & $10.5$ & $10.5$ \\
 & \texttt{wa} & $4.2$\cellcolor{olive!25} & $4.1$\cellcolor{olive!25} & $8.7$ & $6.5$ & $6.3$ & $6.8$ & $4.5$\cellcolor{olive!25} & $3.4$\cellcolor{olive!25} & $3.2$\cellcolor{olive!25} & $3.5$\cellcolor{olive!25} \\
 & \texttt{wb} & $4.2$\cellcolor{olive!25} & $4.1$\cellcolor{olive!25} & $8.7$ & $6.1$ & $6.3$ & $6.2$ & $8.7$ & $3.2$\cellcolor{olive!25} & $3.2$\cellcolor{olive!25} & $3.2$\cellcolor{olive!25} \\
 & \texttt{LeQua2022-T1A} & $11.3$ & $8.8$ & $18.1$ & $14.5$ & $14.9$ & $14.5$ & $18.1$ & $10.3$ & $10.6$ & $10.3$ \\
 & \texttt{LeQua2022-T1B} & $<0.1$\cellcolor{olive!25} & $<0.1$\cellcolor{olive!25} & $<0.1$\cellcolor{olive!25} & $<0.1$\cellcolor{olive!25} & $<0.1$\cellcolor{olive!25} & $<0.1$\cellcolor{olive!25} & $<0.1$\cellcolor{olive!25} & $<0.1$\cellcolor{olive!25} & $<0.1$\cellcolor{olive!25} & $<0.1$\cellcolor{olive!25} \\
 & \texttt{LeQua2024-T1} & $11.1$ & $8.3$ & $16.6$ & $13.0$ & $13.2$ & $13.2$ & $11.8$ & $9.3$ & $9.4$ & $9.4$ \\
 & \texttt{LeQua2024-T2} & $<0.1$\cellcolor{olive!25} & $<0.1$\cellcolor{olive!25} & $<0.1$\cellcolor{olive!25} & $<0.1$\cellcolor{olive!25} & $<0.1$\cellcolor{olive!25} & $<0.1$\cellcolor{olive!25} & $<0.1$\cellcolor{olive!25} & $<0.1$\cellcolor{olive!25} & $<0.1$\cellcolor{olive!25} & $<0.1$\cellcolor{olive!25} \\
\midrule
\multirow{5}{*}{\begin{sideways}image\end{sideways}} & \texttt{CIFAR100} & $<0.1$\cellcolor{olive!25} & $<0.1$\cellcolor{olive!25} & $<0.1$\cellcolor{olive!25} & $<0.1$\cellcolor{olive!25} & $<0.1$\cellcolor{olive!25} & $<0.1$\cellcolor{olive!25} & $<0.1$\cellcolor{olive!25} & $<0.1$\cellcolor{olive!25} & $<0.1$\cellcolor{olive!25} & $<0.1$\cellcolor{olive!25} \\
 & \texttt{CIFAR10} & $<0.1$\cellcolor{olive!25} & $<0.1$\cellcolor{olive!25} & $<0.1$\cellcolor{olive!25} & $<0.1$\cellcolor{olive!25} & $<0.1$\cellcolor{olive!25} & $<0.1$\cellcolor{olive!25} & $<0.1$\cellcolor{olive!25} & $<0.1$\cellcolor{olive!25} & $<0.1$\cellcolor{olive!25} & $<0.1$\cellcolor{olive!25} \\
 & \texttt{MNIST} & $<0.1$\cellcolor{olive!25} & $<0.1$\cellcolor{olive!25} & $<0.1$\cellcolor{olive!25} & $<0.1$\cellcolor{olive!25} & $<0.1$\cellcolor{olive!25} & $<0.1$\cellcolor{olive!25} & $<0.1$\cellcolor{olive!25} & $<0.1$\cellcolor{olive!25} & $<0.1$\cellcolor{olive!25} & $<0.1$\cellcolor{olive!25} \\
 & \texttt{fashionMNIST} & $<0.1$\cellcolor{olive!25} & $<0.1$\cellcolor{olive!25} & $<0.1$\cellcolor{olive!25} & $<0.1$\cellcolor{olive!25} & $<0.1$\cellcolor{olive!25} & $<0.1$\cellcolor{olive!25} & $<0.1$\cellcolor{olive!25} & $<0.1$\cellcolor{olive!25} & $<0.1$\cellcolor{olive!25} & $<0.1$\cellcolor{olive!25} \\
 & \texttt{SVHN} & $<0.1$\cellcolor{olive!25} & $<0.1$\cellcolor{olive!25} & $<0.1$\cellcolor{olive!25} & $<0.1$\cellcolor{olive!25} & $<0.1$\cellcolor{olive!25} & $<0.1$\cellcolor{olive!25} & $<0.1$\cellcolor{olive!25} & $<0.1$\cellcolor{olive!25} & $<0.1$\cellcolor{olive!25} & $<0.1$\cellcolor{olive!25} \\
\midrule
 & Mean & $1.7$ & $1.2$ & $5.4$ & $3.0$ & $3.6$ & $3.6$ & $6.6$ & $4.5$ & $5.2$ & $6.4$ \\
\bottomrule
\end{tabular}

    }%
\end{table}

Statistical significance comparisons are reported in Table~\ref{tab:ae-pairwise-symbols:bayes}. Notational conventions are as in Table~\ref{tab:ae-pairwise-symbols}.

\begin{table*}[t]
\centering
\caption{Pairwise symbolic matrix of Wilcoxon-Holm comparisons on AE.}
\label{tab:ae-pairwise-symbols:bayes}
\resizebox{\textwidth}{!}{%
\begin{tabular}{lcccccc}
\toprule
 & Boots-CC & Boots-PCC & Bayes-ACC & Bayes-EMQ (MAPLS) & Bayes-KDEy(Gau) & Bayes-KDEy(Ait-$\lambda$) \\
\midrule
Boots-CC & $=$ & $\ll$ & $\gg$ & $\gg$ & $\gg$ & $\gg$ \\
Boots-PCC & $\gg$ & $=$ & $\gg$ & $\gg$ & $\gg$ & $\gg$ \\
Bayes-ACC & $\ll$ & $\ll$ & $=$ & $>$ & $\gg$ & $\gg$ \\
Bayes-EMQ (MAPLS) & $\ll$ & $\ll$ & $<$ & $=$ & $>$ & $>$ \\
Bayes-KDEy(Gau) & $\ll$ & $\ll$ & $\ll$ & $<$ & $=$ & $\approx$ \\
Bayes-KDEy(Ait-$\lambda$) & $\ll$ & $\ll$ & $\ll$ & $<$ & $\approx$ & $=$ \\
\bottomrule
\end{tabular}
}%
\end{table*}

\subsection{Ablation experiment: unregularized geometry-aware KDE}
\label{app:ablation}


Table~\ref{tab:pointclr} reports the results of KDEy(Ait), the unregularized geometry-aware KDE variant without shrinkage regularization. The results show that, although the pure Aitchison-based model performs well on most datasets, it also exhibits a few severe failures (notably on \texttt{isolet}, \texttt{mhr}, \texttt{wine-quality}, \texttt{LeQua2022-T1B}, and \texttt{LeQua2024-T2}). These failures are effectively corrected by the regularized variant KDEy(Ait-$\lambda$). The remaining methods are included for reference, and the selected value of \(\lambda\) is reported in the last column. Notational conventions are as in Table~\ref{tab:point}.


\begin{table}[h]
    \caption{Ablation results in terms of AE and $W$ for the unregularized geometry-aware model KDEy(Ait) (i.e., $\lambda=0$). All methods included for reference. The method is competitive on many datasets, but suffers from instability in a small number of cases; these failure modes are mitigated by the proposed shrinkage regularization. The shrinkage parameter chosen by KDEy(Ait-$\lambda$) is shown in the last column.}
    \label{tab:pointclr}
    \centering
    \resizebox{\textwidth}{!}{%
\begin{tabular}{ll|ccccccc|ccccccc|c|}
\toprule
\multicolumn{2}{c}{} & \multicolumn{1}{c}{\begin{sideways}CC\;\end{sideways}} & \multicolumn{1}{c}{\begin{sideways}PCC\;\end{sideways}} & \multicolumn{1}{c}{\begin{sideways}BBSE/ACC\;\end{sideways}} & \multicolumn{1}{c}{\begin{sideways}MLLS/EMQ\;\end{sideways}} & \multicolumn{1}{c}{\begin{sideways}KDEy(Gau)\;\end{sideways}} & \multicolumn{1}{c}{\begin{sideways}KDEy(Ait)\;\end{sideways}} & \multicolumn{1}{c}{\begin{sideways}KDEy(Ait-$\lambda$)\;\end{sideways}} & \multicolumn{1}{c}{\begin{sideways}CC\;\end{sideways}} & \multicolumn{1}{c}{\begin{sideways}PCC\;\end{sideways}} & \multicolumn{1}{c}{\begin{sideways}BBSE/ACC\;\end{sideways}} & \multicolumn{1}{c}{\begin{sideways}MLLS/EMQ\;\end{sideways}} & \multicolumn{1}{c}{\begin{sideways}KDEy(Gau)\;\end{sideways}} & \multicolumn{1}{c}{\begin{sideways}KDEy(Ait)\;\end{sideways}} & \multicolumn{1}{c}{\begin{sideways}KDEy(Ait-$\lambda$)\;\end{sideways}} & \multicolumn{1}{c}{\begin{sideways}$\lambda$\;\end{sideways}} \\
\cmidrule(lr){3-9}\cmidrule(lr){10-16}\cmidrule(lr){17-17}
\multicolumn{2}{c}{} & \multicolumn{7}{c}{AE} & \multicolumn{7}{c}{W} & \multicolumn{1}{c}{} \\
\midrule
\multirow{22}{*}{\begin{sideways}tabular\end{sideways}} & \texttt{abalone} & $0.0450$\cellcolor{olive!28} & $\textbf{0.0424}$\cellcolor{olive!30} & $0.0966$\cellcolor{olive!0} & $0.0655$\cellcolor{olive!17} & $0.0496$\cellcolor{olive!26} & $0.0485$\cellcolor{olive!26} & $0.0505$\cellcolor{olive!25} & $7.0645$\cellcolor{olive!29} & $\textbf{6.2734}$\cellcolor{olive!30} & $96.8502$\cellcolor{olive!0} & $15.7999$\cellcolor{olive!26} & $9.7935$\cellcolor{olive!28} & $10.1485$\cellcolor{olive!28} & $9.8538$\cellcolor{olive!28} & 0.750 \\
 & \texttt{academic-success} & $0.0953$\cellcolor{olive!7} & $0.1211$\cellcolor{olive!0} & $0.0491$\cellcolor{olive!21} & $0.0260$\cellcolor{olive!28} & $0.0251$\cellcolor{olive!28} & $\textbf{0.0211}$\cellcolor{olive!30} & $0.0236$\cellcolor{olive!29} & $0.2541$\cellcolor{olive!12} & $0.4139$\cellcolor{olive!0} & $0.0796$\cellcolor{olive!25} & $0.0231$\cellcolor{olive!29} & $0.0210$\cellcolor{olive!29} & $\textbf{0.0152}$\cellcolor{olive!30} & $0.0188$\cellcolor{olive!29} & 0.250 \\
 & \texttt{chess} & $0.0413$\cellcolor{olive!3} & $0.0450$\cellcolor{olive!0} & $0.0409$\cellcolor{olive!4} & $0.0398$\cellcolor{olive!5} & $^\dag0.0165$\cellcolor{olive!29} & $0.0169$\cellcolor{olive!29} & $\textbf{0.0163}$\cellcolor{olive!30} & $1.9618$\cellcolor{olive!5} & $2.3372$\cellcolor{olive!0} & $1.1529$\cellcolor{olive!17} & $1.3512$\cellcolor{olive!14} & $\textbf{0.3125}$\cellcolor{olive!30} & $^\dag0.3125$\cellcolor{olive!29} & $0.3878$\cellcolor{olive!28} & 0.500 \\
 & \texttt{cmc} & $0.1560$\cellcolor{olive!7} & $0.1778$\cellcolor{olive!0} & $0.1065$\cellcolor{olive!25} & $0.0973$\cellcolor{olive!28} & $^\dag0.0937$\cellcolor{olive!29} & $0.0965$\cellcolor{olive!28} & $\textbf{0.0934}$\cellcolor{olive!30} & $0.3903$\cellcolor{olive!10} & $0.4937$\cellcolor{olive!0} & $0.2123$\cellcolor{olive!27} & $\textbf{0.1884}$\cellcolor{olive!30} & $^\dag0.1913$\cellcolor{olive!29} & $0.2046$\cellcolor{olive!28} & $^\dag0.1925$\cellcolor{olive!29} & 0.999 \\
 & \texttt{connect-4} & $0.1138$\cellcolor{olive!7} & $0.1468$\cellcolor{olive!0} & $0.0270$\cellcolor{olive!28} & $0.0301$\cellcolor{olive!27} & $^\dag0.0205$\cellcolor{olive!29} & $^\dag0.0209$\cellcolor{olive!29} & $\textbf{0.0201}$\cellcolor{olive!30} & $1.0574$\cellcolor{olive!11} & $1.6642$\cellcolor{olive!0} & $0.0742$\cellcolor{olive!29} & $0.1020$\cellcolor{olive!28} & $0.0412$\cellcolor{olive!29} & $^\dag0.0460$\cellcolor{olive!29} & $\textbf{0.0395}$\cellcolor{olive!30} & 0.750 \\
 & \texttt{digits} & $0.0035$\cellcolor{olive!10} & $0.0045$\cellcolor{olive!0} & $0.0030$\cellcolor{olive!15} & $0.0023$\cellcolor{olive!22} & $0.0024$\cellcolor{olive!21} & $0.0016$\cellcolor{olive!29} & $\textbf{0.0016}$\cellcolor{olive!30} & $0.0034$\cellcolor{olive!8} & $0.0045$\cellcolor{olive!0} & $0.0017$\cellcolor{olive!21} & $0.0012$\cellcolor{olive!25} & $0.0014$\cellcolor{olive!23} & $\textbf{0.0005}$\cellcolor{olive!30} & $^\dag0.0006$\cellcolor{olive!29} & 0.001 \\
 & \texttt{dry-bean} & $0.0088$\cellcolor{olive!10} & $0.0122$\cellcolor{olive!0} & $0.0040$\cellcolor{olive!26} & $0.0032$\cellcolor{olive!28} & $0.0032$\cellcolor{olive!28} & $\textbf{0.0028}$\cellcolor{olive!30} & $0.0030$\cellcolor{olive!29} & $0.0066$\cellcolor{olive!15} & $0.0131$\cellcolor{olive!0} & $0.0020$\cellcolor{olive!26} & $0.0012$\cellcolor{olive!28} & $0.0011$\cellcolor{olive!29} & $\textbf{0.0007}$\cellcolor{olive!30} & $0.0010$\cellcolor{olive!29} & 0.250 \\
 & \texttt{hand\_digits} & $0.0048$\cellcolor{olive!9} & $0.0063$\cellcolor{olive!0} & $0.0031$\cellcolor{olive!20} & $0.0023$\cellcolor{olive!25} & $0.0023$\cellcolor{olive!25} & $0.0020$\cellcolor{olive!27} & $\textbf{0.0016}$\cellcolor{olive!30} & $0.0054$\cellcolor{olive!13} & $0.0092$\cellcolor{olive!0} & $0.0018$\cellcolor{olive!25} & $0.0010$\cellcolor{olive!28} & $0.0011$\cellcolor{olive!27} & $0.0008$\cellcolor{olive!28} & $\textbf{0.0006}$\cellcolor{olive!30} & 0.001 \\
 & \texttt{image\_seg} & $0.0063$\cellcolor{olive!16} & $0.0083$\cellcolor{olive!4} & $0.0090$\cellcolor{olive!0} & $0.0051$\cellcolor{olive!24} & $\textbf{0.0041}$\cellcolor{olive!30} & $0.0054$\cellcolor{olive!22} & $0.0041$\cellcolor{olive!29} & $0.0098$\cellcolor{olive!15} & $0.0164$\cellcolor{olive!0} & $0.0114$\cellcolor{olive!11} & $0.0046$\cellcolor{olive!27} & $\textbf{0.0033}$\cellcolor{olive!30} & $0.0046$\cellcolor{olive!27} & $0.0034$\cellcolor{olive!29} & 0.900 \\
 & \texttt{isolet} & $0.0021$\cellcolor{olive!16} & $0.0023$\cellcolor{olive!12} & $0.0018$\cellcolor{olive!20} & $0.0014$\cellcolor{olive!27} & $0.0014$\cellcolor{olive!29} & $0.0030$\cellcolor{olive!0} & $\textbf{0.0013}$\cellcolor{olive!30} & $0.0077$\cellcolor{olive!19} & $0.0092$\cellcolor{olive!15} & $0.0051$\cellcolor{olive!24} & $0.0034$\cellcolor{olive!28} & $0.0029$\cellcolor{olive!29} & $0.0165$\cellcolor{olive!0} & $\textbf{0.0027}$\cellcolor{olive!30} & 0.250 \\
 & \texttt{letter} & $0.0077$\cellcolor{olive!9} & $0.0108$\cellcolor{olive!0} & $0.0048$\cellcolor{olive!19} & $0.0043$\cellcolor{olive!20} & $0.0025$\cellcolor{olive!26} & $0.0017$\cellcolor{olive!28} & $\textbf{0.0014}$\cellcolor{olive!30} & $0.0856$\cellcolor{olive!14} & $0.1631$\cellcolor{olive!0} & $0.0300$\cellcolor{olive!24} & $0.0253$\cellcolor{olive!25} & $0.0078$\cellcolor{olive!29} & $0.0041$\cellcolor{olive!29} & $\textbf{0.0025}$\cellcolor{olive!30} & 0.001 \\
 & \texttt{mhr} & $0.1216$\cellcolor{olive!5} & $0.1444$\cellcolor{olive!0} & $0.1113$\cellcolor{olive!8} & $0.0549$\cellcolor{olive!22} & $\textbf{0.0257}$\cellcolor{olive!30} & $0.1445$\cellcolor{olive!0} & $0.0340$\cellcolor{olive!27} & $0.2154$\cellcolor{olive!8} & $0.2776$\cellcolor{olive!2} & $0.1621$\cellcolor{olive!14} & $0.0443$\cellcolor{olive!26} & $\textbf{0.0104}$\cellcolor{olive!30} & $0.3019$\cellcolor{olive!0} & $0.0164$\cellcolor{olive!29} & 0.250 \\
 & \texttt{molecular} & $0.0188$\cellcolor{olive!18} & $0.0388$\cellcolor{olive!0} & $0.0089$\cellcolor{olive!27} & $0.0071$\cellcolor{olive!29} & $0.0072$\cellcolor{olive!29} & $\textbf{0.0064}$\cellcolor{olive!30} & $0.0068$\cellcolor{olive!29} & $0.0066$\cellcolor{olive!24} & $0.0294$\cellcolor{olive!0} & $0.0014$\cellcolor{olive!29} & $0.0013$\cellcolor{olive!29} & $0.0011$\cellcolor{olive!29} & $\textbf{0.0010}$\cellcolor{olive!30} & $0.0011$\cellcolor{olive!29} & 0.500 \\
 & \texttt{nursery} & $0.0179$\cellcolor{olive!12} & $0.0265$\cellcolor{olive!0} & $\textbf{0.0050}$\cellcolor{olive!30} & $^\dag0.0051$\cellcolor{olive!29} & $0.0061$\cellcolor{olive!28} & $0.0096$\cellcolor{olive!23} & $0.0069$\cellcolor{olive!27} & $0.1206$\cellcolor{olive!18} & $0.2875$\cellcolor{olive!0} & $\textbf{0.0139}$\cellcolor{olive!30} & $0.0256$\cellcolor{olive!28} & $0.0703$\cellcolor{olive!23} & $0.1424$\cellcolor{olive!15} & $0.0474$\cellcolor{olive!26} & 0.001 \\
 & \texttt{obesity} & $0.0070$\cellcolor{olive!16} & $0.0072$\cellcolor{olive!15} & $0.0095$\cellcolor{olive!0} & $\textbf{0.0050}$\cellcolor{olive!30} & $0.0084$\cellcolor{olive!6} & $0.0074$\cellcolor{olive!13} & $0.0085$\cellcolor{olive!6} & $0.0059$\cellcolor{olive!17} & $0.0065$\cellcolor{olive!15} & $0.0102$\cellcolor{olive!0} & $\textbf{0.0029}$\cellcolor{olive!30} & $0.0092$\cellcolor{olive!3} & $0.0059$\cellcolor{olive!17} & $0.0093$\cellcolor{olive!3} & 0.900 \\
 & \texttt{page\_block} & $\textbf{0.0141}$\cellcolor{olive!30} & $0.0271$\cellcolor{olive!0} & $0.0217$\cellcolor{olive!12} & $0.0170$\cellcolor{olive!23} & $0.0194$\cellcolor{olive!17} & $0.0163$\cellcolor{olive!24} & $0.0215$\cellcolor{olive!12} & $\textbf{0.3355}$\cellcolor{olive!30} & $1.4168$\cellcolor{olive!2} & $1.4289$\cellcolor{olive!2} & $0.5930$\cellcolor{olive!23} & $1.1913$\cellcolor{olive!8} & $0.5861$\cellcolor{olive!23} & $1.5181$\cellcolor{olive!0} & 0.999 \\
 & \texttt{phishing} & $0.0953$\cellcolor{olive!3} & $0.1062$\cellcolor{olive!0} & $0.0553$\cellcolor{olive!18} & $0.0281$\cellcolor{olive!27} & $\textbf{0.0221}$\cellcolor{olive!30} & $0.0549$\cellcolor{olive!18} & $0.0243$\cellcolor{olive!29} & $1.3672$\cellcolor{olive!1} & $1.4128$\cellcolor{olive!0} & $0.4169$\cellcolor{olive!22} & $0.1092$\cellcolor{olive!29} & $\textbf{0.0717}$\cellcolor{olive!30} & $0.5383$\cellcolor{olive!19} & $0.0872$\cellcolor{olive!29} & 0.999 \\
 & \texttt{satellite} & $0.0231$\cellcolor{olive!9} & $0.0312$\cellcolor{olive!0} & $0.0111$\cellcolor{olive!24} & $0.0076$\cellcolor{olive!28} & $0.0070$\cellcolor{olive!29} & $\textbf{0.0066}$\cellcolor{olive!30} & $0.0078$\cellcolor{olive!28} & $0.0767$\cellcolor{olive!14} & $0.1463$\cellcolor{olive!0} & $0.0194$\cellcolor{olive!27} & $0.0077$\cellcolor{olive!29} & $^\dag0.0059$\cellcolor{olive!29} & $\textbf{0.0055}$\cellcolor{olive!30} & $0.0078$\cellcolor{olive!29} & 0.250 \\
 & \texttt{shuttle} & $0.0073$\cellcolor{olive!8} & $0.0093$\cellcolor{olive!0} & $0.0021$\cellcolor{olive!28} & $0.0054$\cellcolor{olive!15} & $0.0019$\cellcolor{olive!29} & $0.0020$\cellcolor{olive!29} & $\textbf{0.0018}$\cellcolor{olive!30} & $0.0028$\cellcolor{olive!11} & $0.0043$\cellcolor{olive!0} & $0.0004$\cellcolor{olive!29} & $0.0014$\cellcolor{olive!21} & $0.0003$\cellcolor{olive!29} & $0.0004$\cellcolor{olive!29} & $\textbf{0.0002}$\cellcolor{olive!30} & 0.900 \\
 & \texttt{waveform-v1} & $0.0416$\cellcolor{olive!9} & $0.0582$\cellcolor{olive!0} & $0.0250$\cellcolor{olive!19} & $^\dag0.0070$\cellcolor{olive!29} & $0.0102$\cellcolor{olive!28} & $\textbf{0.0069}$\cellcolor{olive!30} & $0.0073$\cellcolor{olive!29} & $0.0234$\cellcolor{olive!14} & $0.0436$\cellcolor{olive!0} & $0.0094$\cellcolor{olive!23} & $\textbf{0.0007}$\cellcolor{olive!30} & $0.0015$\cellcolor{olive!29} & $^\dag0.0007$\cellcolor{olive!29} & $0.0008$\cellcolor{olive!29} & 0.001 \\
 & \texttt{wine-quality} & $0.0998$\cellcolor{olive!19} & $0.1144$\cellcolor{olive!13} & $0.0879$\cellcolor{olive!23} & $\textbf{0.0714}$\cellcolor{olive!30} & $0.1328$\cellcolor{olive!6} & $0.1511$\cellcolor{olive!0} & $^\dag0.0729$\cellcolor{olive!29} & $5.4520$\cellcolor{olive!20} & $7.6107$\cellcolor{olive!14} & $2.9600$\cellcolor{olive!26} & $\textbf{1.6965}$\cellcolor{olive!30} & $9.9969$\cellcolor{olive!8} & $13.4015$\cellcolor{olive!0} & $3.8726$\cellcolor{olive!24} & 0.250 \\
 & \texttt{yeast} & $0.0539$\cellcolor{olive!13} & $0.0557$\cellcolor{olive!11} & $0.0561$\cellcolor{olive!10} & $0.0647$\cellcolor{olive!0} & $0.0449$\cellcolor{olive!24} & $\textbf{0.0409}$\cellcolor{olive!30} & $0.0437$\cellcolor{olive!26} & $7.5748$\cellcolor{olive!14} & $6.3074$\cellcolor{olive!19} & $6.7092$\cellcolor{olive!17} & $11.1195$\cellcolor{olive!0} & $5.8082$\cellcolor{olive!20} & $\textbf{3.5281}$\cellcolor{olive!30} & $5.0089$\cellcolor{olive!24} & 0.999 \\
\midrule
\multirow{15}{*}{\begin{sideways}text\end{sideways}} & \texttt{gasp} & $0.0899$\cellcolor{olive!11} & $0.1199$\cellcolor{olive!0} & $0.0518$\cellcolor{olive!25} & $0.0415$\cellcolor{olive!29} & $0.0409$\cellcolor{olive!29} & $\textbf{0.0393}$\cellcolor{olive!30} & $^\dag0.0397$\cellcolor{olive!29} & $0.6271$\cellcolor{olive!16} & $1.2881$\cellcolor{olive!0} & $0.1704$\cellcolor{olive!28} & $^\dag0.1001$\cellcolor{olive!29} & $0.1149$\cellcolor{olive!29} & $\textbf{0.0973}$\cellcolor{olive!30} & $^\dag0.1011$\cellcolor{olive!29} & 0.250 \\
 & \texttt{hcr} & $0.1265$\cellcolor{olive!9} & $0.1524$\cellcolor{olive!0} & $0.0962$\cellcolor{olive!21} & $0.0811$\cellcolor{olive!27} & $0.0814$\cellcolor{olive!26} & $\textbf{0.0731}$\cellcolor{olive!30} & $0.0793$\cellcolor{olive!27} & $0.3777$\cellcolor{olive!9} & $0.4836$\cellcolor{olive!0} & $0.2078$\cellcolor{olive!23} & $0.1610$\cellcolor{olive!28} & $0.1737$\cellcolor{olive!26} & $\textbf{0.1382}$\cellcolor{olive!30} & $0.1635$\cellcolor{olive!27} & 0.500 \\
 & \texttt{omd} & $0.1062$\cellcolor{olive!6} & $0.1182$\cellcolor{olive!0} & $0.0901$\cellcolor{olive!14} & $0.0688$\cellcolor{olive!26} & $^\dag0.0620$\cellcolor{olive!29} & $^\dag0.0630$\cellcolor{olive!29} & $\textbf{0.0618}$\cellcolor{olive!30} & $0.1844$\cellcolor{olive!9} & $0.2356$\cellcolor{olive!0} & $0.1412$\cellcolor{olive!17} & $0.0876$\cellcolor{olive!27} & $\textbf{0.0715}$\cellcolor{olive!30} & $^\dag0.0720$\cellcolor{olive!29} & $^\dag0.0745$\cellcolor{olive!29} & 0.500 \\
 & \texttt{sanders} & $0.1114$\cellcolor{olive!7} & $0.1341$\cellcolor{olive!0} & $0.0587$\cellcolor{olive!25} & $^\dag0.0470$\cellcolor{olive!29} & $\textbf{0.0459}$\cellcolor{olive!30} & $^\dag0.0464$\cellcolor{olive!29} & $^\dag0.0463$\cellcolor{olive!29} & $0.5178$\cellcolor{olive!7} & $0.6633$\cellcolor{olive!0} & $0.1318$\cellcolor{olive!27} & $0.0930$\cellcolor{olive!29} & $\textbf{0.0839}$\cellcolor{olive!30} & $^\dag0.0855$\cellcolor{olive!29} & $0.0892$\cellcolor{olive!29} & 0.900 \\
 & \texttt{semeval13} & $0.1115$\cellcolor{olive!9} & $0.1365$\cellcolor{olive!0} & $0.0758$\cellcolor{olive!23} & $0.0739$\cellcolor{olive!23} & $\textbf{0.0580}$\cellcolor{olive!30} & $0.0597$\cellcolor{olive!29} & $^\dag0.0589$\cellcolor{olive!29} & $0.2692$\cellcolor{olive!16} & $0.5073$\cellcolor{olive!0} & $0.1111$\cellcolor{olive!26} & $0.1040$\cellcolor{olive!27} & $\textbf{0.0642}$\cellcolor{olive!30} & $0.0676$\cellcolor{olive!29} & $^\dag0.0658$\cellcolor{olive!29} & 0.001 \\
 & \texttt{semeval14} & $0.1038$\cellcolor{olive!11} & $0.1332$\cellcolor{olive!0} & $0.0668$\cellcolor{olive!25} & $0.0656$\cellcolor{olive!26} & $\textbf{0.0560}$\cellcolor{olive!30} & $0.0576$\cellcolor{olive!29} & $^\dag0.0568$\cellcolor{olive!29} & $0.2100$\cellcolor{olive!19} & $0.4723$\cellcolor{olive!0} & $0.0967$\cellcolor{olive!28} & $0.1070$\cellcolor{olive!27} & $\textbf{0.0776}$\cellcolor{olive!30} & $0.0834$\cellcolor{olive!29} & $0.0818$\cellcolor{olive!29} & 0.001 \\
 & \texttt{semeval15} & $0.1253$\cellcolor{olive!9} & $0.1425$\cellcolor{olive!0} & $0.0958$\cellcolor{olive!24} & $^\dag0.0865$\cellcolor{olive!29} & $\textbf{0.0863}$\cellcolor{olive!30} & $0.0915$\cellcolor{olive!27} & $0.0913$\cellcolor{olive!27} & $0.2880$\cellcolor{olive!14} & $0.4533$\cellcolor{olive!0} & $0.1585$\cellcolor{olive!25} & $\textbf{0.1124}$\cellcolor{olive!30} & $0.1520$\cellcolor{olive!26} & $0.1734$\cellcolor{olive!24} & $0.1715$\cellcolor{olive!24} & 0.001 \\
 & \texttt{semeval16} & $0.1389$\cellcolor{olive!19} & $0.1612$\cellcolor{olive!10} & $0.1891$\cellcolor{olive!0} & $\textbf{0.1110}$\cellcolor{olive!30} & $0.1370$\cellcolor{olive!20} & $0.1354$\cellcolor{olive!20} & $0.1390$\cellcolor{olive!19} & $0.5124$\cellcolor{olive!26} & $0.6856$\cellcolor{olive!19} & $1.1685$\cellcolor{olive!0} & $\textbf{0.4215}$\cellcolor{olive!30} & $0.6520$\cellcolor{olive!20} & $0.6326$\cellcolor{olive!21} & $0.6749$\cellcolor{olive!19} & 0.500 \\
 & \texttt{sst} & $0.1051$\cellcolor{olive!11} & $0.1352$\cellcolor{olive!0} & $0.0749$\cellcolor{olive!22} & $^\dag0.0539$\cellcolor{olive!29} & $0.0541$\cellcolor{olive!29} & $0.0575$\cellcolor{olive!28} & $\textbf{0.0538}$\cellcolor{olive!30} & $0.1687$\cellcolor{olive!13} & $0.2659$\cellcolor{olive!0} & $0.0904$\cellcolor{olive!24} & $^\dag0.0487$\cellcolor{olive!29} & $0.0490$\cellcolor{olive!29} & $0.0560$\cellcolor{olive!28} & $\textbf{0.0484}$\cellcolor{olive!30} & 0.999 \\
 & \texttt{wa} & $0.0783$\cellcolor{olive!2} & $0.0819$\cellcolor{olive!0} & $0.0517$\cellcolor{olive!22} & $0.0500$\cellcolor{olive!23} & $^\dag0.0425$\cellcolor{olive!29} & $\textbf{0.0412}$\cellcolor{olive!30} & $0.0432$\cellcolor{olive!28} & $0.0873$\cellcolor{olive!2} & $0.0938$\cellcolor{olive!0} & $0.0393$\cellcolor{olive!24} & $0.0370$\cellcolor{olive!25} & $^\dag0.0281$\cellcolor{olive!29} & $\textbf{0.0264}$\cellcolor{olive!30} & $0.0297$\cellcolor{olive!28} & 0.001 \\
 & \texttt{wb} & $0.0804$\cellcolor{olive!0} & $0.0799$\cellcolor{olive!0} & $0.0439$\cellcolor{olive!23} & $\textbf{0.0333}$\cellcolor{olive!30} & $0.0343$\cellcolor{olive!29} & $0.0348$\cellcolor{olive!29} & $0.0346$\cellcolor{olive!29} & $0.1001$\cellcolor{olive!0} & $0.0990$\cellcolor{olive!0} & $0.0316$\cellcolor{olive!25} & $\textbf{0.0183}$\cellcolor{olive!30} & $0.0190$\cellcolor{olive!29} & $0.0197$\cellcolor{olive!29} & $0.0194$\cellcolor{olive!29} & 0.250 \\
 & \texttt{LeQua2022-T1A} & $0.0933$\cellcolor{olive!7} & $0.1181$\cellcolor{olive!0} & $0.0314$\cellcolor{olive!27} & $^\dag0.0240$\cellcolor{olive!29} & $0.0242$\cellcolor{olive!29} & $^\dag0.0252$\cellcolor{olive!29} & $\textbf{0.0234}$\cellcolor{olive!30} & $0.1260$\cellcolor{olive!11} & $0.2024$\cellcolor{olive!0} & $0.0170$\cellcolor{olive!28} & $^\dag0.0099$\cellcolor{olive!29} & $^\dag0.0097$\cellcolor{olive!29} & $^\dag0.0114$\cellcolor{olive!29} & $\textbf{0.0093}$\cellcolor{olive!30} & 0.001 \\
 & \texttt{LeQua2022-T1B} & $0.0140$\cellcolor{olive!17} & $0.0165$\cellcolor{olive!3} & $0.0130$\cellcolor{olive!23} & $^\dag0.0119$\cellcolor{olive!29} & $0.0118$\cellcolor{olive!29} & $0.0171$\cellcolor{olive!0} & $\textbf{0.0118}$\cellcolor{olive!30} & $\textbf{91.1316}$\cellcolor{olive!30} & $118.7259$\cellcolor{olive!13} & $121.7791$\cellcolor{olive!11} & $92.8496$\cellcolor{olive!28} & $109.3588$\cellcolor{olive!19} & $141.3354$\cellcolor{olive!0} & $109.3062$\cellcolor{olive!19} & 0.999 \\
 & \texttt{LeQua2024-T1} & $0.0862$\cellcolor{olive!9} & $0.1167$\cellcolor{olive!0} & $0.0270$\cellcolor{olive!27} & $^\dag0.0205$\cellcolor{olive!29} & $\textbf{0.0204}$\cellcolor{olive!30} & $0.0232$\cellcolor{olive!29} & $^\dag0.0206$\cellcolor{olive!29} & $0.1114$\cellcolor{olive!13} & $0.1978$\cellcolor{olive!0} & $0.0130$\cellcolor{olive!29} & $^\dag0.0078$\cellcolor{olive!29} & $\textbf{0.0078}$\cellcolor{olive!30} & $0.0103$\cellcolor{olive!29} & $^\dag0.0079$\cellcolor{olive!29} & 0.250 \\
 & \texttt{LeQua2024-T2} & $0.0171$\cellcolor{olive!21} & $0.0190$\cellcolor{olive!16} & $0.0170$\cellcolor{olive!21} & $\textbf{0.0137}$\cellcolor{olive!30} & $0.0160$\cellcolor{olive!24} & $0.0256$\cellcolor{olive!0} & $0.0156$\cellcolor{olive!25} & $137.5911$\cellcolor{olive!24} & $163.6366$\cellcolor{olive!21} & $231.3429$\cellcolor{olive!12} & $\textbf{100.8582}$\cellcolor{olive!30} & $186.1994$\cellcolor{olive!18} & $318.9042$\cellcolor{olive!0} & $196.3228$\cellcolor{olive!16} & 0.900 \\
\midrule
\multirow{5}{*}{\begin{sideways}image\end{sideways}} & \texttt{CIFAR100} & $0.0025$\cellcolor{olive!8} & $0.0028$\cellcolor{olive!0} & $0.0022$\cellcolor{olive!15} & $\textbf{0.0016}$\cellcolor{olive!30} & $0.0018$\cellcolor{olive!25} & $0.0021$\cellcolor{olive!18} & $0.0017$\cellcolor{olive!28} & $0.1375$\cellcolor{olive!8} & $0.1695$\cellcolor{olive!0} & $0.1009$\cellcolor{olive!18} & $\textbf{0.0576}$\cellcolor{olive!30} & $0.0712$\cellcolor{olive!26} & $0.1118$\cellcolor{olive!15} & $0.0643$\cellcolor{olive!28} & 0.250 \\
 & \texttt{CIFAR10} & $0.0065$\cellcolor{olive!9} & $0.0080$\cellcolor{olive!0} & $0.0044$\cellcolor{olive!23} & $0.0036$\cellcolor{olive!29} & $0.0038$\cellcolor{olive!27} & $\textbf{0.0035}$\cellcolor{olive!30} & $0.0036$\cellcolor{olive!29} & $0.0092$\cellcolor{olive!10} & $0.0127$\cellcolor{olive!0} & $0.0037$\cellcolor{olive!26} & $0.0025$\cellcolor{olive!29} & $0.0027$\cellcolor{olive!28} & $\textbf{0.0023}$\cellcolor{olive!30} & $0.0024$\cellcolor{olive!29} & 0.001 \\
 & \texttt{MNIST} & $0.0016$\cellcolor{olive!13} & $0.0018$\cellcolor{olive!7} & $0.0017$\cellcolor{olive!10} & $0.0012$\cellcolor{olive!28} & $0.0012$\cellcolor{olive!27} & $0.0020$\cellcolor{olive!0} & $\textbf{0.0011}$\cellcolor{olive!30} & $0.0007$\cellcolor{olive!3} & $0.0006$\cellcolor{olive!7} & $0.0005$\cellcolor{olive!13} & $0.0003$\cellcolor{olive!29} & $0.0003$\cellcolor{olive!26} & $0.0008$\cellcolor{olive!0} & $\textbf{0.0003}$\cellcolor{olive!30} & 0.001 \\
 & \texttt{fashionMNIST} & $0.0083$\cellcolor{olive!9} & $0.0103$\cellcolor{olive!0} & $0.0047$\cellcolor{olive!25} & $0.0041$\cellcolor{olive!28} & $\textbf{0.0038}$\cellcolor{olive!30} & $^\dag0.0039$\cellcolor{olive!29} & $0.0038$\cellcolor{olive!29} & $0.0201$\cellcolor{olive!9} & $0.0274$\cellcolor{olive!0} & $0.0052$\cellcolor{olive!27} & $0.0041$\cellcolor{olive!29} & $\textbf{0.0036}$\cellcolor{olive!30} & $^\dag0.0036$\cellcolor{olive!29} & $0.0036$\cellcolor{olive!29} & 0.999 \\
 & \texttt{SVHN} & $0.0098$\cellcolor{olive!3} & $0.0106$\cellcolor{olive!0} & $0.0061$\cellcolor{olive!22} & $0.0053$\cellcolor{olive!26} & $0.0049$\cellcolor{olive!28} & $0.0046$\cellcolor{olive!29} & $\textbf{0.0045}$\cellcolor{olive!30} & $0.0235$\cellcolor{olive!6} & $0.0284$\cellcolor{olive!0} & $0.0091$\cellcolor{olive!24} & $0.0072$\cellcolor{olive!27} & $0.0060$\cellcolor{olive!28} & $\textbf{0.0051}$\cellcolor{olive!30} & $^\dag0.0052$\cellcolor{olive!29} & 0.001 \\
\midrule
 & Rank & $5.4$\cellcolor{olive!7} & $6.5$\cellcolor{olive!0} & $5.0$\cellcolor{olive!10} & $3.1$\cellcolor{olive!23} & $2.6$\cellcolor{olive!26} & $3.2$\cellcolor{olive!22} & $\textbf{2.2}$\cellcolor{olive!30} \ & $5.3$\cellcolor{olive!8} & $6.3$\cellcolor{olive!0} & $4.8$\cellcolor{olive!12} & $3.0$\cellcolor{olive!26} & $2.7$\cellcolor{olive!29} & $3.3$\cellcolor{olive!24} & $\textbf{2.6}$\cellcolor{olive!30} \ & ---  \\
\bottomrule
\end{tabular}

    }%
\end{table}

\end{document}